\documentclass[lettersize,journal]{IEEEtran}

\usepackage{amsmath,amsfonts,bm}

\def\eqref#1{equation~\ref{#1}}

\def\1{\bm{1}}

\def\vc{{\bm{c}}}

\def\vt{{\bm{t}}}

\def\vv{{\bm{v}}}

\def\vx{{\bm{x}}}

\def\vz{{\bm{z}}}

\def\mA{{\bm{A}}}
\def\mB{{\bm{B}}}
\def\mC{{\bm{C}}}

\def\mG{{\bm{G}}}

\def\mI{{\bm{I}}}

\def\mM{{\bm{M}}}

\def\mZ{{\bm{Z}}}

\DeclareMathAlphabet{\mathsfit}{\encodingdefault}{\sfdefault}{m}{sl}
\SetMathAlphabet{\mathsfit}{bold}{\encodingdefault}{\sfdefault}{bx}{n}

\DeclareMathOperator{\Tr}{Tr}

\usepackage{amsmath,amsfonts}
\usepackage[caption=false,font=normalsize,labelfont=sf,textfont=sf]{subfig}

\usepackage{mathtools}
\usepackage{amsthm}
\usepackage{graphicx}
\usepackage{booktabs}
\usepackage{tabularx}
\usepackage{multirow} 
\usepackage{cite}

\usepackage{cases}
\usepackage{bbm}

\usepackage{enumitem,kantlipsum}
\usepackage{pifont}
\usepackage{makecell}

\usepackage{algorithm}
\usepackage{algpseudocode}
\usepackage[breakable]{tcolorbox}

\newcommand{\myparagraph}[1]{\noindent\textbf{#1.}}
\newcolumntype{P}[1]{>{\centering\arraybackslash}p{#1}}

\def\ie{i.e.}
\def\eg{e.g.}
\DeclareMathOperator{\Diag}{Diag}

\theoremstyle{plain}

\newtheorem{proposition}{Proposition}

\theoremstyle{definition}

\theoremstyle{remark}

\def\f{\boldsymbol{f}}
\def\g{\boldsymbol{g}}

\def\Z{\boldsymbol{Z}}
\def\vdelta{\boldsymbol{\delta}}

\def\0{\boldsymbol{0}}

\def\mck{\color{black}}

\usepackage{colortbl}
\usepackage[dvipsnames]{xcolor}
\definecolor{myGray}{gray}{0.9}

\usepackage[percent]{overpic}

\usepackage{hyperref}
\usepackage{url}

\usepackage{array}
\usepackage{textcomp}
\usepackage{stfloats}
\usepackage{verbatim}
\def\BibTeX{{\rm B\kern-.05em{\sc i\kern-.025em b}\kern-.08em
    T\kern-.1667em\lower.7ex\hbox{E}\kern-.125emX}}
\usepackage{balance}

\begin{document}
\title{Learning Deep Modality-Shared Self-Expressiveness for Image Clustering with Textual Information}
\author{Xianghan Meng\thanks{Xianghan Meng, Wei He, Zhiyuan Huang, and Chun-Guang Li are with the School of Artificial Intelligence, Beijing University of Posts and Telecommunications, Beijing 100876, P.R. China. (e-mail: \{mengxianghan; wei.he; huangzhiyuan; lichunguang\}@bupt.edu.cn).},~Wei He,~Zhiyuan Huang,~and~Chun-Guang Li$^\ast$\thanks{$^\ast$Corresponding author.}
}

\markboth{Journal of \LaTeX\ Class Files,~Vol.~18, No.~9, September~2026}%
{Meng \MakeLowercase{\textit{et al.}}: Learning Deep Modality-Shared Self-Expressiveness for Image Clustering with Textual Information}

\maketitle

\begin{abstract}
Leveraging textual information for image clustering has emerged as a promising direction, largely owing to the powerful representations learned by Vision-Language Models (VLMs).
Existing approaches typically retrieve a textual counterpart for each image and then refine multimodal representations by directly enforcing cross-modal agreement, \eg, maximizing image-text similarity inherited from pretrained VLMs.
However, such a strategy %
aligns heterogeneous representations across modalities without explicitly modeling the intrinsic structure within each modality and thus might yield unreliable alignment %
or %
distort modality-specific structures that are crucial for clustering.
In this paper, we propose a simple but principled approach, termed deep \underline{mo}dality-sha\underline{r}ed \underline{s}elf-\underline{e}xpressive model (DeepMORSE), which discovers cross-modal structures %
via a modality-shared self-expressive model and simultaneously learns structured representations that conform to a union of modality-specific subspaces. 
Moreover, we theoretically justify that the modality-shared self-expressive coefficients suppress inter-class noise towards a subspace-preserving solution, and %
show that mini-batch optimization procedure introduces an implicit regularization onto the self-expressive model. %
We evaluate our DeepMORSE on six widely used image clustering benchmarks and observe performance improvements exceeding 3\% on the UCF-101, DTD-47, and ImageNet-Dogs datasets.
In addition, we demonstrate the strong transferability of the learned representations by achieving state-of-the-art performance on downstream tasks such as image retrieval and zero-shot classification--—without requiring any task-specific losses or post-processing.
\texttt{The code is available at: \url{https://github.com/mengxianghan123/DeepMORSE}.}
\end{abstract}

\begin{IEEEkeywords}
Image clustering, multimodal representation learning, subspace clustering
\end{IEEEkeywords}

\section{Introduction}
\label{sec:introduction}

\IEEEPARstart{N}{owadays} vast quantities of unlabeled data are generated at an unprecedented scale. The ability to discover %
meaningful latent structures from massive high-dimensional data without human annotation has become a central challenge in various fields of data science. %
A fundamental approach to tackling this challenge is clustering, which aims to partition data according to their latent patterns or geometric structures without relying on ground-truth labels~\cite{Forgy:Biometrics1965,MacQueen-1967}.
The geometric structures, referring to the topology, distribution, and relative positional relationships of data in the ambient space, are often modeled as, \eg, centroids~\cite{Forgy:Biometrics1965,MacQueen-1967,Arthur_Vassilvitskii_2007}, linear (or affine) subspaces~\cite{Vidal:SPM11-SC,Vidal:Springer16,Li:TIP17}, or manifolds~\cite{Patel:ICIP14,Elhamifar:NIPS11,Li:Arxiv22,Ding:ICCV23}.
Although these approaches perform well on simple datasets (\eg, MNIST~\cite{Lecun:MNIST}), they often struggle with high-dimensional real-world images that exhibit complex structures~\cite{Krizhevsky:2009-CIFAR,Deng:CVPR09}.

\begin{figure}[t]
    \centering
    \includegraphics[trim=220pt 120pt 270pt 100pt, clip,width=\linewidth]{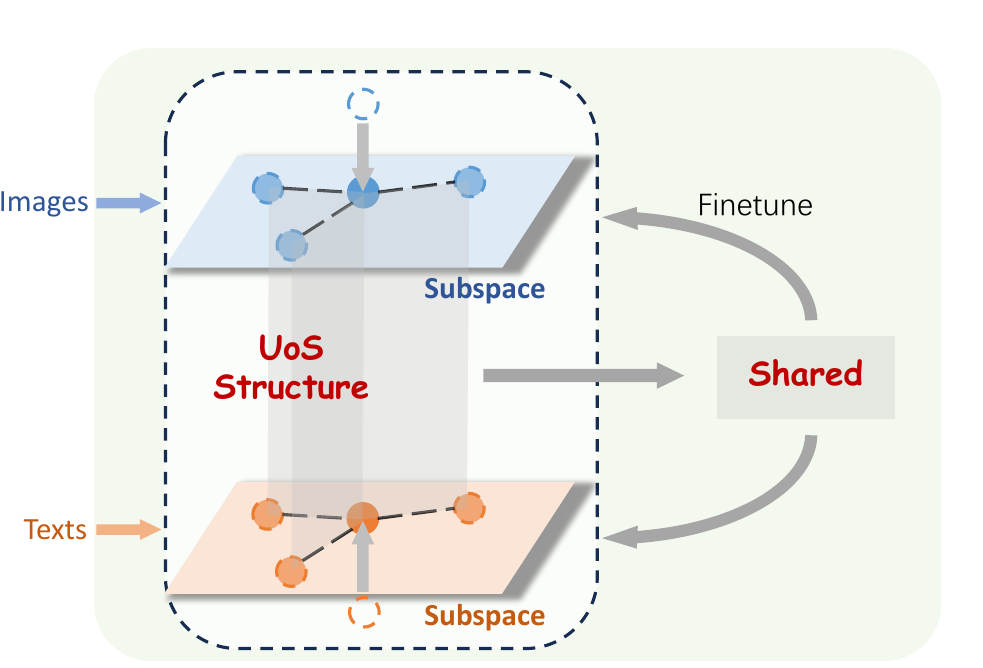}
    \caption{\textbf{Illustration of the basic idea of the paper.} We explicitly learn structured representations that conform to a Union of Subspaces (UoS) and capture %
    modality-shared structure for clustering.}
    \label{fig:fig1}
\end{figure}

Over the past decade, deep clustering has emerged as a promising line of research which leverages the representation learning ability of deep neural networks~\cite{Lu:Vicinagearth24,Ren:TNNLS25} and has quickly progressed following the rapid advances in representation learning---from early autoencoder-based approaches~\cite{Peng:IJCAI16, Xie:ICML16-DEC, Guo:IJCAI17-IDEC, Yang:ICML17-DCN} to contrastive learning-based approaches~\cite{Van:ECCV2020-SCAN,Li:AAAI21-CC,Zhong:ICCV21-GCC,Dang:CVPR21-NNM,Niu:TIP22-spice}, and more recently, to pretrained Vision-Language Models (VLMs)-based multimodal approaches~\cite{Adaloglou:BMVC23,Cai:AAAI23-SIC,Chu:ICLR24,Li:ICML24-TAC}.
Despite the rapid advances in clustering performance, fundamental questions naturally arise in the multimodal approaches: \emph{a) what latent patterns in data representation would be shared across different modalities? b) how to properly learn the latent patterns across multimodal representations?}

Existing multimodal clustering approaches typically address these questions by directly enforcing cross-modal alignment, %
\eg, by maximizing the similarity of pseudo-labels across modalities~\cite{Cai:AAAI23-SIC,Li:ICML24-TAC}.
However, without explicitly modeling the intrinsic structure of each modality, the alignment of heterogeneous distributions between different modalities may distort modality-specific structures, %
yielding %
representations with unclear structure.
This issue is particularly demonstrated by~\cite{Yi:CVPR24, Mistretta:ICLR25}, which show that the off-the-shelf VLMs provide unreliable intra-modal geometry, leading to suboptimal performance for intra-modal tasks (\eg, clustering and retrieval). %

Motivated by the intrinsic heterogeneity~\cite{Liang:NIPS23,Liang:ACM24} and the modality gap~\cite{Liang:NIPS22,Schrodi:ICLR25} between multimodal representations, it is natural 
to %
permit each modality to have %
its own modality-specific structure.
On the other hand, recent empirical evidence suggests that the representations of different modalities often exhibit a shared geometric structure as model scale increases~\cite{Huh:ICML24,Megan:ICML25,Groger:arxiv26}.
Putting %
two aspects together, it is appealing to build a learning %
model that enables: %
a) modality-specific structure %
flexible to account for the heterogeneity and %
b) consistent partition patterns %
shared across different modalities.

In this paper, we introduce a union of modality-specific low-dimensional subspaces to accommodate %
the heterogeneity across different modalities\footnote{The union-of-subspaces assumption of deep representations is well-grounded in recent findings \cite{Mamou:ICML20,Liu:ML22,Bradley:ICLR23,Trager:ICCV23,Pelleriti:ICML25}.}, while assuming that %
consistent partition patterns are shared among them, as illustrated in Figure \ref{fig:fig1}.
Specifically, we propose a simple but principled approach, termed \textbf{deep} \underline{\textbf{mo}}dality-sha\underline{\textbf{r}}ed \underline{\textbf{s}}elf-\underline{\textbf{e}}xpressive model (DeepMORSE) to %
learn structured multimodal representations that %
align with a union of modality-specific subspaces and simultaneously capture the %
partition patterns shared across different modalities for clustering.
More specifically, our DeepMORSE 
integrates a shared deep self-expressive model %
to discover modality-invariant self-expressive coefficients and learn modality-specific structured representations that %
tend to form the union of modality-specific subspaces. 
Moreover, we provide theoretical results to justify the merit of learning self-expressive coefficients across different modalities, the guarantee to yield non-collapsed modality-specific representations, and the induced implicit regularization in implementation. 
To evaluate the performance of our DeepMORSE, we conduct extensive experiments on six widely used image clustering benchmarks to 
demonstrate significant performance improvements, validate the theoretical results, and provide insightful and comprehensive evaluations. %
In addition, we also show the strong transferability of the learned representations on downstream tasks such as image retrieval and zero-shot classification without requiring any task-specific losses or post-processing. %

The main contributions of the paper are highlighted below. %
\begin{enumerate}[leftmargin=*]
    \item %
    We propose DeepMORSE, which jointly learns structured representations %
    and %
    discovers partition patterns %
    that are shared across different modalities for clustering.

    \item We provide theoretical analysis to justify the underlying mechanism of our DeepMORSE---suppressing inter-class noise and exhibiting an implicit regularization. 
    
    \item We demonstrate the effectiveness of our DeepMORSE with extensive experiments on %
    six image clustering benchmarks.  %
    \item We 
    validate the generalization ability %
    of the learned structured representations on downstream tasks including image retrieval and zero-shot classification. %
\end{enumerate}

\myparagraph{Paper Outline} The remainder of this %
paper is organized as follows. We review related works in Section \ref{Sec:Related Work}, present the formulation and theoretical justifications %
in Section \ref{Sec:method}, describe the experimental setups and results in Section \ref{Sec:Experiments}, and make concluding remarks %
in Section \ref{Sec:Conclusion}.

\section{Related Work}
\label{Sec:Related Work}
This section reviews relevant %
work on deep clustering, self-expressive models and structured representations learning.

\myparagraph{Deep Clustering}
The progress of deep clustering closely follows the rapid advances in unsupervised representation learning. 
The %
deep clustering approaches in the early stage are mainly based on %
autoencoders~\cite{Guo:IJCAI17-IDEC,Yang:ICML17-DCN}. 
Then, with the flourishing success of %
contrastive learning~\cite{Chen:ICML20-SimCLR,He:CVPR20-moco}, a number of deep clustering approaches have been proposed, which are designed by 
integrating diverse self-supervision signals, \eg, pseudo-labeling~\cite{Van:ECCV2020-SCAN}, cluster and graph-level contrastive learning~\cite{Li:AAAI21-CC,Zhong:ICCV21-GCC,Qian:CVPR22-CoKe,Qian:ICCV23-SeCu}, neighbor matching~\cite{Dang:CVPR21-NNM}, prototype scattering~\cite{Huang:TPAMI23-Propos}, and contextually affinitive neighborhood mining~\cite{Yu:NIPS23-CONNR}.
Among these approaches, clusters are merely %
modeled as a set of centroids~\cite{Qian:CVPR22-CoKe,Qian:ICCV23-SeCu}---which can be viewed as a zero-dimensional subspace. %

Recently, driven by contrastive learning, miscellaneous deep multi-view clustering approaches \cite{Tian:ECCV20,Lin:CVPR21,Pan:NIPS21,Trosten:CVPR21} have been presented, which emphasize on maximizing the alignment between different views. 
More recently, %
there are also some works to address the noisy correspondence issue in contrastive multi-view clustering, by \eg, a noise-robust contrastive loss~\cite{Yang:TPAMI22}, cross-sample and cross-view feature aggregation~\cite{Yan:CVPR23}, high-order random walks for global data pair identification~\cite{Lu:AAAI24}, and a spectral %
correspondence refinery framework~\cite{Guo:NeurIPS24-CANDY}. %
Despite these advances, existing deep multi-view clustering approaches primarily lack explicit modeling of the underlying geometric structure of data within each view. %

Since that multimodal foundation models are used to provide powerful representations for various downstream tasks,  
multimodal clustering methods have received increasing attention~\cite{Cai:AAAI23-SIC,Li:ICML24-TAC}. 
These methods typically train a multimodal classifier and align the pseudo-labels of each sample---as well as its neighbors---across different modalities.
To %
mitigate %
the mismatch between different modalities, multi-level cross-modal alignments are introduced with theoretical guarantees on clustering risk~\cite{Qiu:AAAI24}.
Meanwhile, an orthogonal line of work explores training-free language-assisted image clustering by introducing hierarchical caption-noun semantic alignment via optimal transport~\cite{Zhu:AAAI26} and gradient-guided positive noun filtering with provable error bounds~\cite{Peng:ICCV25}.
Nevertheless, without explicitly modeling the intrinsic structure of each modality, it is unclear whether the cross-modal alignment would %
distort the intra-modal structure.

\myparagraph{Self-Expressive Models}
The self-expressive models~\cite{Elhamifar:CVPR09} are designed to recover %
the underlying subspace structure by expressing each data point as a linear combination of others.
Under certain regularizations, \eg, sparsity~\cite{Elhamifar:CVPR09}, %
low-rankness~\cite{Liu:ICML10}, the non-zero expressive coefficients are guaranteed to indicate which data points %
belong to the same subspace~\cite{Elhamifar:TPAMI13, Soltanolkotabi:AS12, Li:JSTSP18}.
After solving the self-expressive coefficients for the given data, spectral clustering~\cite{Shi:TPAMI00} is applied to obtain the subspace clustering result. %
Recent progress in the self-expressive model focuses on improving the scalability~\cite{You:CVPR16-EnSC,Chen:CVPR20,Zhang:CVPR21-SENet}, incorporating powerful representation learning~\cite{Cai:TGRS20,Ma:MM20,Wei:CVPR23, Meng:ICLR25}, and designing white-box deep architectures~\cite{Zhao:CPAL24}. %
It is noteworthy that the %
connection between self-expressive models and the self-attention mechanism has been discussed in \cite{Zhang:CVPR21-SENet, Vidal:CAMSAP25}. 
Nevertheless, these works mentioned above still focus on data with a single modality.

\myparagraph{Learning Structured Representations}
In supervised learning, structured representations that conform to a union of orthogonal subspaces are pioneered in Orthogonal Low-rank Embedding (OL\'{E})~\cite{Lezama:CVPR18} and Maximal Coding Rate Reduction ($\text{MCR}^2$)~\cite{Yu:NIPS20}. 
For both approaches, the learned representations in each class are guaranteed to reside in class-specific subspaces %
and these class-specific subspaces are arranged to be mutually orthogonal \cite{Wang:ICML24}.
In unsupervised learning, an approach called Manifold Linearization and Clustering (MLC) attempts to learn structured representations with a modified $\text{MCR}^2$ principle, where the membership information is replaced by the learned affinity~\cite{Ding:ICCV23}. 
More recently, a principled approach for deep subspace clustering (PRO-DSC)~\cite{Meng:ICLR25} is proposed, which learns structured representations by the self-expressive model with a maximal total coding rate regularization and provides theoretical justifications showing that the learned representations are guaranteed to avoid the catastrophic feature collapse~\cite{Haeffele:ICLR21} and %
tend to form desired structure of a union of subspaces. 
Unfortunately, these works are not designed for dealing with multimodal data. %
Our DeepMORSE can be viewed as a multimodal extension of~\cite{Meng:ICLR25}. %

\section{Our Proposed Approach: DeepMORSE}
\label{Sec:method}

This section presents a deep modality-shared self-expressive model for learning modality-invariant structure and structured representations, then provides theoretical justifications, and describes the textual counterpart generation strategy, scalable implementation with reparameterization.

\begin{figure*}[tb]
    \centering
   \includegraphics[trim=150pt 0pt 100pt 30pt, clip,width=0.9\textwidth]{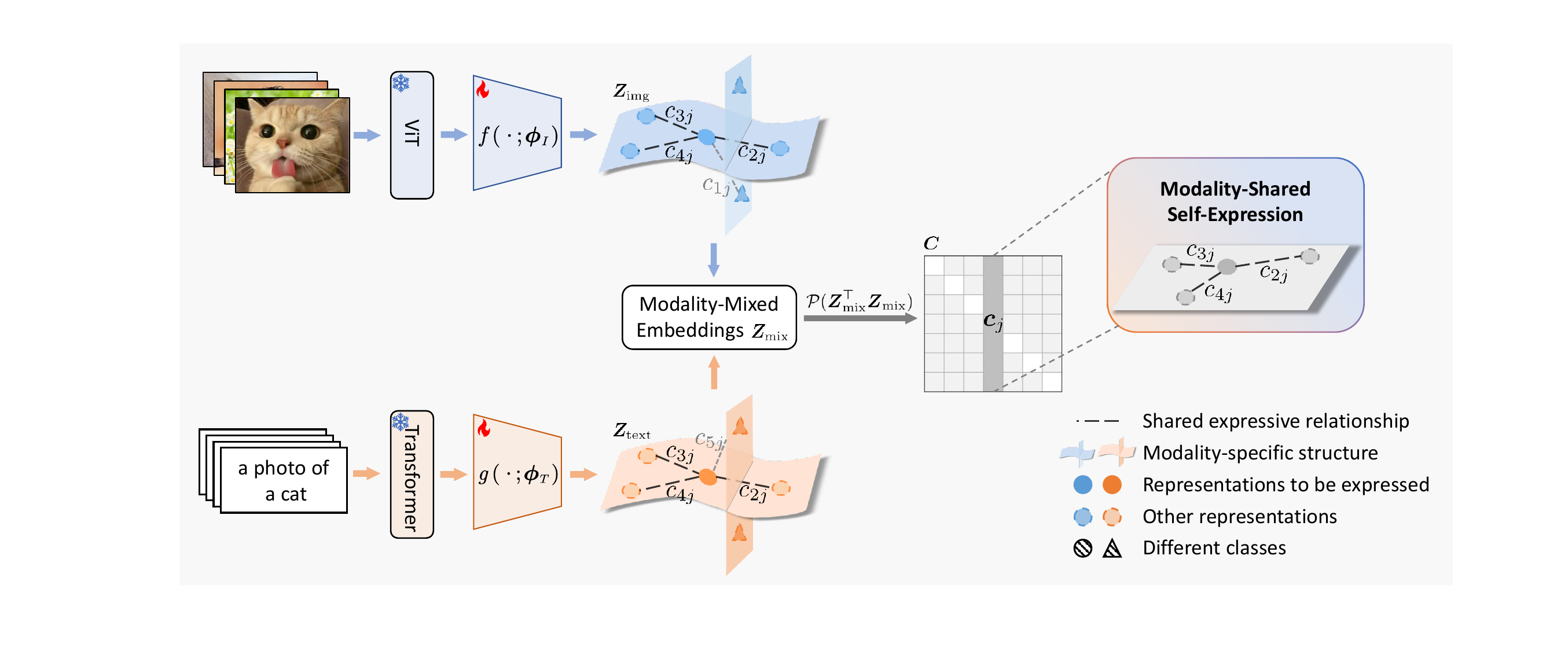}
   \vskip -0.2in
\caption{\mck \textbf{The overview of DeepMORSE.} DeepMORSE learns modality-invariant structure through modality-shared self-expression, and jointly fine-tunes the representations to conform to the union of modality-specific subspace structures.}  
\label{fig:fig2}
\end{figure*}

\subsection{Modality-Shared Deep Self-Expressive Model}

Existing approaches %
to model data relations for %
clustering often rely on neighborhood information, either in a single-modality setting~\cite{Van:ECCV2020-SCAN,Dang:CVPR21-NNM,Niu:TIP22-spice,Huang:TPAMI23-Propos} or in a multimodal setting~\cite{Cai:AAAI23-SIC,Li:ICML24-TAC}.
Although neighborhoods can %
reflect the local distribution of data, they suffer from two fundamental limitations: a) the distance metric---on which neighborhood %
is based---becomes less reliable in high-dimensional spaces~\cite{Beyer:ICDT99,Aggarwal:ICDT01}; and b) %
it is insufficient to capture the global structure unless the data are densely sampled~\cite{Roweis:JMLR03}. 
As an alternative, self-expressive model \cite{Elhamifar:CVPR09} uncovers the global Union-of-Subspaces (UoS) structure of the data by seeking linear combination relationships among data, %
which is suitable for %
high-dimensional and sparsely sampled data~\cite{Vidal:Springer16,Wright-Ma-HD2A}.

\myparagraph{Learning UoS Structures by Self-Expressive Model}
Under the assumption that data points lie on (or nearby) a union of low-dimensional subspaces, self-expressive model \cite{Elhamifar:CVPR09} discovers the UoS structure of the data by solving the %
problem: 
\begin{align}
\label{eq:se_model}
\begin{aligned}
   \mathop{\min}_{\vc_j}~\|\vx_j-\sum_{i\neq j}c_{ij}\vx_i\|_2^2+r(\vc_j),
\end{aligned}
\end{align}
for all $j\in\{1,\cdots,N\}$, where %
$r(\cdot):\mathbb{R}^N\mapsto\mathbb{R}_{+}$ is a regularization on the expressive coefficient $\vc_j=[c_{1j},\cdots,c_{Nj}]^\top$.
The %
widely used regularizer, \eg, $\ell_1$-norm, can guarantee the optimal solution of (\ref{eq:se_model}) under mild condition to enjoy a so-called \emph{subspace-preserving property}, \ie, the non-zero coefficients indicate the data points that belong to the same subspace under certain conditions~\cite{Elhamifar:TPAMI13, Soltanolkotabi:AS12, Li:JSTSP18}.

\myparagraph{Discovering Shared UoS Structures %
across Modalities}
Recent empirical evidence suggests that the learned representations from different modalities tend to exhibit a shared %
geometric structure~\cite{Huh:ICML24,Megan:ICML25,Groger:arxiv26}. 
This motivates us to leverage the information from multiple modalities %
to jointly discover the shared UoS structures. %
To this end, it is natural to introduce a self-expressive model with \emph{modality-invariant coefficients}, 
\ie,
\begin{align}
\label{eq:se_model_share_c}
\begin{aligned}
    \mathop{\min}_{\vc_j}~\|\vx_j-\sum_{i\neq j}c_{ij}\vx_i\|_2^2+\|\vt_j-\sum_{i\neq j}c_{ij}\vt_i\|_2^2+r(\vc_j),
\end{aligned}
\end{align}
for all $j\in\{1,\cdots,N\}$, where $\{\vt_i\}_{i=1}^N$ denote the associated textual data.
The benefit of this modality-invariant design can be understood through the example illustrated in Figure~\ref{fig:fig2}. For the $j$-th data point, the self-expressive coefficients $\{c_{1j},c_{2j},c_{3j},c_{4j}\}$ capture its relations with other images, while $\{c_{2j},c_{3j},c_{4j},c_{5j}\}$ capture its relations among textual counterparts, in which only $\{c_{2j},c_{3j},c_{4j}\}$ reflect correct subspace structure that is shared across two modalities, whereas $c_{1j}$ and $c_{5j}$ arise from modality-specific noise. By enforcing self-expressive coefficients shared across modalities, the formulation (\ref{eq:se_model_share_c}) promotes the coefficients that are consistently subspace-preserving across modalities while suppressing modality-specific perturbations.  %

\myparagraph{Learning Modality-Specific Structured Representations}
For the real-world data, however, the distribution of the raw data in each modality might not conform well to a UoS structure. %
To %
handle more complex data and more complicated distribution structures, %
we employ a set of %
transformations $\f,\g:\mathbb{R}^D\mapsto\mathbb{R}^d$ to obtain modality-specific representations $\vz_\text{img}$ and $\vz_\text{text}$ %
which are expected to conform to UoS structures, where $\vz_\text{img}=\f(\vx)$ and $\vz_\text{text}=\g(\vt)$. 

For clarity and compactness in notation, we denote $\mZ_\text{img}\coloneqq[\vz_{\text{img},1},\cdots,\vz_{\text{img},N}]$ and $\mZ_\text{text}\coloneqq[\vz_{\text{text},1},\cdots,\vz_{\text{text},N}]$ %
as the learned representations of all images and all associated texts, respectively, and denote $\mC\coloneqq[\vc_1,\cdots,\vc_N]$ as the modality-shared self-expressive coefficient matrix.
Then, for fixed $\mZ_\text{img}$ and $\mZ_\text{text}$, problem~(\ref{eq:se_model_share_c}) is %
equivalently reformulated as follows:
\begin{align}
\label{eq:fix_Z_se_model_share_c_matrix}
\begin{aligned}
    \mathop{\min}_{\mC}~& \|\mZ_\text{img}-\mZ_\text{img}\mC\|_F^2+\|\mZ_\text{text}-\mZ_\text{text}\mC\|_F^2+r(\mC),\\
    \mathrm{s.t.}~&\Diag(\mC)=\boldsymbol{0}.
\end{aligned}
\end{align}
To handle the raw data from real-world, %
we thus learn simultaneously the embeddings $\mZ_\text{img}$ and $\mZ_\text{text}$ and the coefficient matrix $\mC$ by solving the following problem: 
\begin{align}
\label{eq:deep_se_model_share_c_matrix}
    \!\!\!\!\!\!\!\!\!\!\!\!\!\!\!&\mathop{\min}_{\mZ_\text{img},\mZ_\text{text},\mC}\|\mZ_\text{img}-\mZ_\text{img}\mC\|_F^2+\|\mZ_\text{text}-\mZ_\text{text}\mC\|_F^2+r(\mC),\nonumber \\
    ~~&\mathrm{s.t.}\Diag(\mC)=\boldsymbol{0}, %
    \|\mZ_\text{img}\|_F^2 =N, \|\mZ_\text{text}\|_F^2 =N, 
\end{align}
where the constraints %
are used to avoid the trivial solution.\footnote{In our implementation, we normalize %
$\{\vz_{\text{img},i},\vz_{\text{text},i}\}_{i=1}^N$ to have unit $\ell_2$ norm to avoid the trivial solution $\mZ_\text{img}=\mZ_\text{text}=\boldsymbol{0}, \mC=\boldsymbol{0}$.}  %

Unfortunately, such a joint optimization problem in (\ref{eq:deep_se_model_share_c_matrix}) suffers from a catastrophic representation collapse issue, analogous to the phenomenon %
observed in the %
deep self-expressive model~\cite{Haeffele:ICLR21}. 
In this paper, inspired by \cite{Meng:ICLR25}, we impose a maximal total coding rate regularization on each modality of the learned %
representations %
which are defined as follows:
\begin{align}
\label{eq:PRO_DSC_share_c_matrix}
\begin{aligned}
    \rho(\mZ_\text{img})\coloneqq&\log\det(\mI+\frac{d}{N\epsilon^2}\mZ_\text{img}\mZ_\text{img}^\top),\\
    \rho(\mZ_\text{text})\coloneqq&\log\det(\mI+\frac{d}{N\epsilon^2}\mZ_\text{text}\mZ_\text{text}^\top),
\end{aligned}
\end{align}
where $\epsilon > 0$ is a hyperparameter and %
$\rho(\cdot)$ measures the volume of the subspace spanned by the learned representations.\footnote{The volume of space is estimated by %
counting the number of $\epsilon$-balls that can be packed~\cite{Wright-Ma-HD2A}.} %

Putting these together, we propose a simple but principled approach for deep multi-modal subspace clustering, called \textbf{deep} \underline{\textbf{mo}}dality-sha\underline{\textbf{r}}ed \underline{\textbf{s}}elf-\underline{\textbf{e}}xpressive model (DeepMORSE):
\begin{align}
\label{eq:deep_se_model_share_c_matrix_reg}
\begin{aligned}
    \mathop{\min}_{\mZ_\text{img},\mZ_\text{text},\mC}~& \gamma\left(\|\mZ_\text{img}-\mZ_\text{img}\mC\|_F^2+\|\mZ_\text{text}-\mZ_\text{text}\mC\|_F^2\right)\\
    &\quad-\rho(\mZ_\text{img})-\rho(\mZ_\text{text})+r(\mC),\\
    \mathrm{s.t.}~&\Diag(\mC)=\boldsymbol{0}, %
    \|\mZ_\text{img}\|_F^2 =N, \|\mZ_\text{text}\|_F^2 =N, 
\end{aligned}
\end{align}
where $\gamma\in\mathbb{R}_+$ is a balancing hyperparameter. 

\myparagraph{Remark 1}
The incorporated regularization terms $-\rho(\mZ_\text{img})$ and $-\rho(\mZ_\text{text})$ are to maximize the volume of the spanned subspace of the learned representations. As analyzed in~\cite{Meng:ICLR25}, the total coding rate regularization can not only prevent the catastrophic representation collapse but also promote the learned representations conforming to a union of the modality-specific subspaces.
From an alternating optimization perspective, when $\{\mZ_\text{img}, \mZ_\text{text}\}$ are fixed, optimizing the shared coefficient matrix $\mC$ in DeepMORSE encourages modality-shared subspace-preserving structures; whereas when $\mC$ is fixed, optimizing $\{\mZ_\text{img}, \mZ_\text{text}\}$ in DeepMORSE refines the multimodal representations to better conform to a union of modality-specific subspace structures.

\subsection{Theoretical Justification}
\label{sec:theory}

\begin{proposition}
\label{proposition:1}
    Consider the sub-problem of %
    DeepMORSE with fixed representations $\mZ_\text{img}$ and $\mZ_\text{text}$, \ie,
    \begin{align}
    \begin{aligned}
    \label{eq:subprob}
     \!\!\!\vc_\text{share}^j\coloneqq\mathop{\arg\min}_{\vc^j} \|\vz_\text{img}^j-\mZ_\text{img}^{-j}\vc^j\|_2^2+\|\vz_\text{text}^j-\mZ_\text{text}^{-j}\vc^j\|_2^2,
    \end{aligned}
    \end{align}
    where $\mZ_\text{img}^{-j}$ and $\mZ_\text{text}^{-j}$ denote the representation matrices excluding the $j$-th column.
    Let $\vc_{\star}^j$ %
    be the modality-invariant self-expressive coefficients, \ie, 
    \begin{equation}
    \label{eq:c_res}
        \vz_\text{img}^j = \mZ_\text{img}^{-j}\vc_{\star}^j+\vdelta_\text{img}^j,\quad
        \vz_\text{text}^j = \mZ_\text{text}^{-j}\vc_{\star}^j+\vdelta_\text{text}^j,
    \end{equation}
    where $\vdelta_\text{img}^j,\vdelta_\text{text}^j$ denote the modality-specific deviation terms. Assume that $\mathbb{E}[\vdelta_\text{img}^j]=\mathbb{E}[\vdelta_\text{text}^j]=\boldsymbol{0}$, $\mathrm{Cov}[\vdelta_\text{img}^j]=\mathrm{Cov}[\vdelta_\text{text}^j]=\sigma^2\boldsymbol{I}_d$, then we have:
    
    a) $\vc_\text{share}^j$ is closer to $\vc_\star^j$ than both the modality-specific optimal solutions $\vc_\text{img}^j$ and $\vc_\text{text}^j$, \ie, 
    \begin{equation}
        \mathbb{E}\Big[\|\vc_\text{share}^j-\vc_\star^j\|_2^2\Big]\leq\mathop{\min}\Big(\mathbb{E}\Big[\|\vc_\text{img}^j-\vc_\star^j\|_2^2\Big],\mathbb{E}\Big[\|\vc_\text{text}^j-\vc_\star^j\|_2^2\Big]\Big).
    \end{equation}
     
    b) If $\vc_\star^j$ is subspace-preserving, then the self-expressive coefficients corresponding to %
    mismatched subspaces (which are indicated by $\mathcal{I}_-^j$) satisfy that:
    \begin{equation}
        \!\!\!\!\!\!\mathbb{E}\Big[\|\vc_{\text{share},\mathcal{I}_-^j}^j\|_2^2\Big]\leq\mathop{\min}\Big(\mathbb{E}\Big[\|\vc_{\text{img},\mathcal{I}_-^j}^j\|_2^2\Big],\mathbb{E}\Big[\|\vc_{\text{text},\mathcal{I}_-^j}^j\|_2^2\Big]\Big).
    \end{equation}
    
\end{proposition}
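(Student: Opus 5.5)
Write $\mA\coloneqq\mZ_\text{img}^{-j}$ and $\mB\coloneqq\mZ_\text{text}^{-j}$. Treat them as fixed (deterministic), with only the deviations $\vdelta_\text{img}^j,\vdelta_\text{text}^j$ random. In addition to the stated moment assumptions, I will assume the two deviations are uncorrelated, i.e., $\mathbb{E}[\vdelta_\text{img}^j\vdelta_\text{text}^{j\top}]=\0$. I also assume that $\mA^\top\mA$ and $\mB^\top\mB$ are invertible, so that $\vc_\text{img}^j$ and $\vc_\text{text}^j$ are the unique least-squares solutions. The plan is to write all three estimators in closed form, compute their mean squared errors exactly, and compare them with a matrix-convexity inequality.

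\textbf{Closed forms and errors.} The problem in (\ref{eq:subprob}) is an ordinary least-squares problem on the stacked data $[\mA;\mB]$ and $[\vz_\text{img}^j;\vz_\text{text}^j]$. Hence
\begin{equation*}
\vc_\text{share}^j=(\mA^\top\mA+\mB^\top\mB)^{-1}(\mA^\top\vz_\text{img}^j+\mB^\top\vz_\text{text}^j),
\end{equation*}
while $\vc_\text{img}^j=(\mA^\top\mA)^{-1}\mA^\top\vz_\text{img}^j$ and $\vc_\text{text}^j=(\mB^\top\mB)^{-1}\mB^\top\vz_\text{text}^j$. Substituting the model (\ref{eq:c_res}) gives
\begin{align*}
\vc_\text{share}^j-\vc_\star^j&=(\mA^\top\mA+\mB^\top\mB)^{-1}(\mA^\top\vdelta_\text{img}^j+\mB^\top\vdelta_\text{text}^j),\\
\vc_\text{img}^j-\vc_\star^j&=(\mA^\top\mA)^{-1}\mA^\top\vdelta_\text{img}^j,
\end{align*}
and analogously for $\vc_\text{text}^j$. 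All three estimators are therefore unbiased. Using the covariance assumption and the uncorrelatedness, the error covariances are
\begin{equation*}
\mathrm{Cov}[\vc_\text{share}^j]=\sigma^2(\mA^\top\mA+\mB^\top\mB)^{-1},\qquad \mathrm{Cov}[\vc_\text{img}^j]=\sigma^2(\mA^\top\mA)^{-1},
\end{equation*}
and $\mathrm{Cov}[\vc_\text{text}^j]=\sigma^2(\mB^\top\mB)^{-1}$.

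\textbf{Key comparison.} Since $\mB^\top\mB\succeq\0$, we have $\mA^\top\mA+\mB^\top\mB\succeq\mA^\top\mA$. Matrix inversion is operator monotone decreasing on positive definite matrices, so
\begin{equation*}
(\mA^\top\mA+\mB^\top\mB)^{-1}\preceq(\mA^\top\mA)^{-1},
\end{equation*}
and symmetrically $(\mA^\top\mA+\mB^\top\mB)^{-1}\preceq(\mB^\top\mB)^{-1}$.

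For part a), take traces of these Loewner inequalities. Because each estimator is unbiased, the trace of its covariance equals its mean squared error $\mathbb{E}\|\cdot-\vc_\star^j\|_2^2$. Taking the minimum over the two modality-specific bounds gives claim a).

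For part b), the Loewner order is preserved under principal submatrices. If $\vc_\star^j$ is subspace-preserving, then $\vc_{\star,\mathcal{I}_-^j}^j=\0$, so for each estimator $\mathbb{E}\|\vc_{\cdot,\mathcal{I}_-^j}^j\|_2^2$ equals the trace of the $\mathcal{I}_-^j$-principal block of its covariance. The same block-wise trace comparison then yields claim b).

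\textbf{Main obstacle.} The delicate point is the modeling assumptions rather than the algebra. The argument needs the two deviations to be uncorrelated and needs $\mA^\top\mA$ and $\mB^\top\mB$ to be invertible, which requires $N-1\le d$. If the Gram matrices are singular, I would instead work with minimum-norm solutions or add a small ridge term. In that case, the monotonicity step becomes $(\mA^\top\mA+\mB^\top\mB+\lambda\mI)^{-1}\preceq(\mA^\top\mA+\lambda\mI)^{-1}$, but ridge estimators are biased, so the bias terms must be bounded separately. I expect the statement is intended in the full-column-rank regime, and I would present the proof there.
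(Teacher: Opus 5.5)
Your proposal is correct and follows the paper's proof essentially step for step. Both use the closed-form least-squares solutions, compare error second moments $\sigma^2\Tr((\mG_\text{img}+\mG_\text{text})^{-1})$ against $\sigma^2\Tr(\mG_\text{img}^{-1})$ (and likewise for text) via operator monotonicity of the inverse for a), and for b) use that the Loewner order is preserved under principal submatrices together with $\vc_{\star,\mathcal{I}_-^j}^j=\0$. The extra hypotheses you state (uncorrelated deviations across modalities, and invertible Gram matrices, which needs $N-1\le d$) are exactly what the paper assumes tacitly when it drops the cross term in the expectation and writes $\mG_\text{img}^{-1}$, so making them explicit is an improvement rather than a deviation.
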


We now turn to the other side of the alternating optimization, where we optimize the modality-specific representations $\{\mZ_\text{img},\mZ_\text{text}\}$ when $\mC$ is fixed. 
We consider the following problem:
\begin{align}
\label{eq:fixedC_subprob}
\begin{aligned}
    \mathop{\arg\min}_{\mZ}~&-\log\det(\boldsymbol{I}+\alpha\mZ\mZ^\top)+\gamma\|\mZ-\mZ\mC\|_F^2,\\
    \mathrm{s.t.}~& \| \mZ \|_F^2 = N, %
\end{aligned}
\end{align}
where $\alpha=d/(N\epsilon^2)$. This formulation is exactly %
the regularized deep self-expressive model analyzed in~\cite{Meng:ICLR25}. It is easy to see that the two modality-specific representations $\{\mZ_\text{img},\mZ_\text{text}\}$ are non-collapsed under mild conditions as follows. %
\begin{proposition}[\cite{Meng:ICLR25}]
    Let $\mM\coloneqq(\mI-\mC_\text{share})(\mI-\mC_\text{share})^\top$, the optimal multimodal representations $\{\mZ_{\text{img},\star},\mZ_{\text{text},\star}\}$ learned by solving problem %
    (\ref{eq:deep_se_model_share_c_matrix_reg}) %
    are %
    non-collapsed, \ie, satisfying
    $$\operatorname{rank}(\mZ_{\text{img},\star})=\operatorname{rank}(\mZ_{\text{text},\star})=\min\{d,N\},$$ with the %
    singular values given by 
    \begin{align}
        \sigma_{\mZ_{\text{img},\star}}^{(i)}=\sigma_{\mZ_{\text{text},\star}}^{(i)}&=\sqrt{\frac{1}{\gamma\sigma_{\mM}^{(i)}+\nu_\star}-\frac{1}{\alpha}},
    \end{align}
    for $i=1,\cdots,\min\{d,N\}$, provided that $\gamma<(\alpha-\nu_\star)/\lambda_\text{max}(\mM)$, where $\nu_\star$ is the optimal dual variable.
\end{proposition}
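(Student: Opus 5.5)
The plan is to exploit the fact that, once $\mC=\mC_\text{share}$ is fixed, problem~(\ref{eq:deep_se_model_share_c_matrix_reg}) splits into two independent copies of problem~(\ref{eq:fixedC_subprob}): one in $\mZ_\text{img}$ and one in $\mZ_\text{text}$. The term $r(\mC)$ is constant in $\mZ$, and each modality carries its own norm constraint. Both copies have the same data, namely the same $\alpha$, $\gamma$ and $\mM=(\mI-\mC_\text{share})(\mI-\mC_\text{share})^\top$. It therefore suffices to solve a single generic problem in $\mZ\in\mathbb{R}^{d\times N}$ and show that its optimal singular values are uniquely determined. This immediately gives $\sigma_{\mZ_{\text{img},\star}}^{(i)}=\sigma_{\mZ_{\text{text},\star}}^{(i)}$.

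\textbf{Reduction to the spectrum.} Write $\mZ=\bm{U}\bm{\Sigma}\bm{V}^\top$ with singular values $\sigma_1\ge\cdots\ge\sigma_m$, where $m=\min\{d,N\}$, and set $s_i=\sigma_i^2$. The first two pieces depend only on the spectrum:
\[
\log\det(\mI+\alpha\mZ\mZ^\top)=\sum_i\log(1+\alpha s_i),\qquad \|\mZ\|_F^2=\sum_i s_i=N.
\]
The self-expressive term equals $\Tr(\mZ\mM\mZ^\top)=\Tr(\mZ^\top\mZ\,\mM)$. By von Neumann's trace inequality (for PSD matrices) it is bounded below by $\sum_i s_i\lambda_i$, where the $\lambda_i$ are eigenvalues of $\mM$ paired in reverse order: largest $s_i$ with smallest $\lambda_i$. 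Equality is attained by choosing $\bm{V}$ to align with the corresponding eigenvectors of $\mM$. Hence the matrix problem reduces to the scalar problem
\[
\min_{s\ge 0}\; -\sum_{i=1}^m\log(1+\alpha s_i)+\gamma\sum_{i=1}^m\lambda_i s_i\quad\text{s.t.}\quad\sum_i s_i=N.
\]
Here $\lambda_i$ is what the statement denotes $\sigma_\mM^{(i)}$, indexed compatibly. I expect the careful handling of this trace-inequality step to be the main technical obstacle: the rectangular case $d\neq N$, the choice of which $m$ eigenvalues of $\mM$ are used, and the attainability of equality.

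\textbf{KKT analysis.} The reduced problem is strictly convex, since $-\log(1+\alpha s)$ is strictly convex, and it has linear constraints. Hence the KKT conditions are necessary and sufficient, and the optimum is unique. Introduce a multiplier $\nu$ for the equality constraint and $\mu_i\ge0$ for $s_i\ge0$. Stationarity reads
\[
-\frac{\alpha}{1+\alpha s_i}+\gamma\lambda_i+\nu-\mu_i=0 .
\]
If every $s_i>0$, then $\mu_i=0$ by complementary slackness, and
\[
s_i=\frac{1}{\gamma\lambda_i+\nu_\star}-\frac{1}{\alpha},
\]
which is the claimed formula after taking square roots. Here $\nu_\star$ is fixed by $\sum_i s_i=N$; the left-hand side is strictly decreasing in $\nu$, so $\nu_\star$ is unique.

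\textbf{Consistency and conclusion.} The interior candidate satisfies $s_i>0$ if and only if $\gamma\lambda_i+\nu_\star<\alpha$ for all $i$. The hypothesis $\gamma<(\alpha-\nu_\star)/\lambda_{\max}(\mM)$ guarantees this. So this candidate, together with $\mu_i=0$, satisfies all the KKT conditions and is therefore the unique optimum. Consequently all $m$ singular values are strictly positive, which gives $\operatorname{rank}(\mZ_\star)=\min\{d,N\}$. Applying the same argument to both modalities completes the plan.
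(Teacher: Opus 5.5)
Your proposal is correct. Its first step is the same as the paper's: with $\mC$ fixed at $\mC_\text{share}$, the objective of (\ref{eq:deep_se_model_share_c_matrix_reg}) separates into two copies of (\ref{eq:fixedC_subprob}) with identical $\alpha$, $\gamma$ and $\mM$. Beyond that observation the paper gives no argument; it calls the conclusion ``easy to see'' and defers to \cite{Meng:ICLR25}. Your von Neumann reduction plus convex KKT analysis in $s_i=\sigma_i^2$ is therefore a self-contained substitute for the citation, and it makes several things explicit that the paper leaves implicit:
\begin{itemize}
\item Strict convexity of the reduced problem gives a \emph{unique} optimal spectrum and a unique $\nu_\star$. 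The image and text representations solve identical subproblems, so they must share this spectrum.
\item The sign multipliers $\mu_i$ are what rule out zero singular values. Stationarity written directly in $\mZ$ holds trivially along directions with $\sigma_i=0$, so on its own it cannot certify full rank.
\item The reduction shows that $\sigma_\mM^{(i)}$ must be read as the $i$-th \emph{smallest} eigenvalue of $\mM$. When $d<N$, only the $\min\{d,N\}$ smallest eigenvalues enter, so the hypothesis stated with $\lambda_{\max}(\mM)$ is stronger than needed in that case.
\end{itemize}

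Two details to tighten:
\begin{itemize}
\item The statement's $\nu_\star$ is the optimal dual variable, whereas you define it as the root of $\sum_i\bigl(\frac{1}{\gamma\lambda_i+\nu}-\frac{1}{\alpha}\bigr)=N$. To connect them, suppose the unique optimum had some $s_i=0$. Then stationarity with $\mu_i\ge 0$ gives $\gamma\lambda_i+\nu_\star\ge\alpha$, which contradicts $\gamma\lambda_{\max}(\mM)+\nu_\star<\alpha$. So under either reading the optimum is interior and the two values coincide.
\item Positivity of the candidate requires $0<\gamma\lambda_i+\nu_\star<\alpha$, not just the upper bound. The lower bound is automatic if you seek the root on the interval where $\gamma\lambda_i+\nu>0$ for all $i$. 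On that interval the sum decreases continuously from $+\infty$ to $-\min\{d,N\}/\alpha<N$, which also gives existence of the root, not just uniqueness.
\end{itemize}
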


\myparagraph{Remark 2}
The two propositions %
characterize a mutually reinforcing mechanism: the shared self-expressive coefficients suppress modality-specific noise %
towards being modality-specific subspace-preserving, which in turn guides the modality-specific representations toward a structured UoS distribution.  %
Then, the refined structured representations %
help learn %
shared self-expressive coefficients in subsequent iterations. 
 
Finally, from an implementation perspective, if a random mini-batch scheme is used to solve $\mC$, then there is an implicit regularization $\|\mC\|_F^2$ in the self-expressive model. %
Precisely, we have the following result. 

\begin{proposition}
\label{proposition:mini-batch-implicit-reg}
Let $\boldsymbol{\xi}=(\xi_1,\dots,\xi_N)^\top$ be a random vector consisting of $N$ i.i.d. Bernoulli random variables with
\begin{equation}
\xi_{i}=\left\{\begin{array}{ll}
1,     &  \text{with probability }{n_b}/{N},\\
0,     &  \text{with probability }1-{n_b}/{N},
\end{array}\right.
\end{equation}
where $n_b$ is the batch size. 
Then, using a stochastic mini-batch scheme to solve $\mC$ in the sub-problem of DeepMORSE essentially addresses the following problem~\footnote{%
The regularization terms not involving $\vc_j$ are ignored.}
\begin{align}
\label{eq:subprob}
     \mathop{\min}_{\mC}~& \|\mZ_{\text{img}}\Diag(\boldsymbol{\xi}) - \mZ_{\text{img}}\Diag(\boldsymbol{\xi})\mC\Diag(\boldsymbol{\xi})\|_F^2 \nonumber\\
     & + \|\mZ_{\text{text}}\Diag(\boldsymbol{\xi}) - \mZ_{\text{text}}\Diag(\boldsymbol{\xi})\mC\Diag(\boldsymbol{\xi})\|_F^2, \nonumber\\
    \mathrm{s.t.}~&\Diag(\mC)=\boldsymbol{0},%
\end{align}
which %
induces an implicit regularization $\frac{2(N-n_b)}{n_b}\|\mC\|_F^2$. 
\end{proposition}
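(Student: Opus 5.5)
The plan is to compute the expectation of the mini-batch objective over $\boldsymbol{\xi}$ in closed form and identify the extra term. Write $\mB\coloneqq\Diag(\boldsymbol{\xi})$, which satisfies $\mB^2=\mB$. By symmetry it suffices to treat one modality, say $\mZ\coloneqq\mZ_{\text{img}}$; the text term is identical. That is where the factor $2$ will come from.

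\textbf{Expanding the objective.} Write
\begin{equation*}
\|\mZ\mB-\mZ\mB\mC\mB\|_F^2=\Tr\big(\mB\mZ^\top\mZ\mB\big)-2\Tr\big(\mB\mZ^\top\mZ\mB\mC\mB\big)+\Tr\big(\mB\mC^\top\mB\mZ^\top\mZ\mB\mC\mB\big).
\end{equation*}
Let $\mG\coloneqq\mZ^\top\mZ$ and $p\coloneqq n_b/N$. Entrywise, each term is a sum of products of $\xi$'s.

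\textbf{Taking expectations term by term.}
\begin{itemize}
\item The first term is $\sum_i \xi_i G_{ii}$, a constant independent of $\mC$.
\item The cross term is $\sum_{i,j} \xi_i\xi_j G_{ij}c_{ij}$. Because $\Diag(\mC)=\boldsymbol{0}$, only $i\neq j$ contribute, so its expectation is $p^2\Tr(\mG\mC)$.
\item The quadratic term is $\sum_{j}\xi_j\sum_{i,k}\xi_i\xi_k c_{ij}c_{kj}G_{ik}$. For $i\neq k$ (both distinct from $j$, since $c_{jj}=0$) the expectation is $p^3 c_{ij}c_{kj}G_{ik}$. For $i=k$ it is $p^2 c_{ij}^2 G_{ii}$.
\end{itemize}
Using the implementation's normalization $\|\vz_i\|_2=1$, so $G_{ii}=1$, the quadratic term becomes
\begin{equation*}
p^3\|\mZ\mC\|_F^2+(p^2-p^3)\|\mC\|_F^2 .
\end{equation*}

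\textbf{Rescaling.} Dividing the whole expectation by $p^3$ gives, up to constants,
\begin{equation*}
\|\mZ\mC\|_F^2-\tfrac{2}{p}\Tr(\mG\mC)+\tfrac{1-p}{p}\|\mC\|_F^2 .
\end{equation*}
The factor $\tfrac{2}{p}$ on the cross term does not match the full-batch objective $\|\mZ-\mZ\mC\|_F^2$. I would handle this in one of two ways:
\begin{itemize}
\item interpret the mini-batch update as the per-batch estimate rescaled to match the full-batch quadratic term, so that the cross term is matched via $\xi_j$ weighting (divide the linear part by $p^2$ and the quadratic part by $p^3$ as unbiased estimators); or
\item treat each column $\vc_j$ conditional on $\xi_j=1$. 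Then the cross term carries $p$, the quadratic off-diagonal carries $p^2$ and the diagonal carries $p$. Dividing by $p^2$ yields
\begin{equation*}
\|\vz_j-\mZ\vc_j\|^2+\tfrac{1-p}{p}\|\vc_j\|^2 .
\end{equation*}
\end{itemize}
I will adopt the second, conditional-on-$\xi_j$ reading, which is consistent with the footnote that terms not involving $\vc_j$ are ignored. It gives exactly $\tfrac{1-p}{p}=\tfrac{N-n_b}{n_b}$ per modality.

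\textbf{Combining and the main obstacle.} Summing over $j$ and over the two modalities gives the implicit regularizer $\tfrac{2(N-n_b)}{n_b}\|\mC\|_F^2$. The main obstacle is this bookkeeping of powers of $p$, so that the linear and off-diagonal quadratic parts reassemble into the full-batch self-expressive loss while the diagonal excess survives as the ridge term. I would verify it carefully by splitting the index sum into $i=k$ versus $i\neq k$ and using $\xi_i^2=\xi_i$ together with $G_{ii}=1$.
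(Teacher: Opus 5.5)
Your moment computation is correct and is essentially the paper's. The expansion into a constant, a cross term with weight $p^2$, an off-diagonal quadratic part with weight $p^3$, and a diagonal part with weight $p^2$ is exactly what is needed. So is the use of $c_{jj}=0$ to decouple $\xi_j$ from the inner sum, and of $G_{ii}=1$.

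The gap is in the last step, where you reconcile the cross term; neither of your fixes works. Your first fix divides the linear and quadratic parts by different constants. That is an importance-weighted objective, not the mini-batch objective in the proposition, and its minimizer is different, so it proves nothing about the stated scheme. Your second, conditional-on-$\xi_j$, fix contains an algebra error. Conditioning on $\xi_j=1$ only strips the common factor $p$ from every term of column $j$, so the relative powers of $p$ between the linear and quadratic parts are unchanged. Concretely, the conditional expectation is $1-2p\,\vz_j^\top\mZ\vc_j+p^2\|\mZ\vc_j\|_2^2+(p-p^2)\|\vc_j\|_2^2$. Dividing by $p^2$ gives $\|\vz_j/p-\mZ\vc_j\|_2^2+\frac{1-p}{p}\|\vc_j\|_2^2$, not $\|\vz_j-\mZ\vc_j\|_2^2+\frac{1-p}{p}\|\vc_j\|_2^2$.

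The missing idea is a change of variables rather than a reweighting. Set $\tilde\mC\coloneqq p\,\mC=\frac{n_b}{N}\mC$. This is a bijection that preserves $\Diag(\tilde\mC)=\boldsymbol{0}$, so minimizing over $\mC$ and over $\tilde\mC$ are equivalent. Using $\Tr(\mG)=\|\mZ\|_F^2=N$, your own unconditional expansion becomes
\begin{equation*}
pN-2p\Tr(\mG\tilde\mC)+p\|\mZ\tilde\mC\|_F^2+(1-p)\|\tilde\mC\|_F^2
= p\Big(\|\mZ-\mZ\tilde\mC\|_F^2+\tfrac{N-n_b}{n_b}\|\tilde\mC\|_F^2\Big).
\end{equation*}
Adding the text modality then gives $\frac{2(N-n_b)}{n_b}\|\tilde\mC\|_F^2$. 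This is precisely the paper's argument, which sets $\tilde\vc_j=\frac{n_b}{N}\vc_j$ column by column. Note that the ridge penalty therefore acts on the rescaled coefficients $\tilde\mC$, or equivalently on $\mC$ with the target rescaled to $\vz_j/p$. The $\|\mC\|_F^2$ in the statement should be read in that sense.
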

This result reveals that the mini-batch training procedure of the self-expressive coefficients introduces an implicit %
$\|\mC\|_F^2$ regularization. %
This also explains the empirical observation in~\cite{Meng:ICLR25} that explicit regularization on the self-expressive coefficients has merely marginal effect---because the mini-batch optimization itself already induces %
an implicit regularizer.

\subsection{Implementations} 
\label{Sec:Implementations}

\myparagraph{Textual Counterpart Generation} 
In practical image clustering scenarios, off-the-shelf image-text pairs are seldom available.
Therefore, we leverage pretrained vision-language models (\eg, CLIP~\cite{Radford:ICML21-CLIP}) %
to generate a textual counterpart for each image. 
Formally, we denote $\vx\in\mathbb{R}^D$ as an image embedding and $\mathcal{D}=\{\boldsymbol{d}_1,\cdots,\boldsymbol{d}_{M}\}$ as a concept dictionary of text embeddings. Following prior work \cite{Li:ICML24-TAC}, we build the concept dictionary $\mathcal{D}$ %
by first applying $k$-means algorithm to the image embeddings and then, for each cluster center, selecting the top-ranked WordNet text embeddings based on the similarity between image embeddings and text embeddings. 

Given %
the concept dictionary $\mathcal{D}$, we then generate 
the textual counterparts for each image  %
that accurately capture the semantic content of the image while maintaining %
the clear geometric structure. 
To be specific, for each image embedding $\vx\in\mathbb{R}^D$, we %
seek its textual counterpart $\vt:=\sum_{i=1}^{M} \theta_i \boldsymbol{d}_i$ by solving the following cross-modal sparse coding problem:
\begin{align}
\label{eq:problem_for_text}
\begin{aligned}
    \mathop{\min}_{\boldsymbol{\theta}}\quad &\|\vx - \sum_{i=1}^{M} \theta_i \boldsymbol{d}_i\|_2^2,~~~
    \mathrm{s.t.} ~~~ & \|\boldsymbol{\theta}\|_0\leq s,
\end{aligned}
\end{align}
where $\|\boldsymbol{\theta}\|_0$ denotes the number of non-zero coefficients and $s$ is a positive integer to specify the sparsity. 
Note that the query comes from the image modality while the dictionary is constructed from the text modality. The describable correspondence %
of the textual counterpart $\vt$ is guaranteed %
by minimizing the Euclidean distance between $\vx$ and $\vt$, in which the sparsity constraint enforces that only the top-$s$ most semantically relevant atomic concepts are picked, %
so that the generated textual counterpart $\{\vt_i\}_{i=1}^N$ approximately %
reside on the \emph{subspace} spanned by a small set of atomic concepts %
in $\mathcal{D}$. 
By solving problem (\ref{eq:problem_for_text}) for all images $\mathcal{X}\coloneqq\{\vx_1,\cdots,\vx_N\}$, we obtain the %
textual counterparts $\mathcal{T} \coloneqq \{\vt_1,\cdots, \vt_N\}$, which accurately reflect the content of each image while approximately conforming to a UoS structure. 
In experiments, we adopt matching pursuit algorithm~\cite{Mallat:TSP93-MP} to solve problem~(\ref{eq:problem_for_text}).

\myparagraph{Reparameterization and Training Procedure} 
To solve problem (\ref{eq:deep_se_model_share_c_matrix_reg}) efficiently, we use MLPs to reparameterize %
$\f$ and $\g$ as $\f(~\cdot~;\boldsymbol{\phi}_I)$ and $\g(~\cdot~;\boldsymbol{\phi}_T)$,
where $\boldsymbol{\phi}_I$ and $\boldsymbol{\phi}_T$ denote the learnable parameters.
Accordingly, the modality-specific (\ie, image and text) representations are obtained by:
\begin{align}
\label{eq:mlp_forward}
\begin{aligned}
    \vz_\text{img}=&\f(\vx;\boldsymbol{\phi}_I)/\|\f(\vx;\boldsymbol{\phi}_I)\|_2,\\
    \vz_\text{text}=&\g(\vt;\boldsymbol{\phi}_T)/\|\g(\vt;\boldsymbol{\phi}_T)\|_2,
\end{aligned}
\end{align}
where the unit $\ell_2$-norm %
is to satisfy the normalization constraint in %
problem (\ref{eq:deep_se_model_share_c_matrix}).
Since the number of expressive coefficients $\{c_{ij}\}_{i,j=1}^N$ grows quadratically with $N$, we follow \cite{Zhang:CVPR21-SENet,Ding:ICCV23,Meng:ICLR25} to reparameterize the coefficients with a properly designed two-branch network. 
Specifically, we first compute a modality-mixed representation $\vz_\text{mix}\coloneqq\frac{1}{2}(\vz_\text{img}+\vz_\text{text})$ and then %
define the expressive coefficients %
as follows:\footnote{After the Sinkhorn-Knopp projection~\cite{Caron:NIPS20-SwAV,Ding:ICML22,Ding:ICCV23}, we eliminate the diagonal elements of $\mC$ to satisfy the constraint, and multiply the projected coefficients element-wise by $\mathrm{sign}(\mZ_\text{mix}^\top\mZ_\text{mix})$ to restore the sign, as the self-expressive coefficients are not required to be non-negative. Thus, we refer to it as a ``signed'' Sinkhorn-Knopp projection.}
\begin{align}
\label{eq:compute_C}
    \mC=\mathcal{P}(\mZ_\text{mix}^\top \mZ_\text{mix}),
\end{align}
where $\mathcal{P}(\cdot)$ is %
a signed Sinkhorn-Knopp projection~\cite{Cuturi:NIPS13-sinkhorn}.

\myparagraph{Remark 3}
We clarify %
the key differences between our reparameterization of $\mC$ and that of SENet~\cite{Zhang:CVPR21-SENet}. 
First, although SENet is also trained with SGD, %
its self-expressive coefficients are computed with respect to the dictionary which consists of the full training data. 
As a result, there is no implicit regularization induced by its training procedure and the computational cost scales with $N$. 
In our DeepMORSE, 
the training procedure via SGD in mini-batch mode effectively induces an implicit regularization $\ell_2^2$, similar to the implicit regularization analyzed in~\cite{Chen:CVPR20}. 
Second, SENet introduces a learnable soft-thresholding operation to obtain a sparse self-expressive matrix, while DeepMORSE adopts ``signed'' Sinkhorn-Knopp projection to obtain the self-expressive matrix. Empirically, we observe that the ``signed'' Sinkhorn-Knopp projection yields better clustering performance compared to the soft-thresholding operator. 
Third, %
while the forward computation $\mC=\mathcal{P}(\mZ^\top\mZ)$ is %
similar to~\cite{Ding:ICCV23}, the interpretation is fundamentally different: the output of Sinkhorn-Knopp projection in \cite{Ding:ICCV23} is doubly stochastic and %
used as the affinity matrix; %
whereas the output of the ``signed'' Sinkhorn-Knopp projection\footnote{The Sinkhorn-Knopp projection has also been employed in %
manifold-constrained hyper-connections (mHC)~\cite{Xie:arxiv25-mhc}, where it projects residual connection matrices onto the Birkhoff polytope for stable signal propagation. While the context differs, we conjecture that the Sinkhorn-Knopp projection serves to %
numerically stabilizing the optimization.} in DeepMORSE is no longer doubly stochastic and serves as self-expressive coefficients to capture the underlying subspace structure.

Equipped with the reparameterization for $\mZ_\text{img}$, $\mZ_\text{text}$ and $\mC$, %
rather than directly solving %
$\mZ_\text{img}$, $\mZ_\text{text}$ and $\mC$ from problem (\ref{eq:deep_se_model_share_c_matrix_reg}), 
we instead train the networks by jointly updating %
the parameters $\{\boldsymbol{\phi}_I,\boldsymbol{\phi}_T\}$ via Stochastic Gradient Descent (SGD) in mini-batch mode. %
Specifically, for each iteration, we pick a mini-batch of $n_b$ image representations at random from $\mathcal{X}$ (denoted as  $\mZ^\mathcal{B}_\text{img}\in\mathbb{R}^{d\times n_b}$), and accordingly pick $n_b$ counterpart text representations from $\mathcal{T}$ (denoted as $\mZ^\mathcal{B}_\text{text} \in\mathbb{R}^{d\times n_b}$). 
Then, for each mini-batch, the loss function is computed by: 
\begin{align}
\label{eq:loss_func}
\begin{aligned}
    \mathcal{L}\coloneqq &\gamma\left(\|\mZ^\mathcal{B}_\text{img}-\mZ^\mathcal{B}_\text{img}\mC^\mathcal{B}\|_F^2+\|\mZ^\mathcal{B}_\text{text}-\mZ^\mathcal{B}_\text{text}\mC^\mathcal{B}\|_F^2\right)\\
    &\quad-\rho(\mZ^\mathcal{B}_\text{img})-\rho(\mZ^\mathcal{B}_\text{text}),
\end{aligned}
\end{align}
where the term $r(\cdot)$ is ignored due to the implicit regularization %
induced by SGD in a mini-batch mode %
(Proposition~\ref{proposition:mini-batch-implicit-reg}).

\myparagraph{Evaluation on Test Data}
After training, the clustering results are obtained via spectral clustering~\cite{Shi:TPAMI00} on the affinity matrix %
directly induced by $|\mZ_\text{img}^\top\mZ_\text{img}|$.
Here, we use %
only the image modality to find the clustering of the test data, %
without further textual counterpart generation. This makes the inductive clustering process simpler and %
straightforward.\footnote{This strategy also follows \cite{Cai:AAAI23-SIC,Li:ICML24-TAC}, ensuring a fair comparison.} 
Moreover, the structured multimodal representations $\mZ_\text{img}$ and $\mZ_\text{text}$ learned by DeepMORSE can be directly applied to downstream tasks such as zero-shot classification and image retrieval.

\begin{figure}[!t]
\centering
\resizebox{\linewidth}{!}{
\begin{minipage}{\linewidth}
\begin{algorithm}[H]
\caption{Deep Modality-Shared Self-Expressive Model (DeepMORSE)}
\label{alg:algorithm}
\begin{algorithmic}[1] %
\linespread{1.1} %
\item[\textbf{Input:}] Image embeddings $\mathcal{X}$, dictionary of text embeddings $\mathcal{D}$, hyperparameters $s,\gamma,\epsilon,n_b$, number of iterations $T$, learning rate $\eta$
\item[\textbf{Textual counterpart generation:}] Generate textual counterparts $\mathcal{T}=\{\vt_i\}_{i=1}^N$ by solving problem~(\ref{eq:problem_for_text})

\item[\textbf{Initialization:}] %
Initialize network parameters $\boldsymbol \phi_I, \boldsymbol \phi_T$

\For{$t=1,\dots,T$} 
    \State\textit{\# Forward propagation}
    \State Randomly select a mini-batch of data $\{\vx_i,\vt_i\}_{i\in\mathcal{B}}$
    \State Compute representations $\mZ^\mathcal{B}_\text{img},\mZ^\mathcal{B}_\text{text}$ by (\ref{eq:mlp_forward})

    \State Compute %
    self-expressive matrix $\mC^\mathcal{B}$ by (\ref{eq:compute_C})
    \State Compute loss $\mathcal{L}$ by (\ref{eq:loss_func})
    
    \State\textit{\# Backward propagation}
    \State Compute gradient $\nabla_{\boldsymbol \phi_I} \coloneqq \frac{\partial \mathcal{L}}{\partial {\boldsymbol \phi_I}} $, $\nabla_{\boldsymbol \phi_T} \coloneqq \frac{\partial \mathcal{L}}{\partial {\boldsymbol \phi_T}} $

    \State Update $\boldsymbol \phi_I,\boldsymbol \phi_T$ by SGD with learning rate $\eta$

\EndFor
\linespread{1.1} %
\item[\textbf{Test:}]  
\State Compute representations of test data $\mZ_\text{img}$ by (\ref{eq:mlp_forward})
\State Apply spectral clustering on $|\mZ_\text{img}^\top\mZ_\text{img}|$. %
\end{algorithmic}
\end{algorithm}
\end{minipage}
}
\end{figure}

For clarity, we illustrate our DeepMORSE in Figure~\ref{fig:fig2} and summarize the training %
procedure in Algorithm~\ref{alg:algorithm}. %

\section{Experiments}
\label{Sec:Experiments}

We first describe the experimental setup %
in Section~\ref{sec:Experimental_Setup},
then report the clustering performance comparison in Section~\ref{Sec:clustering_result} and show extensive evaluations in Section~\ref{sec:Evaluations_on_Representations}. 

\subsection{Experimental Setup}
\label{sec:Experimental_Setup}
The reparameterized transformations $\f(~\cdot~;\phi_I)$ and $\g(~\cdot~;\phi_T)$ are implemented as two-layer MLPs, each consisting of two fully-connected layers interleaved with batch normalization and a ReLU activation, with hidden dimension 512 and output dimension 128 by default. Nevertheless, 
for the datasets that include more than 128 categories, we increase the hidden dimensions and output dimensions of the model and adjust $\gamma$ proportionally %
according to the output dimension, which is based on the sufficient condition to avoid representation collapse~\cite{Meng:ICLR25}. We fix the hyperparameters $s$ and $\epsilon$ to $s=5$ and $\epsilon^2=0.1$ on all datasets. 
DeepMORSE is trained with a learning rate of $\eta=10^{-4}$ and a batch size of $n_b=1024$. We evaluate clustering performance using accuracy (ACC) and normalized mutual information (NMI), and report the average results over five random seeds. 
Please refer to Appendix \ref{sec:Experimental_Details} for more experimental details and our principled approach for hyper-parameters configuration.

\begin{table*}[tbp]
  \centering
  \caption{\textbf{Image Clustering Performance Comparison.} The best results are in \textbf{bold} and the second best results are \underline{underlined}. The method marked with $\dagger$ is based on our reproduction, due to the distinction of architectures. The methods marked with $*$ are training-free approaches.}
  \resizebox{\textwidth}{!}{
    \begin{tabular}{lcccccccccccccc}
    \toprule
    \rowcolor{myGray} &  &  & \multicolumn{2}{c}{CIFAR-10} & \multicolumn{2}{c}{CIFAR-20} & \multicolumn{2}{c}{Dogs-15} & \multicolumn{2}{c}{DTD-47} & \multicolumn{2}{c}{UCF-101} & \multicolumn{2}{c}{ImgNet-1k} \\
    \rowcolor{myGray} & \multirow{-2}{*}{Backbone} & \multirow{-2}{*}{Venue}  & ACC   & NMI   & ACC   & NMI   & ACC   & NMI   & ACC   & NMI   & ACC   & NMI  & ACC   & NMI \\
    \midrule
    \multicolumn{15}{l}{\color{gray}\textit{Without Textual Information}}\\
    \quad SCAN~\cite{Van:ECCV2020-SCAN} & ResNet-18 & ECCV'20 & 88.3 & 79.7 & 50.7 & 48.6 & 59.3 & 61.2 & 46.4 & 59.4 & 61.1 & 79.7 & 39.9 & - \\
    \quad GCC~\cite{Zhong:ICCV21-GCC} & ResNet-18 & ICCV'21 & 85.6 & 76.4 & 47.2 & 47.2  & 52.6 & 49.0 & - & - & - & - & - & - \\
    \quad NNM~\cite{Dang:CVPR21-NNM} & ResNet-18 & CVPR'21 & 83.7 & 73.7 & 45.9 & 48.0 & 58.6 & 60.4 & - & - & - & - & - & - \\
    \quad CoKe~\cite{Qian:CVPR22-CoKe} &  ResNet-18 & CVPR'22 & 85.7 & 76.6 & 49.7 & 49.1 & - & - & - & - & - & - & - & - \\
    \quad SeCu~\cite{Qian:ICCV23-SeCu} &  ResNet-18 & ICCV'23 & \textbf{93.0} & \textbf{86.1} & 55.2 & 55.1 & - & - & - & - & - & - & - & - \\
    \quad CLIP+$k$-means~\cite{Radford:ICML21-CLIP} & ViT-B/32 & ICML'21 & 74.2 & 70.3 & 45.5 & 49.9 & 38.1 & 39.8 & 42.6 & 57.3 & 58.2 & 79.5 & 38.9 & 72.3 \\ 
    \quad CLIP+SC~\cite{Radford:ICML21-CLIP} & ViT-B/32 & ICML'21 & 69.0 & 65.6 & 45.1 & 47.7 & 41.9 & 42.5 & 49.8 & 61.4 & 64.2 & 82.2 & 41.3 & 72.5 \\
    \quad PRO-DSC$^\dagger$~\cite{Meng:ICLR25} & ViT-B/32 & ICLR'25 & 86.7 & 79.1 & 58.8 & 62.1 & 32.3 & 25.8 & 48.1 & 57.0 & 68.0 & \underline{83.6}& - & - \\
    \midrule
    \multicolumn{15}{l}{\color{gray}\textit{With Textual Information}}\\
    \quad SIC~\cite{Cai:AAAI23-SIC} & ViT-B/32 & AAAI'23  & 92.6 & 84.7 & 58.3  & 59.3  & 69.7  & 69.0  & 45.9  & 59.6  & 61.9  & 81.0 & 47.0 & 77.2 \\
    \quad MCA~\cite{Qiu:AAAI24} & ViT-B/32 & AAAI'24 & \underline{92.7} & \underline{84.9} & 59.7 & 59.8 & 74.9 & 73.3 & - & - & - & - & - & -\\
    \quad TAC~\cite{Li:ICML24-TAC} & ViT-B/32 & ICML'24 & 91.9  & 83.3  & 60.7  & 61.1  & \underline{83.0}  & 80.6  & 50.1  & \underline{62.1}  & \underline{68.7}  & 82.3  & \underline{58.2} & \underline{79.9} \\
    \quad GradNorm$^*$~\cite{Peng:ICCV25} & ViT-B/32 & ICCV'25  & 91.1 & 82.6 & 60.6 & 61.3 & 81.2 & \underline{81.0} & \underline{50.9} & 61.3 & 62.7 & 82.9 & 52.6 & 79.2 \\
    \quad CAE$^*$~\cite{Zhu:AAAI26} & ViT-B/32 & AAAI'26  & 90.9 & 81.7 & \underline{60.9} & \underline{62.8} & 77.5 & 77.9 & 46.7 & \underline{62.1} & 63.5 & 82.8 & 53.1 & 79.3 \\
    \quad DeepMORSE & ViT-B/32 & Ours & $92.3{\scriptstyle\pm 0.1}$  & $84.1{\scriptstyle\pm 0.2}$  & $\textbf{63.3}{\scriptstyle\pm 0.6}$ & $\textbf{63.1}{\scriptstyle\pm 0.5}$ & $\textbf{87.9}{\scriptstyle\pm 1.1}$ & $\textbf{83.8}{\scriptstyle\pm 1.4}$ & $\textbf{54.7}{\scriptstyle\pm 1.0}$ & $\textbf{64.2}{\scriptstyle\pm 0.3}$ & $\textbf{71.9}{\scriptstyle\pm 0.8}$ & $\textbf{85.6}{\scriptstyle\pm 0.4}$ & $\textbf{60.8}{\scriptstyle\pm 0.2}$ & $\textbf{80.9}{\scriptstyle\pm 0.2}$ \\
    \bottomrule
    \end{tabular}%
    }
  \label{tab:tab1}%
\end{table*}%
\vskip 0.1 in

\begin{figure*}[!t]
    \centering
    \parbox{\linewidth}{\centering
    \includegraphics[trim=50pt 40pt 50pt 0pt, clip,width=0.2\linewidth]{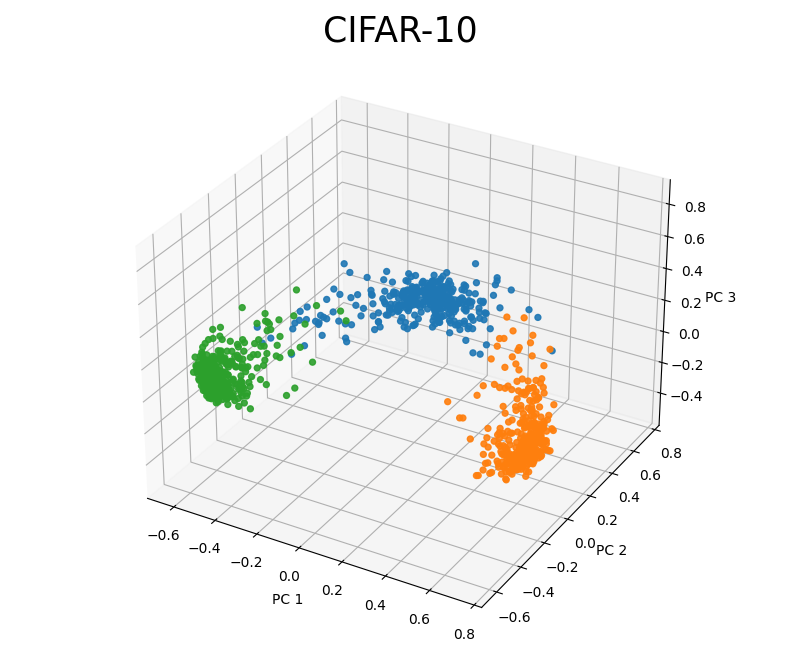}\hfill
    \includegraphics[trim=50pt 40pt 50pt 0pt, clip,width=0.2\linewidth]{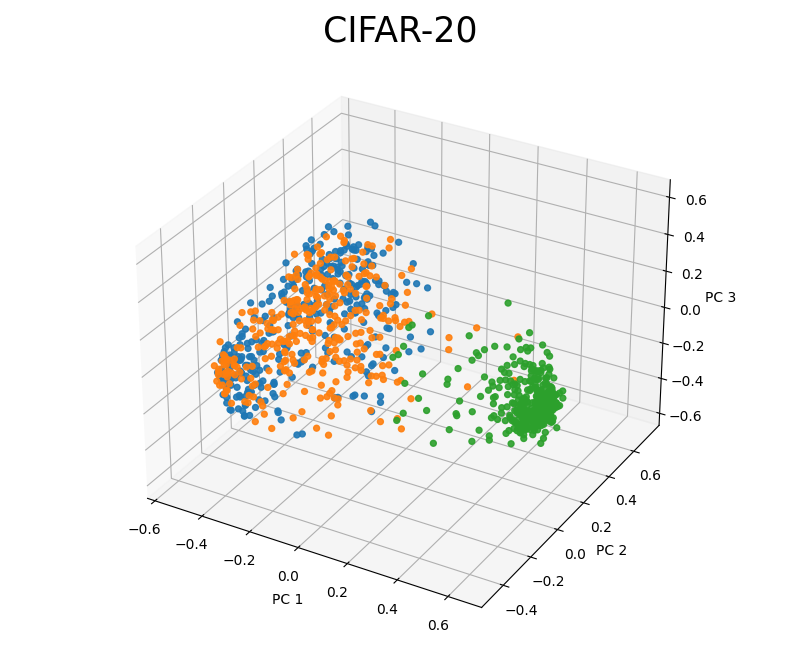}\hfill
    \includegraphics[trim=50pt 40pt 50pt 0pt, clip,width=0.2\linewidth]{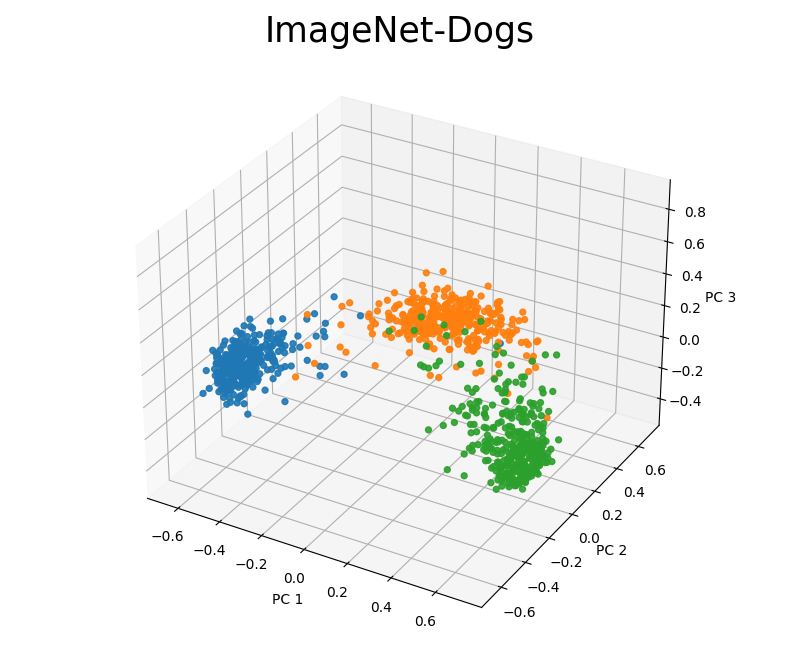}\hfill
    \includegraphics[trim=50pt 40pt 50pt 0pt, clip,width=0.2\linewidth]{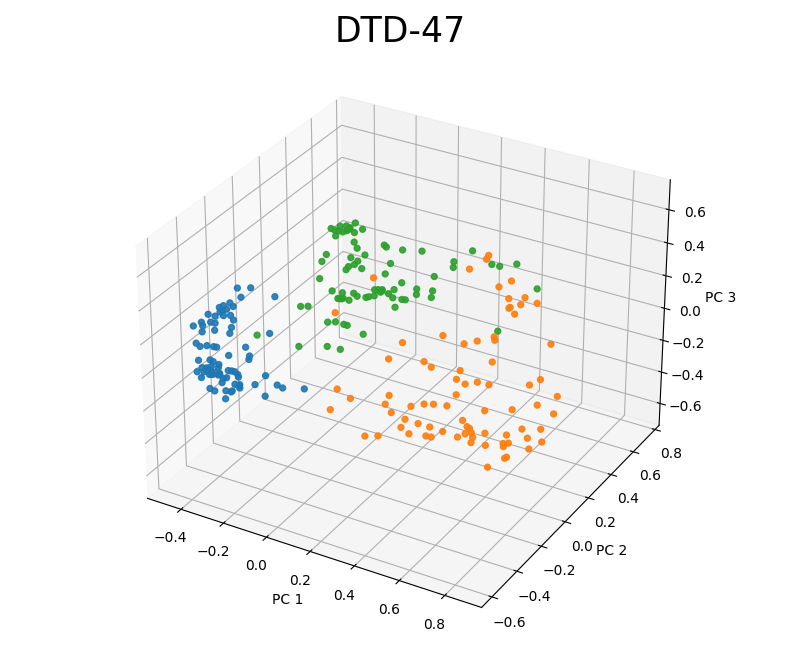}\hfill
    \includegraphics[trim=50pt 40pt 50pt 0pt, clip,width=0.2\linewidth]{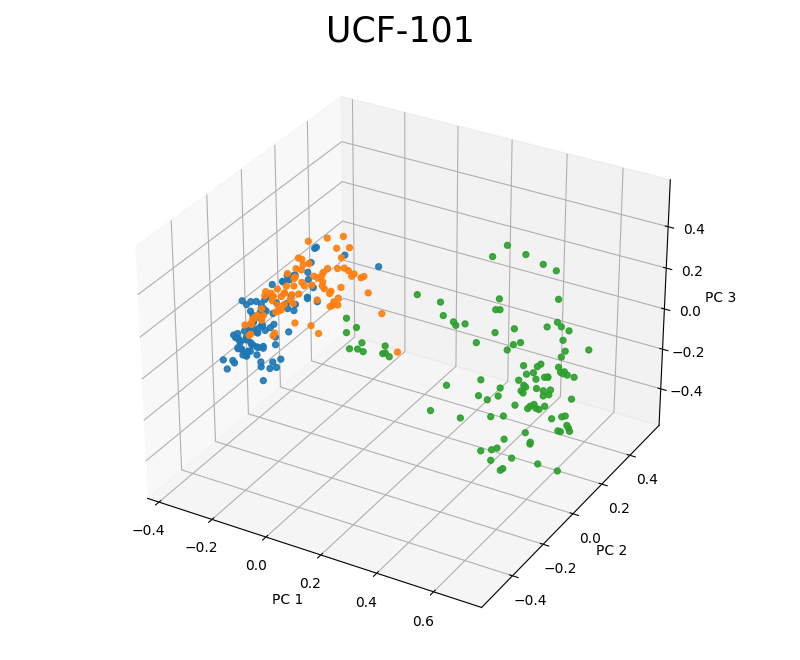}\\[4pt]
    \includegraphics[trim=50pt 40pt 50pt 40pt, clip,width=0.2\linewidth]{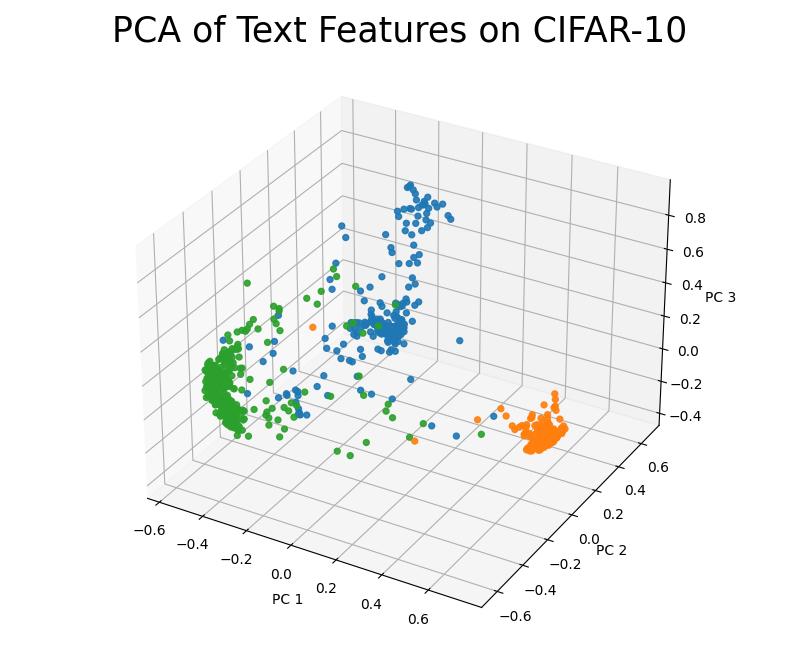}\hfill
    \includegraphics[trim=50pt 40pt 50pt 40pt, clip,width=0.2\linewidth]{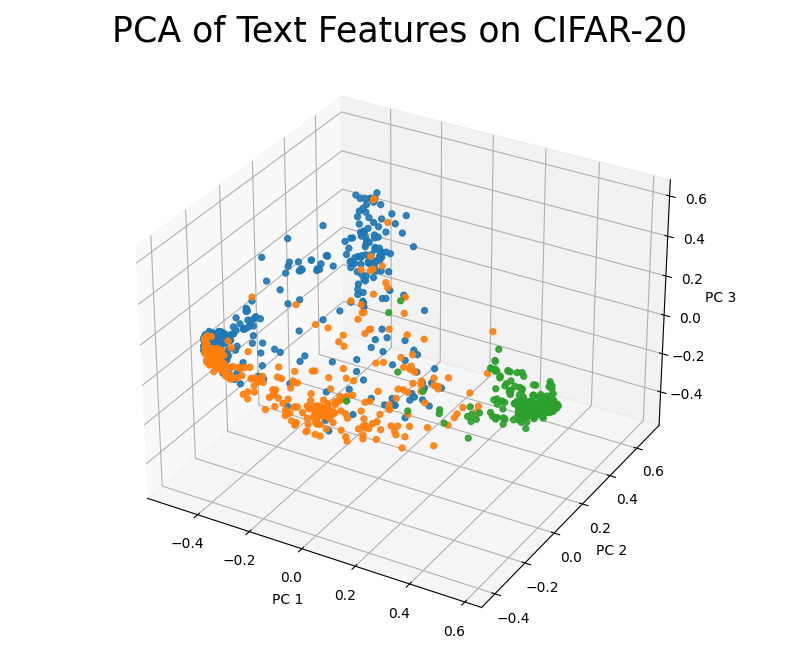}\hfill
    \includegraphics[trim=50pt 40pt 50pt 40pt, clip,width=0.2\linewidth]{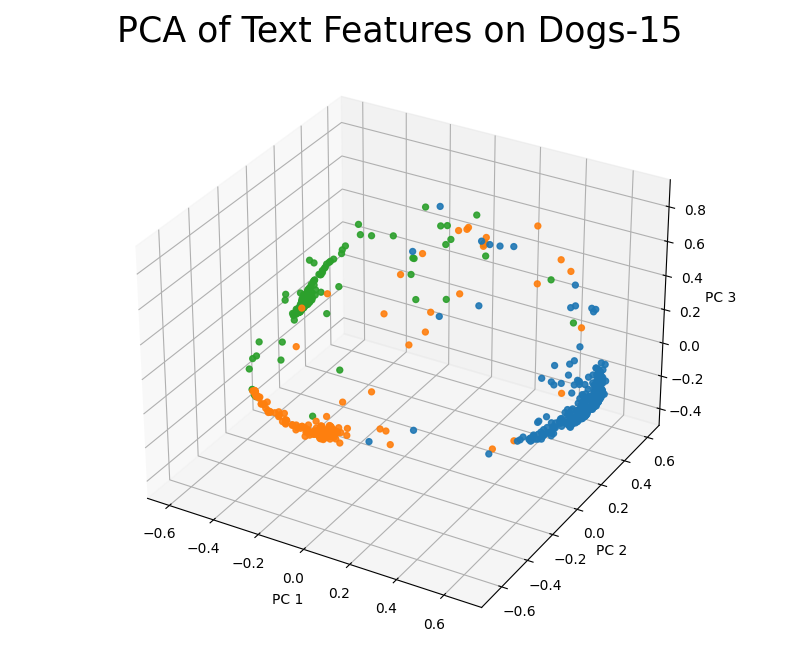}\hfill
    \includegraphics[trim=50pt 40pt 50pt 40pt, clip,width=0.2\linewidth]{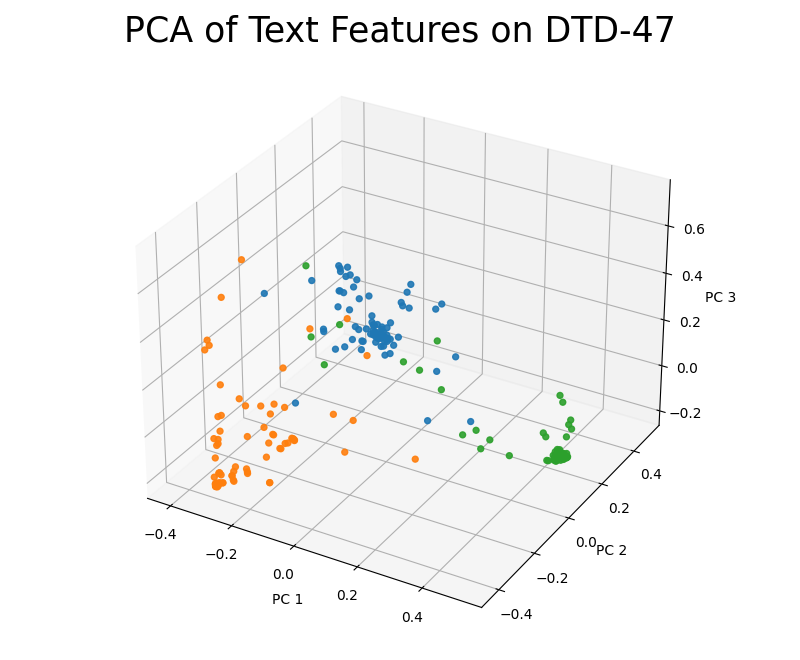}\hfill
    \includegraphics[trim=50pt 40pt 50pt 40pt, clip,width=0.2\linewidth]{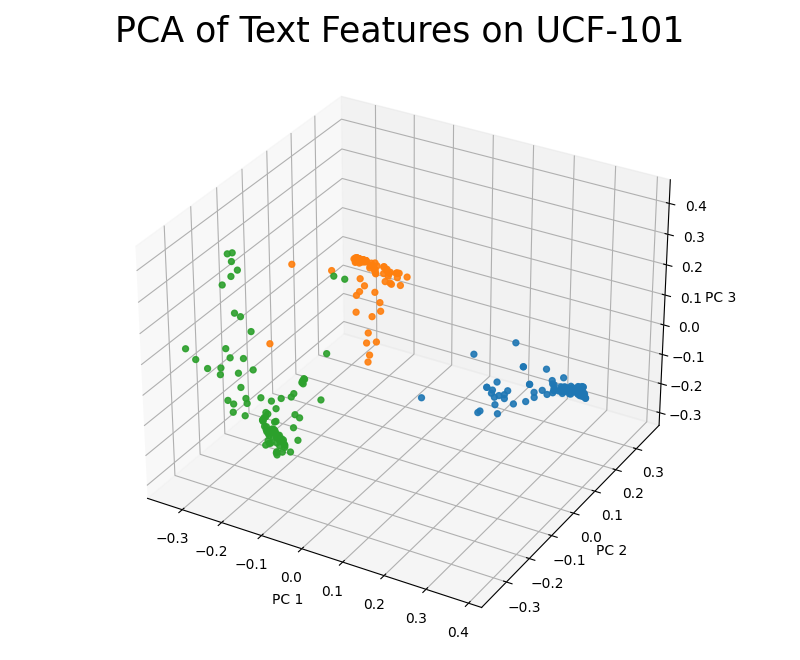}%
    }%
    
    \vspace{6pt}
    \parbox{\linewidth}{\footnotesize\normalfont\centering\textbf{(a)} TAC image (top) and text (bottom) representations}
    \vspace{0.15in}
    
    \parbox{\linewidth}{\centering
    \includegraphics[trim=50pt 40pt 50pt 0pt, clip,width=0.2\linewidth]{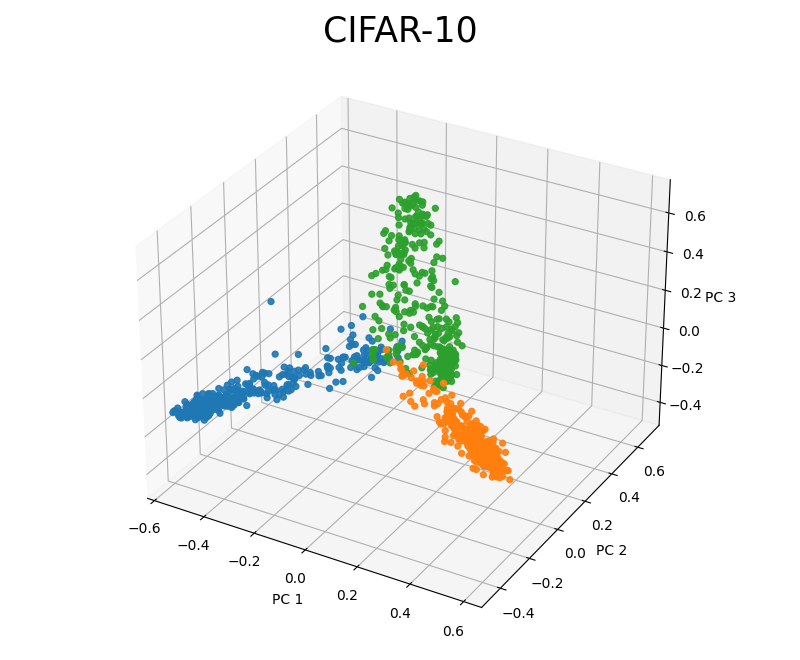}\hfill
    \includegraphics[trim=50pt 40pt 50pt 0pt, clip,width=0.2\linewidth]{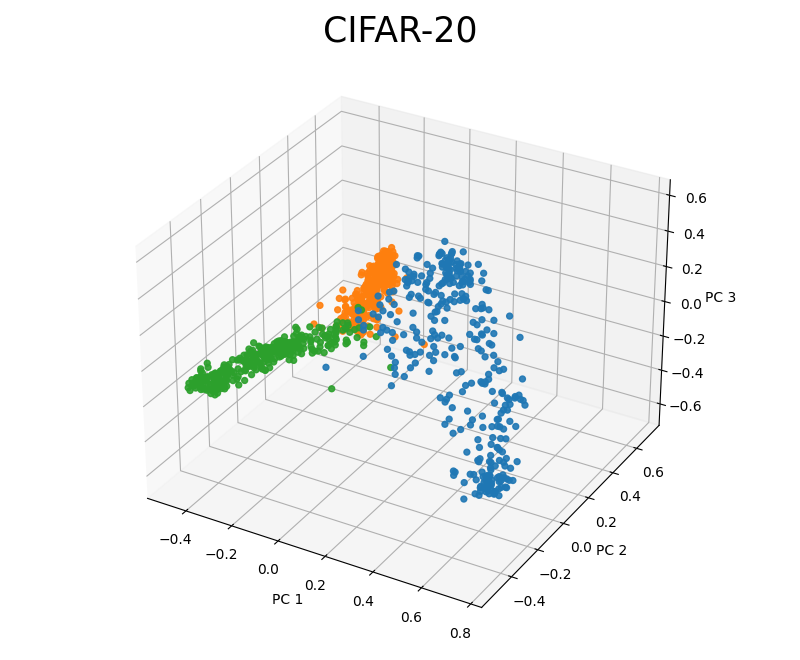}\hfill
    \includegraphics[trim=50pt 40pt 50pt 0pt, clip,width=0.2\linewidth]{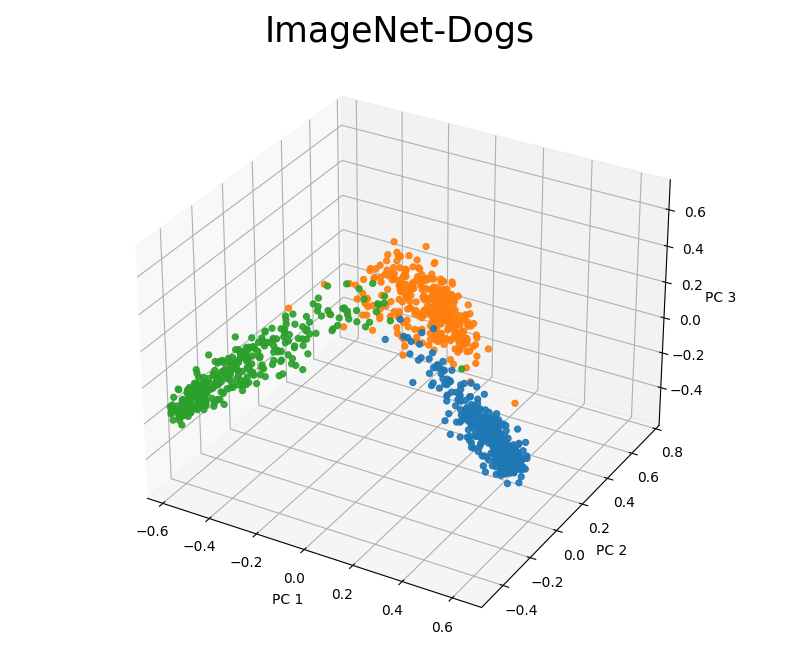}\hfill
    \includegraphics[trim=50pt 40pt 50pt 0pt, clip,width=0.2\linewidth]{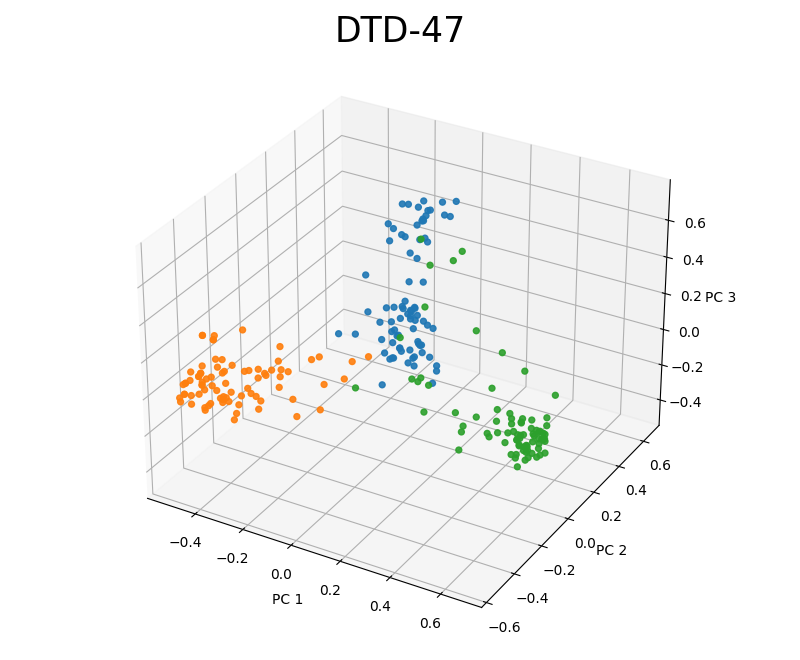}\hfill
    \includegraphics[trim=50pt 40pt 50pt 0pt, clip,width=0.2\linewidth]{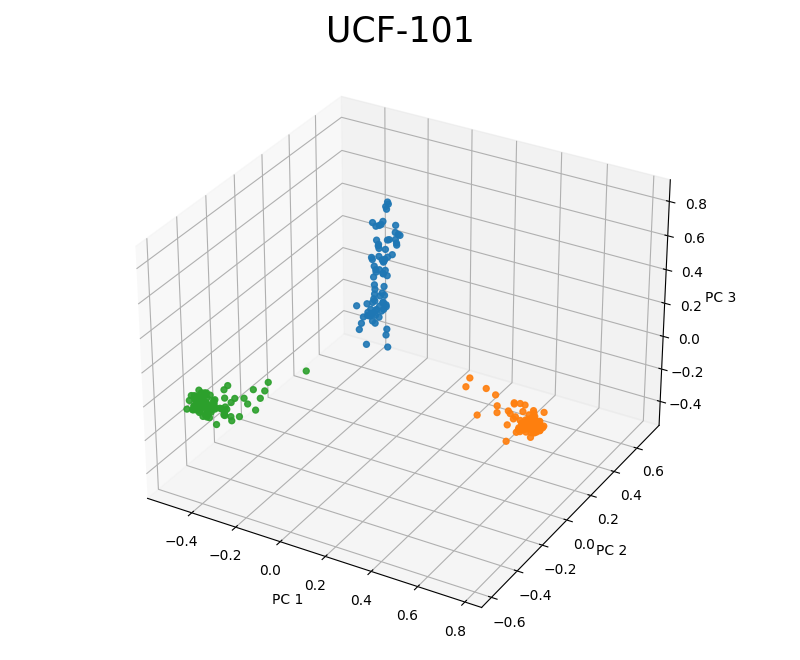}\\[4pt]
    \includegraphics[trim=50pt 40pt 50pt 40pt, clip,width=0.2\linewidth]{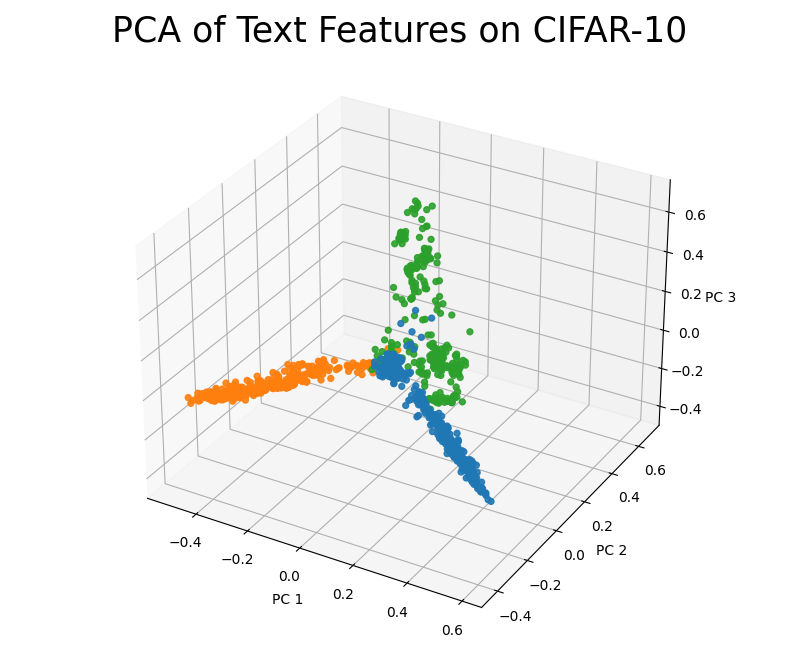}\hfill
    \includegraphics[trim=50pt 40pt 50pt 40pt, clip,width=0.2\linewidth]{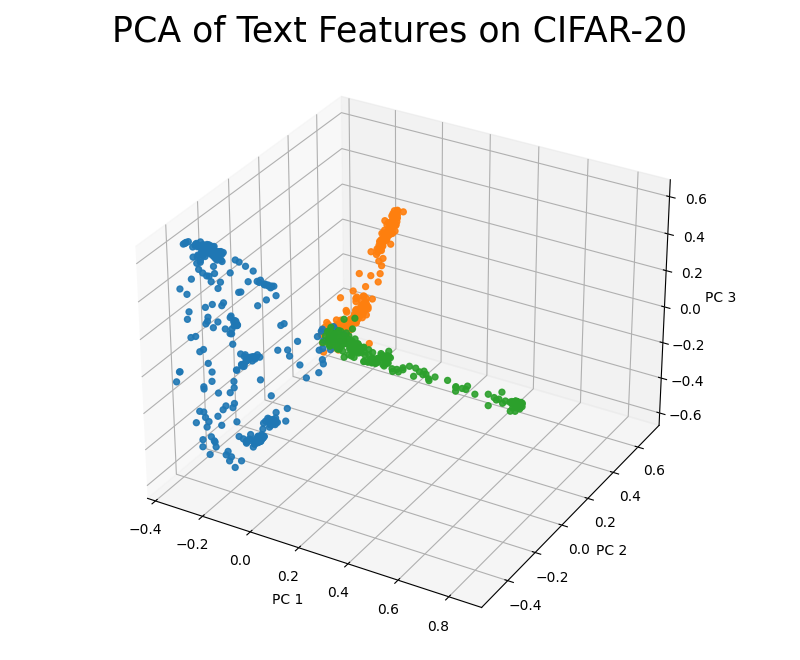}\hfill
    \includegraphics[trim=50pt 40pt 50pt 40pt, clip,width=0.2\linewidth]{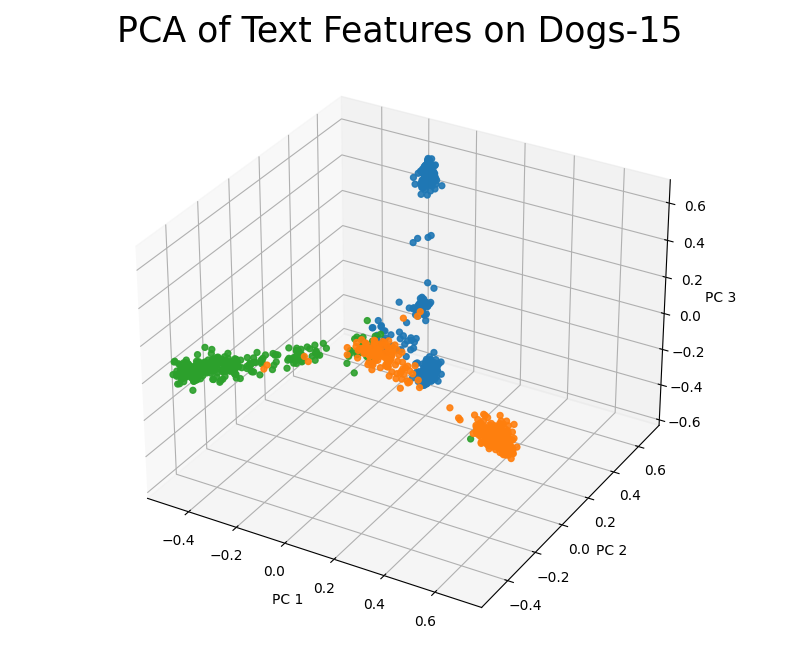}\hfill
    \includegraphics[trim=50pt 40pt 50pt 40pt, clip,width=0.2\linewidth]{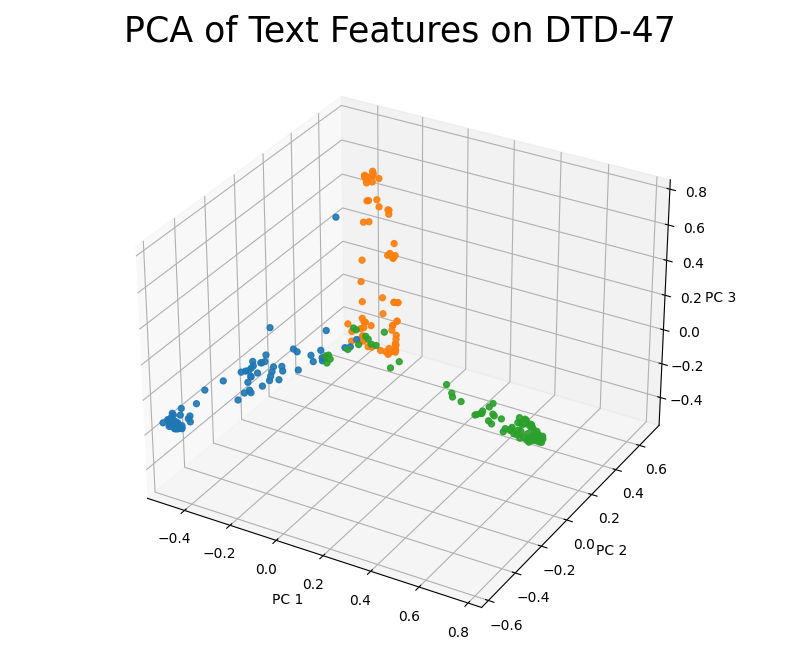}\hfill
    \includegraphics[trim=50pt 40pt 50pt 50pt, clip,width=0.2\linewidth]{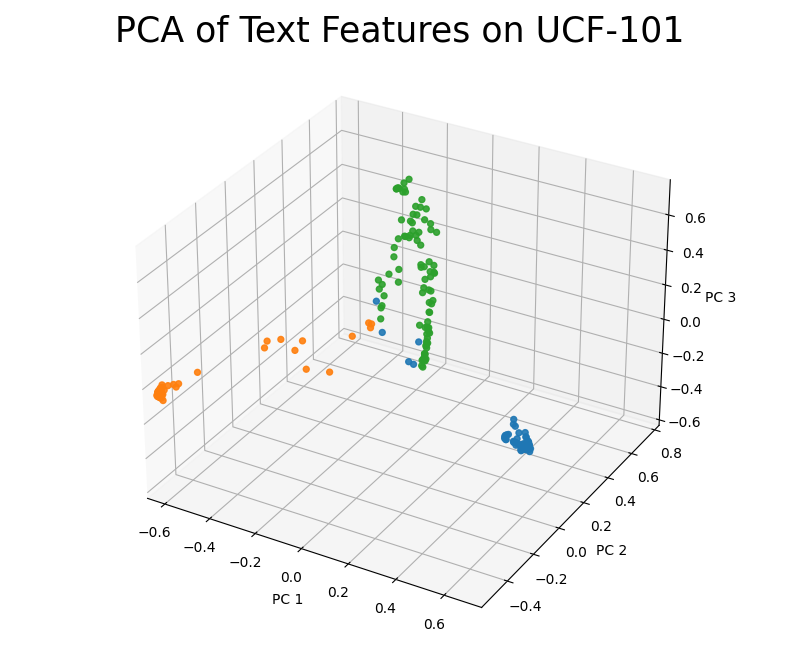}%
    }%
    
    \vspace{6pt}
    \parbox{\linewidth}{\footnotesize\normalfont\centering\textbf{(b)} DeepMORSE image (top) and text (bottom) representations}
    
\caption{\textbf{Visualization of image and text representations via PCA.}}
\label{fig:PCA}
\end{figure*}

\subsection{Main Results}
\label{Sec:clustering_result}

\begin{figure}[!t]
    \centering
    \includegraphics[trim=10pt 10pt 10pt 0pt, clip,width=0.49\linewidth]{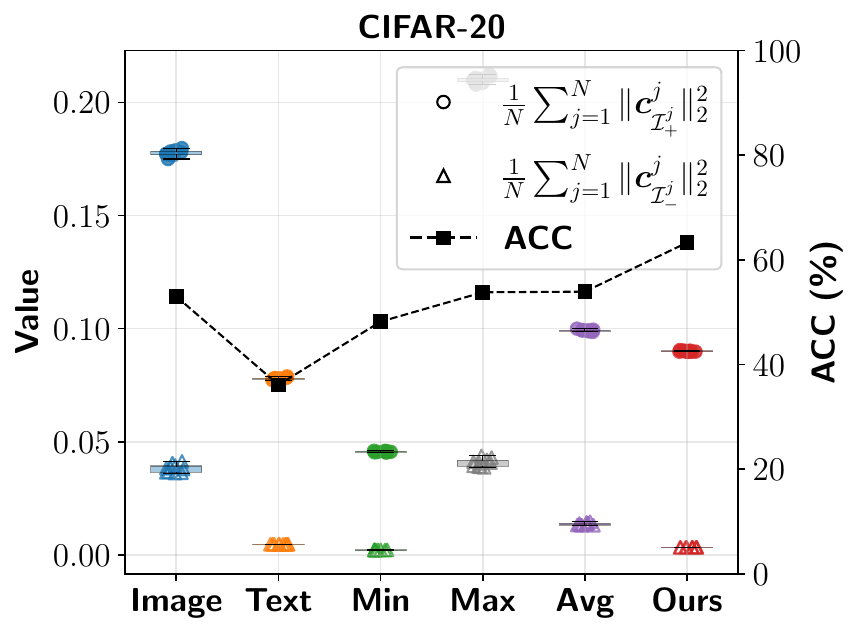}\hfill
    \includegraphics[trim=10pt 10pt 10pt 0pt, clip,width=0.49\linewidth]{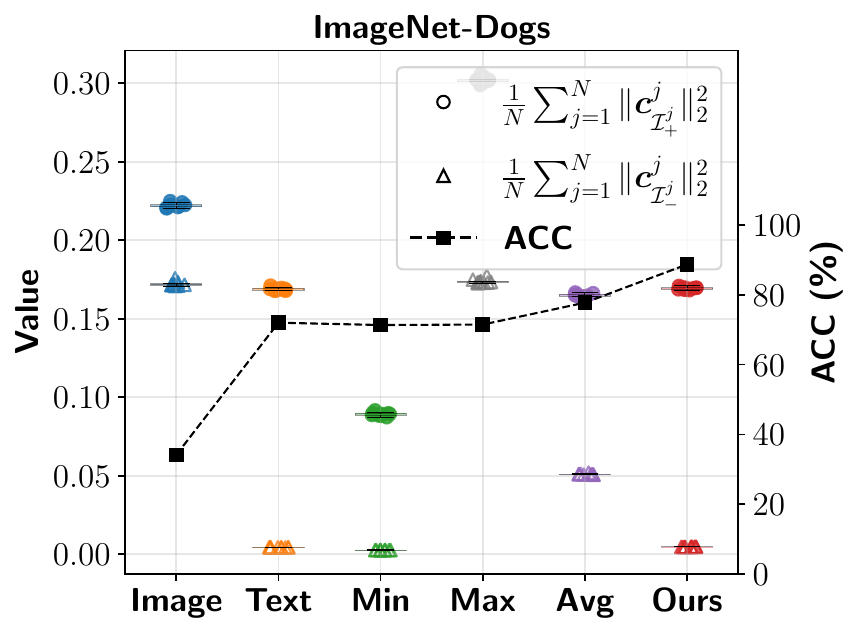}\\
    \vskip 0.1in
    \includegraphics[trim=10pt 0pt 10pt 0pt, clip,width=0.49\linewidth]{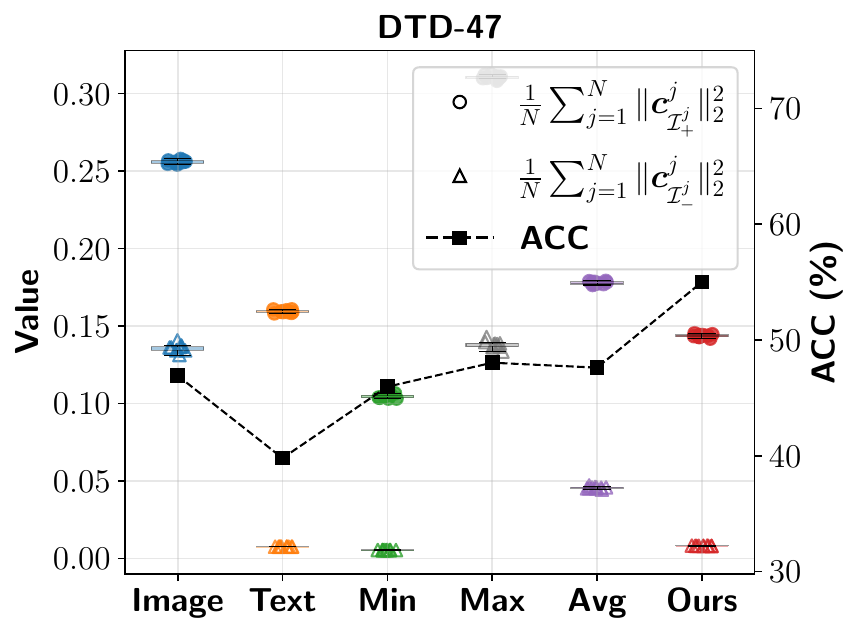}\hfill
    \includegraphics[trim=10pt 0pt 10pt 0pt, clip,width=0.49\linewidth]{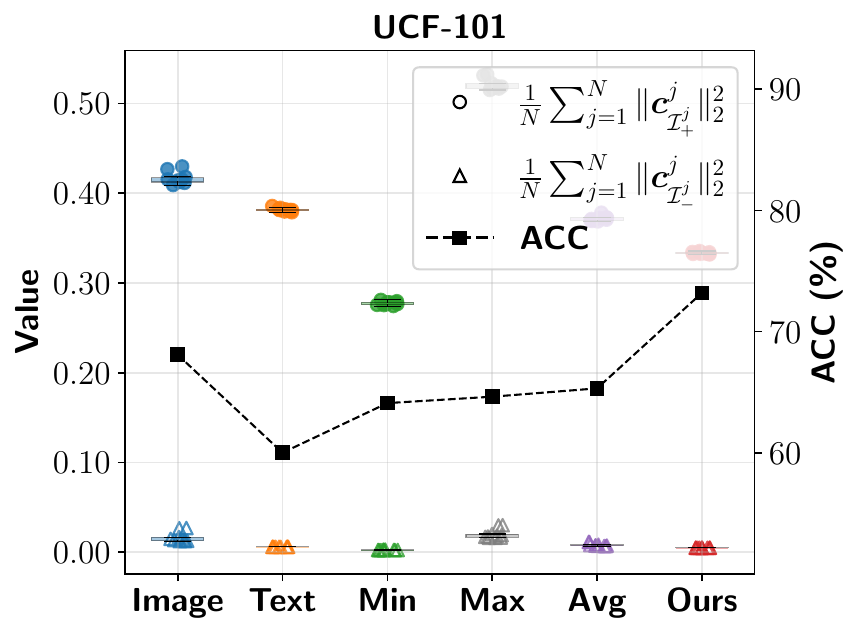}
\caption{\textbf{Illustration of the average intra/inter-class self-expressive coefficients along with clustering accuracy.}}
\label{fig:supp_C}
\end{figure}

\myparagraph{Clustering Performance}
We conduct experiments on six image clustering benchmarks, including CIFAR-10~\cite{Krizhevsky:2009-CIFAR}, CIFAR-20~\cite{Krizhevsky:2009-CIFAR},
ImageNet-Dogs~\cite{Chang:ICCV17-DAC}, DTD-47~\cite{Cimpoi:CVPR14}, UCF-101~\cite{Soomro:arxiv12}, and ImageNet-1k~\cite{Deng:CVPR09}.
We compare DeepMORSE with unimodal deep clustering baselines (\eg, GCC~\cite{Zhong:ICCV21-GCC}, NNM~\cite{Dang:CVPR21-NNM}, SCAN~\cite{Van:ECCV2020-SCAN}, CoKe~\cite{Qian:CVPR22-CoKe}, SeCu~\cite{Qian:ICCV23-SeCu}, PRO-DSC~\cite{Meng:ICLR25}) as well as multimodal methods (\eg, SIC~\cite{Cai:AAAI23-SIC}, TAC~\cite{Li:ICML24-TAC}, MCA~\cite{Qiu:AAAI24}, GradNorm~\cite{Peng:ICCV25}, CAE~\cite{Zhu:AAAI26}).\footnote{Since that GradNorm and CAE are training-free methods, we compare DeepMORSE to TAC for most cases.}
For CLIP$+$$k$-means and CLIP$+$Spectral Clustering (SC), we compute the image embeddings with pretrained CLIP image encoder and then directly run $k$-means or spectral clustering algorithm on the obtained embeddings, respectively.

As shown in Table~\ref{tab:tab1}, DeepMORSE achieves state-of-the-art clustering performance except for CIFAR-10. 
Specifically, it improves clustering accuracy by $3.8\%$, $3.2\%$, and $4.9\%$ on DTD-47, UCF-101, and ImageNet-Dogs, respectively.
In addition, since PRO-DSC is a deep self-expressive model for unimodal image clustering, the improvements of DeepMORSE over PRO-DSC provide clear evidence that leveraging textual information can %
effectively %
enhance %
image clustering.

\myparagraph{Visualization via PCA}
To visualize the structure of representations learned by TAC~\cite{Li:ICML24-TAC} and our DeepMORSE, we %
display the 
dimension reduction results via PCA of the learned representations for each modality on five benchmark datasets.
Note that PCA preserves the global structure of the representations as it performs the %
dimension reduction by %
a \emph{linear} projection. %
As shown in Figure~\ref{fig:PCA}, the representations learned by our DeepMORSE (in rows 3 and 4) approximately form a union of orthogonal low-dimensional subspaces, where the representations from each class are concentrated within a distinct subspace. For TAC (in rows 1 and 2), the projected embeddings exhibit a variety of structures that cannot be easily approximated by a union of centroids, subspaces, or manifolds.

\myparagraph{DeepMORSE Suppresses Inter-Class Coefficients} 
Proposition~\ref{proposition:1} demonstrates that the modality-shared self-expressive coefficients provably suppress inter-class self-expressive coefficients compared to modality-specific solutions. We now empirically verify this statement.
Specifically, we compare the shared self-expressive matrix learned by our DeepMORSE against those learned separately on each modality, in terms of the per-sample intra-class coefficients $\frac{1}{N}\sum_{j=1}^N\|\vc^j_{\mathcal{I}_{+}^j}\|_2^2$, the per-sample inter-class coefficients $\frac{1}{N}\sum_{j=1}^N\|\vc^j_{\mathcal{I}_{-}^j}\|_2^2$, and the resulting clustering accuracy.
We also include ``post-fusion'' baselines obtained by integrating the modality-specific coefficients via element-wise minimum, maximum, and average operations, respectively.
As shown in Figure~\ref{fig:supp_C}, %
the inter-class coefficients produced by our DeepMORSE %
are the smallest among all methods. This validates the theoretical finding that modality-shared self-expression suppresses inter-class noise. 
At the same time, its intra-class coefficients remain comparably strong, which %
account for the satisfactory clustering accuracy achieved by our DeepMORSE.

\begin{figure}[tbp]
    \centering
    \includegraphics[trim=0pt 0pt 0pt 0pt, clip,width=0.48\linewidth]{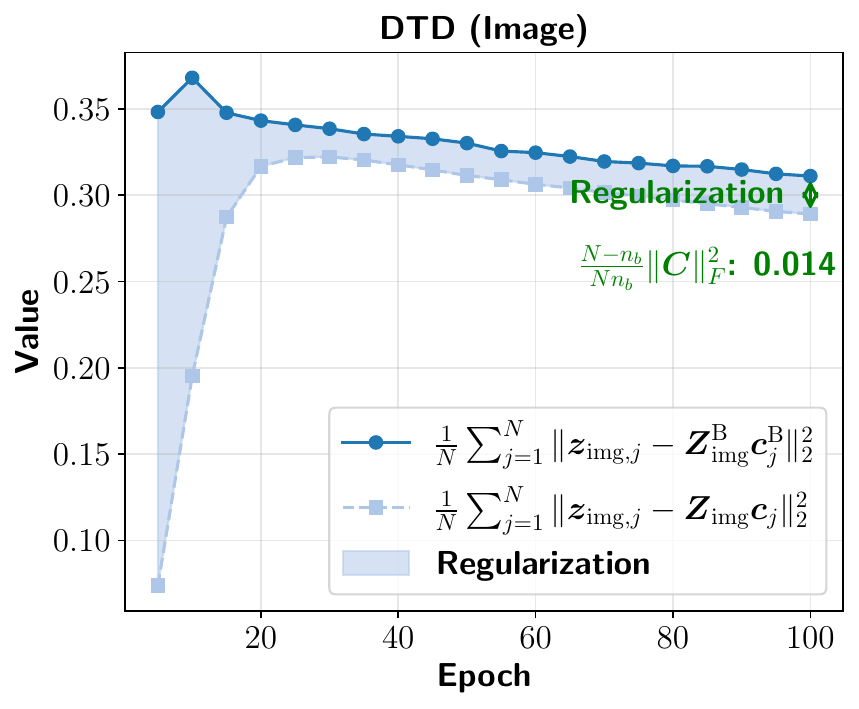}\hfill
    \includegraphics[trim=0pt 0pt 0pt 0pt, clip,width=0.48\linewidth]{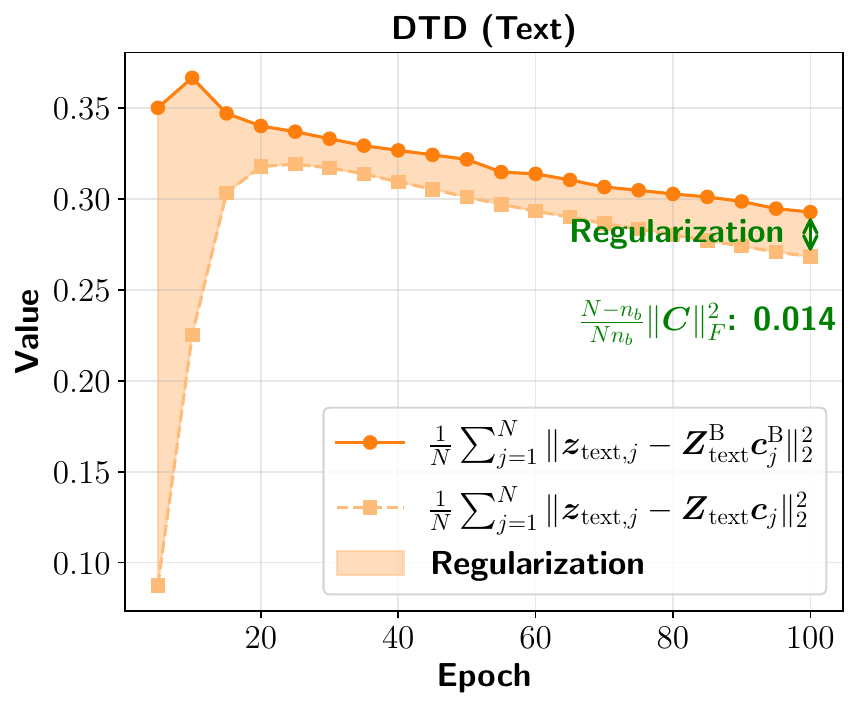}\\
    \includegraphics[trim=0pt 0pt 0pt 0pt, clip,width=0.48\linewidth]{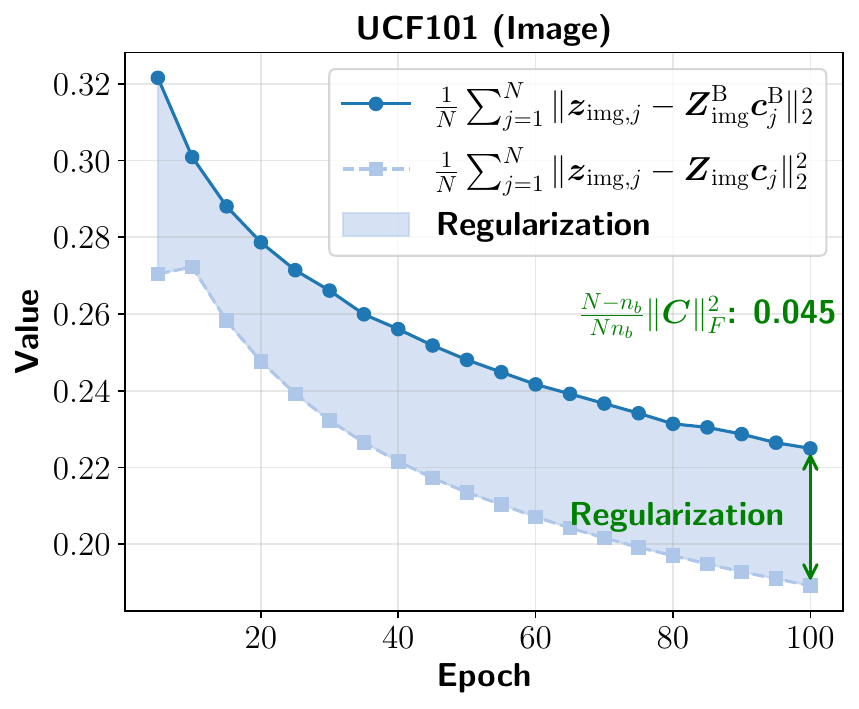}\hfill
    \includegraphics[trim=0pt 0pt 0pt 0pt, clip,width=0.48\linewidth]{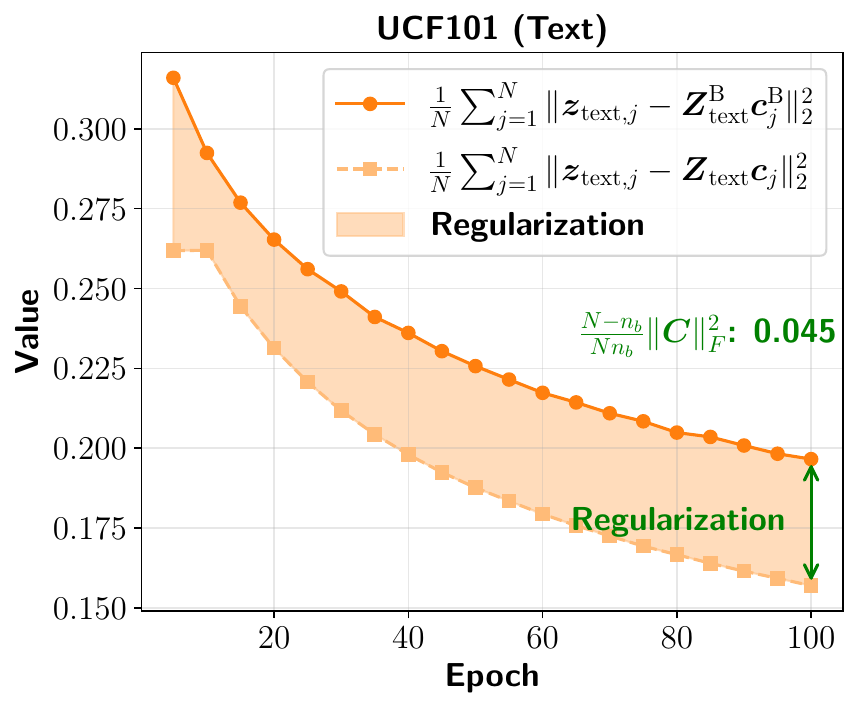}
\caption{\textbf{Validating Implicit regularization %
induced by SGD in mini-batch.}}
\label{fig:propo2}
\vskip 0.12 in
\end{figure}

\myparagraph{Implicit Regularization in Mini-batch Training}
To empirically verify the implicit Frobenius-norm regularization induced by the mini-batch training scheme as pointed out in Proposition~\ref{proposition:mini-batch-implicit-reg}, we compare the self-expression loss computed during mini-batch training against that computed using the full training data.
During standard training with batch size $n_b$, the per-sample self-expressive loss for each batch is computed as $\frac{1}{n_b}\|\mZ^\mathcal{B}-\mZ^\mathcal{B}\mC^\mathcal{B}\|_F^2$ for whichever the image or text modality. If we take an average over all mini-batches within each epoch, we have %
the per-sample mini-batch loss %
$\frac{1}{N}\sum_{j=1}^N\|\vz_{\text{img},j}-\mZ_{\text{img}}^\mathcal{B}\vc_j^\mathcal{B}\|_2^2$ and $\frac{1}{N}\sum_{j=1}^N\|\vz_{\text{text},j}-\mZ_{\text{text}}^\mathcal{B}\vc_j^\mathcal{B}\|_2^2$. %
To obtain the self-expressive loss without using mini-batch subsampling scheme, for each iteration, we forward the entire training set through the network to obtain the full representations $\mZ_{\text{img}},\mZ_{\text{text}}\in\mathbb{R}^{d\times N}$, and then compute the complete self-expressive matrix $\mC$ via (\ref{eq:mlp_forward}) and (\ref{eq:compute_C}). The corresponding full-data loss %
$\frac{1}{N}\sum_{j=1}^N\|\vz_{\text{img},j}-\mZ_{\text{img}}\vc_j\|_2^2$ and $\frac{1}{N}\sum_{j=1}^N\|\vz_{\text{text},j}-\mZ_{\text{text}}\vc_j\|_2^2$ are then obtained. %
According to Proposition~\ref{proposition:mini-batch-implicit-reg}, the per-sample loss gap between the mini-batch scheme and the full-data scheme is expected to be %
$\frac{N-n_b}{Nn_b}\|\mC\|_F^2$ for each modality---which is due to the implicit regularization.
We conduct training experiments on datasets DTD-47 and UCF-101 with batch size $n_b=1024$ and display the loss curves during training period in Figure~\ref{fig:propo2}. 
As can be observed, %
the quantitatively %
measured gap between the loss functions %
tends to closely match the theoretical prediction in Proposition~\ref{proposition:mini-batch-implicit-reg} on both datasets when the training is terminated. %
This %
confirms the implicit regularization due to %
the mini-batch training in our DeepMORSE.

\begin{table*}[!t]
  \centering
  \caption{\textbf{Ablation studies.}}
  \resizebox{0.9\linewidth}{!}{
    \begin{tabular}{P{1.2cm}|P{0.6cm}P{0.6cm}P{0.8cm}P{0.8cm}|P{1.3cm}P{1.3cm}P{1.7cm}P{1.3cm}P{1.3cm}}
    \toprule
    \rowcolor{myGray}
    Text & \multicolumn{4}{c|}{Loss function} & \multicolumn{5}{c}{Clustering performance (ACC: \%)} \\
    \rowcolor{myGray} generation & $\mathcal{L}_\text{img}^\text{expr}$   & $\mathcal{L}_\text{text}^\text{expr}$  & $\rho(\mZ_\text{img})$   & $\rho(\mZ_\text{text})$ & CIFAR-10 & CIFAR-20 & Dogs-15 & DTD-47 & UCF-101 \\
    \midrule
    - & \ding{51} &       & \ding{51} &        & 86.7& 58.8& 32.3& 48.1  & 68.0  \\
    \cmidrule(lr){1-1}\cmidrule(lr){2-5}\cmidrule(lr){6-10}
    \multirow{3}{*}{\rotatebox[origin=c]{-30}{Neighbor}}   &  & \ding{51} &  & \ding{51}  & 69.8 & 34.0&  72.2 &  40.0 & 60.1  \\
       & \ding{51} & \ding{51} &  &   & 32.8 & 9.6 & 36.3 & 28.2  & 44.0  \\
       & \ding{51} & \ding{51} & \ding{51} & \ding{51}  & \underline{92.0} & \underline{62.4}& \textbf{88.5}& \underline{54.3}  & \textbf{72.2}  \\
    \cmidrule(lr){1-1}\cmidrule(lr){2-5}\cmidrule(lr){6-10}  &       & \ding{51} &       & \ding{51}  & 71.2 & 34.5 & 72.9 & 40.2  & 60.7  \\
     & \ding{51} & \ding{51} &       &        & 33.0 & 9.4 & 17.2 & 27.0  & 47.8  \\
    \multirow{-3}{*}{\rotatebox[origin=c]{-30}{Sparse}} & \ding{51} & \ding{51} & \ding{51} & \ding{51}  & \textbf{92.3} & \textbf{63.3} & \underline{87.9} & \textbf{54.7}  & \underline{71.9}  \\
    \bottomrule
    \end{tabular}
    }
  \label{tab:ablation}%
\end{table*}%

\begin{figure}[tbp]
    \centering
    \includegraphics[trim=0pt 0pt 0pt 0pt, clip,width=0.48\linewidth]{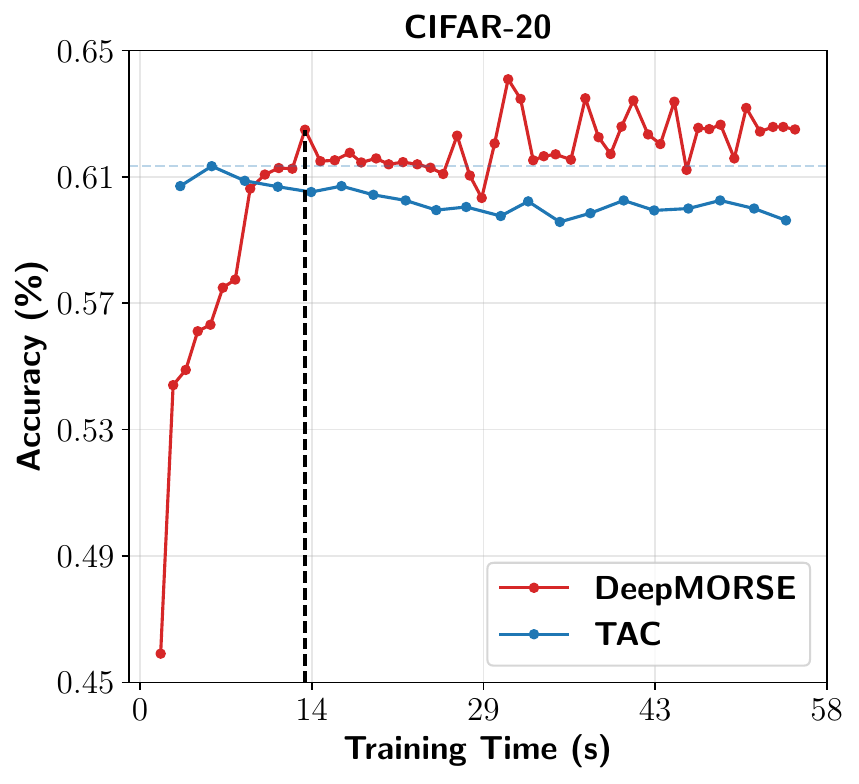}\hfill
    \includegraphics[trim=0pt 0pt 0pt 0pt, clip,width=0.48\linewidth]{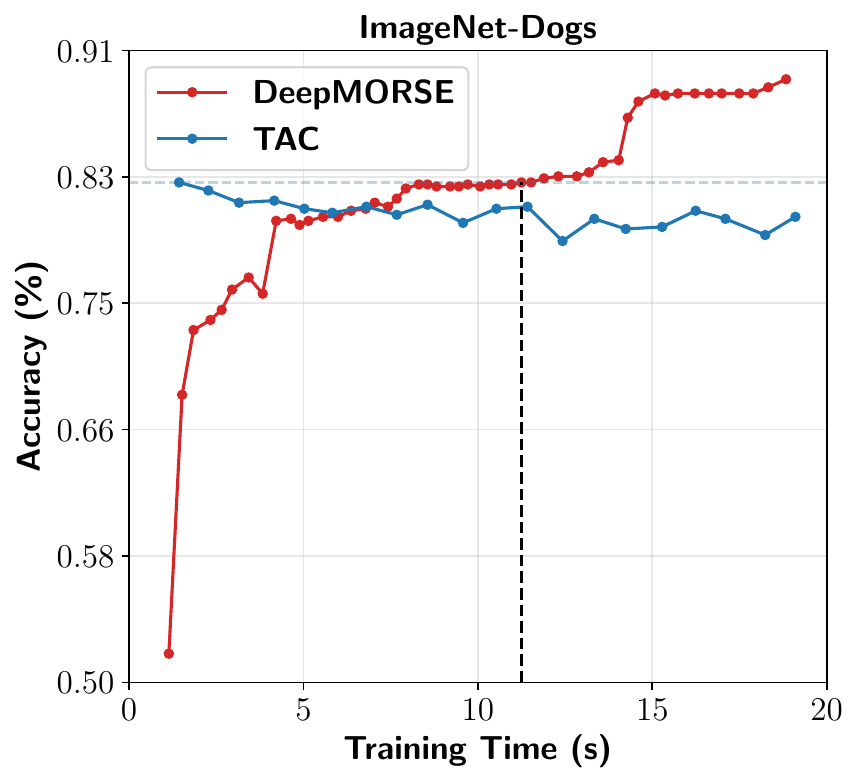}\\
    \vskip 0.04in
    \includegraphics[trim=0pt 0pt 0pt 0pt, clip,width=0.48\linewidth]{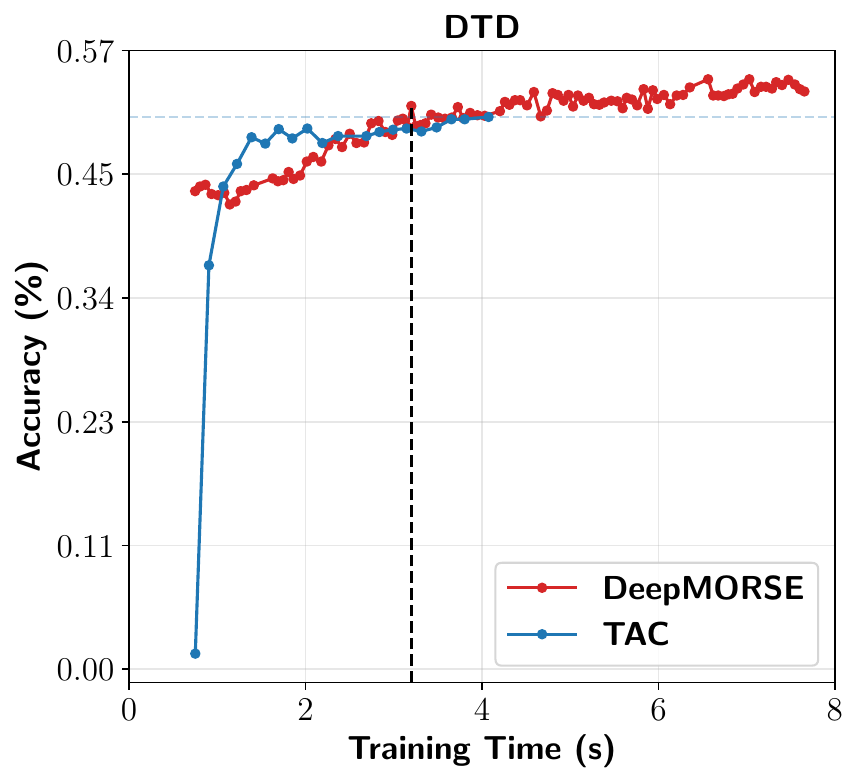}\hfill
    \includegraphics[trim=0pt 0pt 0pt 0pt, clip,width=0.48\linewidth]{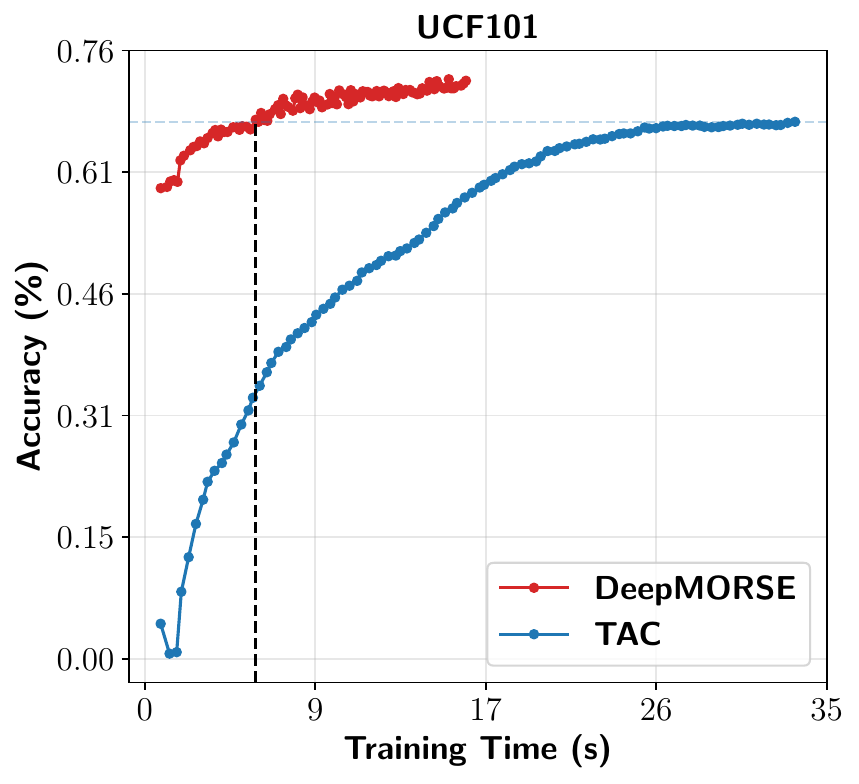}
\caption{\textbf{Clustering accuracy of DeepMORSE and TAC at different training time.} Red: DeepMORSE; blue: TAC.}
    \vskip 0.09in
\label{fig:time}
\end{figure}

\myparagraph{Ablation Studies}
To evaluate the contribution of each component in our DeepMORSE, we conduct a set of ablation studies and report the results in Table~\ref{tab:ablation}.
First, we evaluate unimodal variants by removing the loss from the other modality.\footnote{Unimodal self-expressive matrices are defined as $\mC_\text{img}\coloneqq\mathcal{P}(\mZ_\text{img}^\top\mZ_\text{img})$ and $\mC_\text{text}\coloneqq\mathcal{P}(\mZ_\text{text}^\top\mZ_\text{text})$, with $\gamma$ selected from $\{0.5,1,5,10,50,100,150,300,500\}$.}
As shown in first, second and fifth rows, %
the clustering performance degrades sharply when only using a single modality.
Removing the representation regularization terms further causes severe degeneration (see third row and sixth row), %
which is consistent with the collapse problem reported in~\cite{Haeffele:ICLR21,Meng:ICLR25}.
Finally, when replacing our textual counterpart generation method with neighborhood-based retrieval~\cite{Li:ICML24-TAC} (fourth row), %
the results still surpass those of \cite{Li:ICML24-TAC}, highlighting the robustness of our modality-shared self-expression.

\myparagraph{Complexity Analysis}
Note that by the commutative property $\log\det(\boldsymbol{I}+\Z\Z^\top)=\log\det(\boldsymbol{I}+\Z^\top\Z)$ (see \cite{Ma:TPAMI07}), we can reduce the computation of $\log\det(\cdot)$ from an $N \times N$ matrix to a $d \times d$ matrix. 
This decouples the complexity from the number of samples $N$ and yields a time and space complexity of $\mathcal{O}(d^3)$, where typically $d \ll N$.
Since that the computation of $\log\det(\cdot)$ is the bottleneck in optimizing our DeepMORSE, this property makes the loss evaluation substantially more efficient.
To evaluate the computational efficiency of DeepMORSE, we measure the time cost on a single NVIDIA RTX 5090 GPU and Intel(R) Xeon(R) Platinum 8470Q CPU.
Specifically, we plot the clustering accuracy as a function of training time for both TAC and DeepMORSE in Figure \ref{fig:time}. We also mark the time at which DeepMORSE reaches TAC’s best performance.
As shown, to reach the best clustering accuracy of TAC, DeepMORSE requires only about half of TAC's training time (and less than 20 seconds for each dataset).

\begin{figure}[tbp]
    \centering
    \includegraphics[trim=0pt 15pt 20pt 42pt, clip,width=0.48\linewidth]{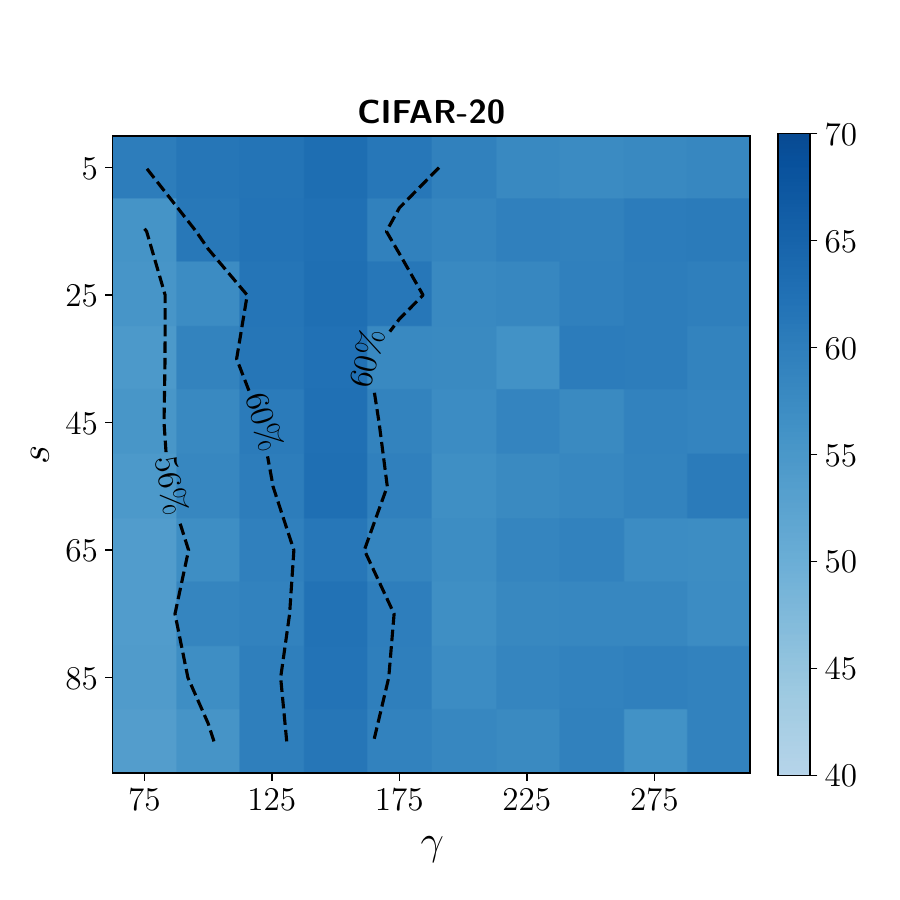}\hfill
    \includegraphics[trim=0pt 15pt 20pt 42pt, clip,width=0.48\linewidth]{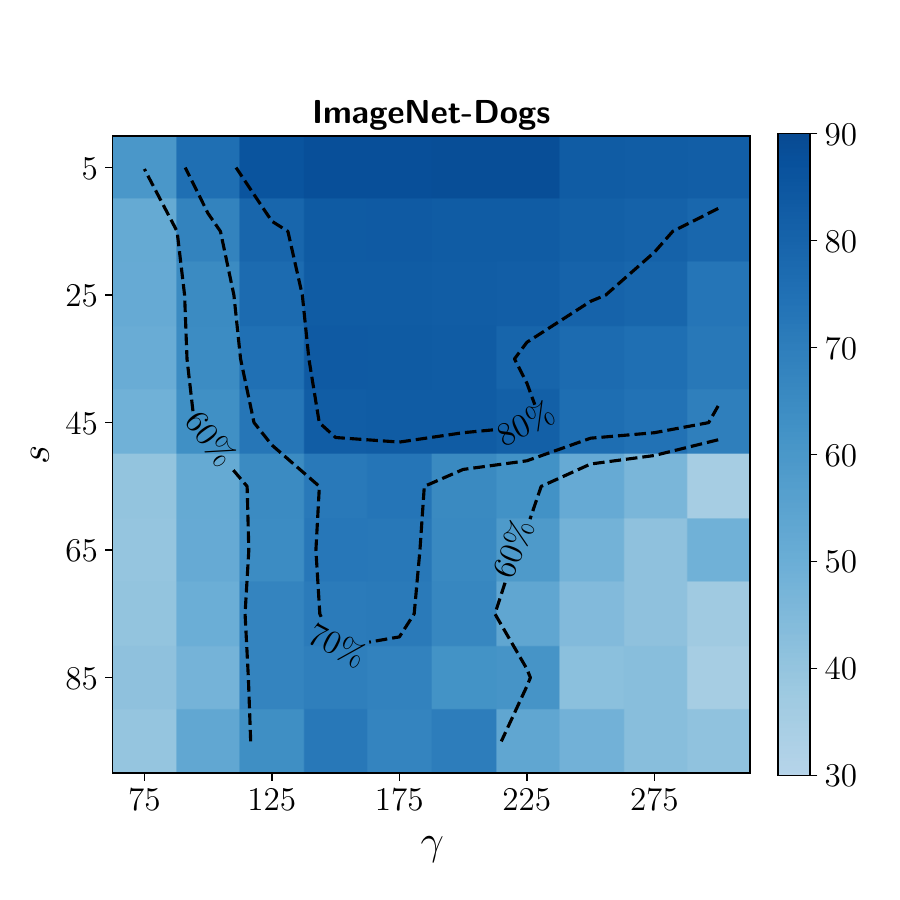}\\
    \vskip 0.01in
    \includegraphics[trim=0pt 15pt 20pt 32pt, clip,width=0.48\linewidth]{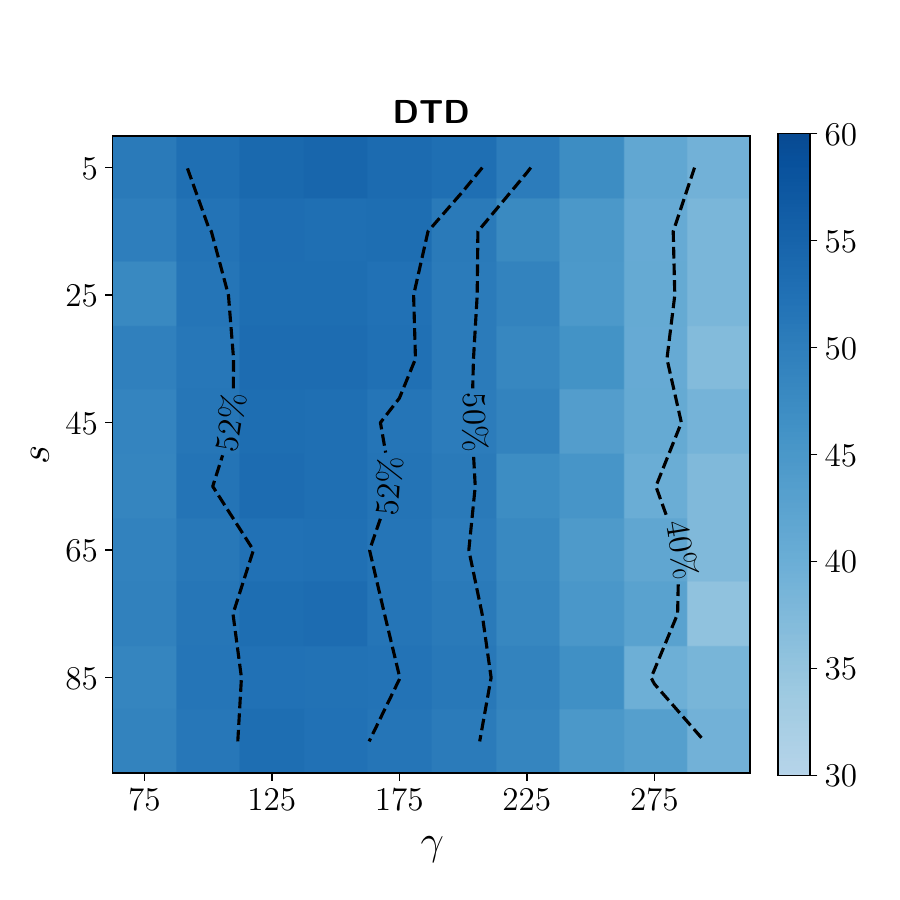}\hfill
    \includegraphics[trim=0pt 15pt 20pt 32pt, clip,width=0.48\linewidth]{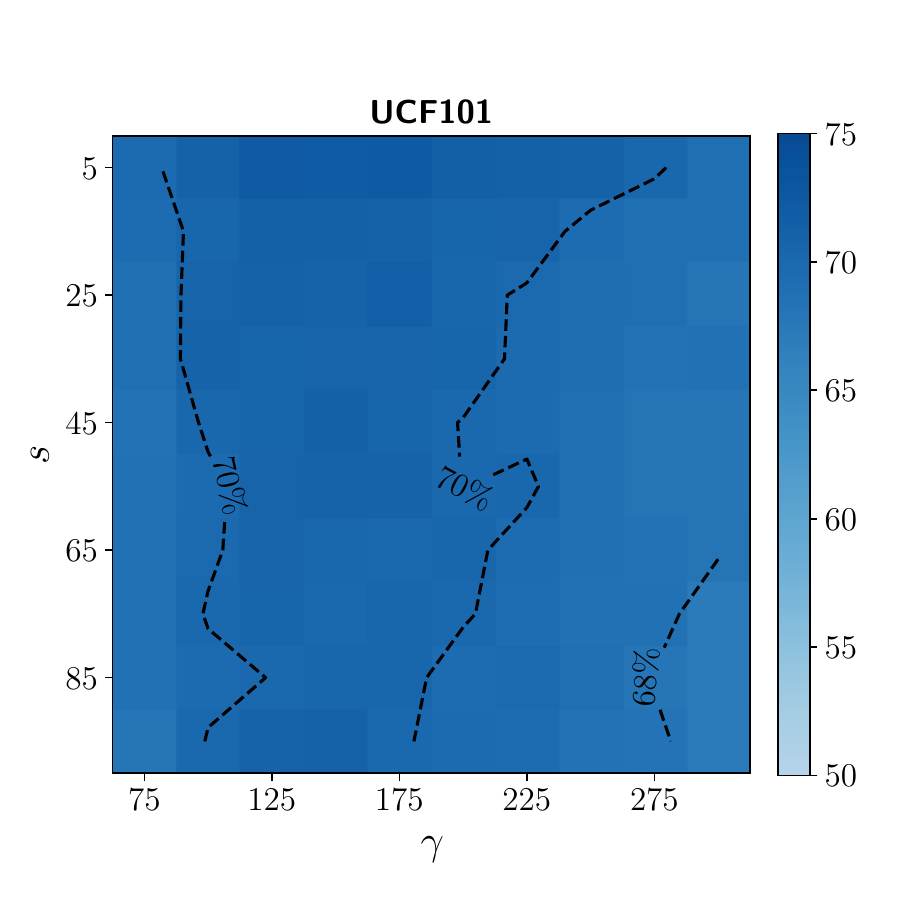}
\caption{\textbf{Clustering accuracy of DeepMORSE %
with respect to $\gamma$ and $s$.}}
\label{fig:sensitivity}
\end{figure}

\myparagraph{Sensitivity of Hyperparameters}
To discover the influence of hyperparameters on clustering performance, we conduct a set of experiments with varying $s$ and $\gamma$ and illustrate the results in Figure~\ref{fig:sensitivity}.
For each hyperparameter setting, the experiments are repeated for five trials %
and report the average %
clustering accuracy.
As can be observed, the clustering performance of DeepMORSE remains stable across different settings of $s$ and $\gamma$, %
showing its robustness to hyperparameter choices.

\begin{table}[!t]
    \centering
    \caption{\textbf{Learning Expressive Matrix $\mC$ via SENet.}}
    \label{tab:SENet}
    \resizebox{\linewidth}{!}{
    \begin{tabular}{l|ccccc}
    \toprule
    \rowcolor{myGray} & \multicolumn{5}{c}{Clustering result (ACC: \%)}\\
    \rowcolor{myGray}  \multirow{-2}{*}{$\mC$} & CIFAR-10 & CIFAR-20 & Dogs-15 & DTD-47 & UCF-101 \\
    \cmidrule(lr){1-1}\cmidrule(lr){2-6}
    $\mathcal{T}_b(\boldsymbol{Q}_\text{mix}^{\top}\boldsymbol{K}_\text{mix})$ & 91.7  & 53.9    & 76.9  & 51.8  & \textbf{71.9} \\
    $\mathcal{P}(\mZ^\top_\text{mix}\mZ_\text{mix})$ & \textbf{92.3}  & \textbf{63.3}  & \textbf{87.9}  &  \textbf{54.7}   &  \textbf{71.9} \\
    \bottomrule
    \end{tabular}%
    }
\end{table}

\myparagraph{Learning $\mC$ via SENet}
To further investigate the effect of the ``signed'' Sinkhorn-Knopp projection in obtaining the modality-shared self-expressive matrix, we design a variant that replaces the Sinkhorn-based coefficients of DeepMORSE with the strategy of SENet~\cite{Zhang:CVPR21-SENet}.
Specifically, we retain the learnable transformations $\f$ and $\g$ to produce modality-specific representations $\vz_\text{img}$ and $\vz_\text{text}$, and introduce two separate MLPs---a query net and a key net---following SENet's architecture.
For each mini-batch, we first compute the modality-specific query embeddings of the batch samples and the key embeddings of all training data via these networks, and then average the embeddings of the two modalities to obtain modality-shared representations $\boldsymbol{Q}_\text{mix}$ and $\boldsymbol{K}_\text{mix}$.
The modality-shared self-expressive matrix is then obtained by applying the adaptive soft-thresholding operation on the inner product of the modality-shared representations, \ie, $\mathcal{T}_b(\boldsymbol{Q}_\text{mix}^{\top}\boldsymbol{K}_\text{mix})$.
Experimental results are reported in Table~\ref{tab:SENet}. As can be read that, the %
self-expressive matrix obtained via the ``signed'' Sinkhorn-Knopp projection consistently outperforms the %
SENet based variant method %
on all datasets except UCF-101. 

\begin{table}[!t]
    \centering
    \caption{\textbf{Variants of learnable transformations.}}
    \label{tab:f_g_necessity}
    \resizebox{\linewidth}{!}{
    \begin{tabular}{l|ccccc}
    \toprule
    \rowcolor{myGray} & \multicolumn{5}{c}{Clustering result (ACC: \%)}\\
    \rowcolor{myGray}  \multirow{-2}{*}{$\f,\g$} & CIFAR-10 & CIFAR-20 & Dogs-15 & DTD-47 & UCF-101 \\
    \cmidrule(lr){1-1}\cmidrule(lr){2-6}
    N/A & 78.2  & 53.0    & 61.7  & 45.8  & 60.3 \\
    Linear & \underline{92.1}  & 59.2  & 82.3  & \underline{54.4}  & 70.8 \\
    MLP (2 layers) & \textbf{92.3}  & \underline{63.3}  & \underline{87.9}  & \textbf{54.7}    & \underline{71.9} \\
    MLP (3 layers) & \textbf{92.3}  & \textbf{63.5}  & \textbf{88.4}  &  53.5   &  \textbf{73.5} \\
    \bottomrule
    \end{tabular}%
    }
\end{table}

\subsection{Other Evaluations}
\label{sec:Evaluations_on_Representations}

\myparagraph{Evaluation on Using other Learnable Transformations} %
To validate the necessity of involving a set of learnable nonlinear transformations $\f,\g$, we conducted an ablation study with different transformation variants: 
a) without using $\f,\g$, \ie, the self-expressive model is learned directly on the frozen CLIP features; 
b) using linear $\f,\g$, \ie, $\f,\g$ are replaced by learnable linear projections; and 
c) using nonlinear $\f,\g$, \ie, using nonlinear transformations with different layers.
For each variant, we tune the parameter $\gamma\in\{0.5, 1, 5, 10, 50, 100, 150, 300, 500\}$ and report the best average clustering accuracy after repeating experiments with 5 random seeds. Experimental results are reported in Table \ref{tab:f_g_necessity}. 
As can be read, %
the performance without using $\f,\g$ degenerates significantly %
compared to DeepMORSE. This confirms that the learned transformations are indeed necessary. 
Interestingly, the performance without using $\f,\g$ %
still outperforms the result of CLIP+$k$-means in Table~\ref{tab:tab1}, indicating that the raw features are better approximated by a union of subspaces rather than by centroids.
When linear transformations are used, the clustering performance improves substantially over the result of ``without $\f,\g$'', indicating that even a linear projection can approximately transform data onto a union of subspaces, thereby enhancing the clustering performance.
When nonlinear transformations are adopted, we obtain the best performance across all datasets. If we further increase the number of layers, %
marginal performance improvements can be observed. This suggests that %
using deep nonlinear transformations is able to effectively linearize and separate the subspaces in our DeepMORSE framework.

\begin{table}[tb]
    \centering
    \caption{\textbf{MLLM for textual counterpart generation.}}
    \label{tab:MLLM}
    \resizebox{\linewidth}{!}{
    \begin{tabular}{l|ccccc}
        \toprule
        \rowcolor{myGray} Text & \multicolumn{5}{c}{Clustering result (ACC: \%)}\\
        \rowcolor{myGray}  generation  & CIFAR-10 & CIFAR-20 & Dogs-15 & DTD-47 & UCF-101 \\
        \cmidrule(lr){1-1}\cmidrule(lr){2-6}
        MLLM as $\mathcal{T}$ & 77.5  & 50.3  & 82.3  & \textbf{56.6} & \textbf{74.0}\\
        MLLM as $\mathcal{D}$ & \underline{92.0} & \underline{61.0} & \underline{83.6} & \underline{55.5} & \underline{73.9}\\
         WordNet & \textbf{92.3}  & \textbf{63.3} & \textbf{87.9}  & 54.7  & 71.9 \\
        \bottomrule
    \end{tabular}}
\end{table}

\myparagraph{Generating Textual Counterparts Using MLLM}
Note that directly leveraging Multimodal Large Language Models (MLLMs) is an appealing way to enrich textual counterparts for image clustering. %
Here we take Qwen-VL-Plus as a representative of MLLMs and conduct following two sets of experiments: a) using MLLM as $\mathcal{T}$, and b) using MLLM as $\mathcal{D}$. 
In the experiments of using MLLM as $\mathcal{T}$, for each image, we prompt Qwen-VL-Plus to generate a descriptive caption, then wrap this caption into a CLIP-style prompt (\eg, ``a photo of [caption]'') and encode it with the pretrained CLIP text encoder to obtain textual counterparts. 
In the experiments of using %
MLLM as $\mathcal{D}$, %
we collect the captions for all images in a dataset %
as a dataset-specific, semantically compact dictionary, and %
then generate the textual counterparts from this dictionary and train DeepMORSE exactly as in the original WordNet-based setting of our DeepMORSE.
We report the %
average clustering accuracy over 5 random trials in Table~\ref{tab:MLLM}.
As can be read, %
using MLLM-generated captions yields performance comparable to or even better than the original WordNet-based setup, especially on the DTD-47 dataset, which contain texture images that are beyond the description of WordNet. However, for CIFAR-10 and CIFAR-20, %
the method based on MLLM-generated captions frequently misclassifies images because their resolution is very low ($32\times32$), making reliable caption generation difficult.

\begin{table*}[tbh]
    \centering
    \caption{\textbf{Comparing DeepMORSE to multi-view clustering baselines on multimodal and multi-view datasets.}}
    \label{tab:multi-view}
    \resizebox{\textwidth}{!}{
    \begin{tabular}{l|ccccccccc|ccccccccc}
        \toprule
        \rowcolor{myGray} & \multicolumn{9}{c|}{\textbf{Multimodal clustering datasets}} & \multicolumn{9}{c}{\textbf{Multi-view clustering datasets}}\\ 
        \rowcolor{myGray}  & \multicolumn{3}{c}{ImageNet-Dogs} & \multicolumn{3}{c}{DTD-47} & \multicolumn{3}{c|}{UCF101} & \multicolumn{3}{c}{Scene15} & \multicolumn{3}{c}{LandUse21} & \multicolumn{3}{c}{NUS-WIDE} \\
        \rowcolor{myGray}\multirow{-3}{*}{Method}  & ACC   & NMI   & ARI   & ACC   & NMI   & ARI   & ACC   & NMI   & ARI & ACC   & NMI   & ARI   & ACC   & NMI   & ARI   & ACC   & NMI   & ARI \\
        \cmidrule(lr){1-1}\cmidrule(lr){2-4}\cmidrule(lr){5-7}\cmidrule(lr){8-10}\cmidrule(lr){11-13}\cmidrule(lr){14-16}\cmidrule(lr){17-19}
        SURE~\cite{Yang:TPAMI22} & 77.9 & 77.0 & 66.0 & 49.6 & 60.7 & 32.8 & 65.9 & \underline{82.0} & 57.8 &  41.0 & 43.2 & 25.0& 25.1 & 28.3 & 10.9 & 57.4 & 44.8 & 38.3\\
        GCFAgg~\cite{Yan:CVPR23} & 76.9 & 78.5 & 62.5 & \underline{52.4} & \underline{63.9} & \underline{35.7} & \underline{68.9} & 80.4 & \underline{61.2} &  42.2 & 42.5 & 24.4 & 27.5 & 31.3 & 14.0 & 41.1 & 32.1 & 18.6\\
        DIVIDE~\cite{Lu:AAAI24} & 74.4 & 74.4& 60.0 & 48.7 & 59.9 & 31.2 & 68.1 & 81.5 & 58.3 &  \textbf{49.1} & \textbf{48.7} & \textbf{31.6} & \textbf{32.3} & \textbf{39.7} & \textbf{18.1} & 45.1 & 30.9 & 19.4 \\
        CANDY~\cite{Guo:NeurIPS24-CANDY} & 81.6  & 80.4  & 71.1  & 51.5  & 62.5  & 35.5  & 56.2  & 77.0  & 47.4  & 42.0  & 41.6  & 24.7  & 30.6  & 36.5  & 16.2  & \underline{62.1}  & 49.0  & 37.0  \\
        COPER~\cite{Eisenberg:ICLR25-coper} &  \underline{82.0} & \underline{80.8}  & \underline{72.2}  & 50.1  & 61.3  & 33.8  &  56.5 & 77.8  &  47.6 & 40.7  & 42.0  & 25.0  & 31.0  & 35.9  & 16.1  &  \underline{62.1} & \underline{49.3}  & \underline{37.7} \\
         Ours  & \textbf{87.9}  & \textbf{83.8}  & \textbf{78.3}  & \textbf{54.7}  & \textbf{64.2}  & \textbf{39.3}  & \textbf{71.9}  & \textbf{85.6}  & \textbf{66.7}   & \underline{43.2}  & \underline{46.2}  & \underline{28.0}  & \underline{31.4}  & \underline{37.9}  & \underline{17.4}  & \textbf{63.4}  & \textbf{50.6}  & \textbf{43.2}  \\
        \bottomrule
    \end{tabular}}
\end{table*}

\begin{table*}[tbh]
    \centering
    \caption{\textbf{Extending DeepMORSE beyond vision-language data.} We conduct experiments with Wav2CLIP as the pretrained tri-modal architecture.}
    \label{tab:trimodal}
    \resizebox{0.8\textwidth}{!}{
    \begin{tabular}{ccc|cccccccccc}
        \toprule
        \rowcolor{myGray} &  & & \multicolumn{2}{c}{CIFAR-10} & \multicolumn{2}{c}{CIFAR-20} & \multicolumn{2}{c}{ImageNet-Dogs} & \multicolumn{2}{c}{DTD-47} & \multicolumn{2}{c}{UCF-101} \\
        \rowcolor{myGray} \multirow{-2}{*}{Image} & \multirow{-2}{*}{Text} & \multirow{-2}{*}{Audio}  & ACC   & NMI   & ACC   & NMI   & ACC   & NMI   & ACC   & NMI   & ACC   & NMI \\
        \cmidrule(lr){1-3}\cmidrule(lr){4-13}
        \ding{51} &  &  & 87.3 & 79.6 & 53.1  & 60.9 & 29.3  & 23.9 & 49.0  & 57.0 & 67.9  & 83.1 \\
        &  \ding{51} &  & 77.6 & 60.9 & 34.8  & 34.5 &  69.2 & 69.9 & 39.6  & 51.9 & 60.3 & 76.0 \\
        &   & \ding{51} & 77.4 & 62.5 & 32.0  & 30.6 &  19.9 & 15.5 & 24.1  & 34.6 & 42.1  & 63.3 \\
        \ding{51}& \ding{51} & & \underline{92.3}  & \underline{84.1}  & \underline{63.3}  & \textbf{63.1}  & \underline{87.9}  & \underline{83.8}  & \underline{54.7}  & \underline{64.2}  & \underline{71.9}  & \underline{85.6} \\
        \ding{51} & &\ding{51} & 89.0  & 80.5  & 57.1  & \underline{60.6}  & 30.6  & 27.0  & 44.6  & 54.1  & 67.4  & 83.3  \\
         \ding{51}& \ding{51}& \ding{51} & \textbf{92.6}  & \textbf{84.4}  & \textbf{63.7}  & \textbf{63.1}  & \textbf{88.7}  & \textbf{84.5}  & \textbf{54.9}  & \textbf{64.7}  & \textbf{72.6}  & \textbf{85.9}  \\
        \bottomrule
    \end{tabular}}
\end{table*}

\begin{table*}[htbp]
\centering
\caption{\textbf{Image retrieval performance (mAP: \%) comparing to the state-of-the-art.} The comparative results are directly cited. The image encoder of all methods are based on ViT-B/32.}
\label{tab:image-retrieval}
\resizebox{0.9\linewidth}{!}{%
\begin{tabular}{l|ccccccccccc}
\toprule
\rowcolor{myGray}
Method & Flower & DTD & Pets & Cars & UCF & Caltech & Food & SUN & Aircraft & EuroSAT & ImageNet-1k \\ 
\cmidrule(lr){1-1}\cmidrule(lr){2-12}
CLIP ViT-B/32~\cite{Radford:ICML21-CLIP} & 62.0 & 28.1 & 30.5 & 24.6 & 47.1 & 77.1 & 32.3 & 34.3 & \underline{14.5} & \underline{47.9} & 21.4\\
OTI~\cite{Mistretta:ICLR25} &  \underline{62.6} & \underline{31.9} & \underline{37.5} & \underline{28.0} & \underline{48.6} & \underline{79.9} & \underline{34.7} & \underline{36.3} & 14.4 & 47.2 & \underline{23.8}\\
Our DeepMORSE & \textbf{65.2} & \textbf{32.0} & \textbf{57.7} & \textbf{28.9} & \textbf{53.3} & \textbf{81.1} & \textbf{57.1} & \textbf{42.6} & \textbf{15.6} & \textbf{53.4} & \textbf{30.2}\\
\bottomrule
\end{tabular}%
}
\end{table*}

\begin{table*}[tbp]
\centering
\caption{\textbf{Zero-shot classification performance (ACC: \%) comparing to the state-of-the-art.} The comparative results are directly cited. The image encoder of all methods are based on ViT-B/16.}
\label{tab:zero-shot}
\resizebox{0.9\linewidth}{!}{%
\begin{tabular}{l|ccccccccccc}
\toprule
\rowcolor{myGray}
Method & Flower & DTD & Pets & Cars & UCF & Caltech & Food & SUN & Aircraft & EuroSAT & ImageNet-1k \\ 
\cmidrule(lr){1-1}\cmidrule(lr){2-12}
CLIP ViT-B/16~\cite{Radford:ICML21-CLIP} & 64.4 & 44.3 & 88.3 & 65.5 & 65.1 & 93.4 & 83.7 & 62.6 & 23.7 & 42.0 & 66.7\\
Ensemble \cite{Zhang:ECCV22} & 67.0 & 45.0 & 86.9 & 66.1 & 65.2 & 93.6 & 82.9 & 65.6 & 23.2 & 50.4 & 68.3\\
TPT \cite{Shu:NIPS22} & 69.0 & 47.8 & 87.8 & 66.9 & 68.0 & 94.2 & 84.7 & 65.5 & 24.8 & 42.4 & 69.0\\
DiffTPT \cite{Feng:ICCV23} & 70.1&47.0&88.2&67.0&68.2&92.5&\textbf{87.2}&65.7&25.6&43.1 & 70.3\\
DMN \cite{Zhang:CVPR24} & 75.3 & \underline{54.9} & 91.2 & 67.0 & 72.0 & 93.6 & 84.1 & 69.1 & \underline{28.3} & 56.2 & 70.5\\ 
TDA \cite{Karmanov:CVPR24} & 71.4 & 47.4 & 88.6 & 67.3 & 70.7 & \underline{94.2} & 86.1 & 67.6 & 23.9 & \textbf{58.0} & 69.5\\
ZLaP \cite{Kalantidis:CVPR24} & 73.5 & 48.6 & 87.1 & 65.6 & 71.5 & 93.1 & \underline{86.9} & 67.4 & 25.4 & 55.6&70.2\\
ECALP \cite{Li:ICLR25} & \underline{76.0} & \textbf{56.3} & \underline{92.3} & \underline{68.2} & \textbf{75.4} & \textbf{94.4} & 85.7 & \textbf{70.5} & \textbf{29.5} & \underline{56.5} & \underline{71.3}\\
Our DeepMORSE & \textbf{76.9} & 54.8 & \textbf{93.0} & \textbf{68.7} & \underline{74.1} & 91.9 & 86.2 & \underline{70.2} & 27.1 & 44.4 & \textbf{71.4}\\
\bottomrule
\end{tabular}%
}
\end{table*}

\myparagraph{Compared to Multi-view Clustering}
To thoroughly evaluate the performance of our DeepMORSE in the context of multi-view clustering, we conduct the following two sets of experiments. 
First, we treat the CLIP image and text embeddings as two views, then apply state-of-the-art multi-view clustering methods, SURE~\cite{Yang:TPAMI22}, GCFAgg~\cite{Yan:CVPR23}, DIVIDE~\cite{Lu:AAAI24}, CANDY~\cite{Guo:NeurIPS24-CANDY}, and COPER~\cite{Eisenberg:ICLR25-coper}, to our multimodal data. 
For a fair comparison, we keep network architectures the same and repeat all experiments with 5 random seeds, reporting the average %
performance in Table~\ref{tab:multi-view}. 
Second, we evaluate our DeepMORSE directly on several multi-view clustering datasets, including 
Scene15~\cite{Fei:CVPR05}, LandUse21~\cite{Yang:AGIS10}, NUS-WIDE~\cite{Hu:RDIR19}, where we treat the multiple views as separate modalities within our framework. The experimental results are reported in Table~\ref{tab:multi-view}. 

Across all datasets, we observe that our DeepMORSE is highly competitive with the multi-view baselines, implying that modeling modality-specific UoS structures with modality-shared relationship, as done in our DeepMORSE, provides a more effective way to leverage both multimodal and multi-view information.

\myparagraph{Extending Beyond Vision-Language Data}
While our current experiments focus on vision-language data, the DeepMORSE framework itself is modality-agnostic.
Specifically, our DeepMORSE only requires that each modality be mapped into a shared semantic feature space (as in CLIP). Given multimodal input features, we can learn a transformation for each modality (\eg, image, text, acoustics, hyperspectral) via separate encoders, and construct a modality-shared self-expressive matrix without any architectural changes. In this sense, our DeepMORSE can be straightforwardly extended to additional modalities once suitable multimodal features are available.
To demonstrate this, we additionally conduct experiments with Wav2CLIP~\cite{Wu:ICASSP22-Wav2CLIP}, which provides a joint image-text-audio embedding space. Concretely, we first generate audio counterparts for each image via multimodal sparse coding (as in Sec. \ref{Sec:Implementations}) from the audio features of the FSD50K dataset~\cite{Fonseca:TASLP22-FSD50K}, which contains 51,197 Freesound clips across 200 classes. 
For training our DeepMORSE, we transform the audio data $\boldsymbol{u}$ into representations with another learnable mapping $\boldsymbol{z}_\text{audio}=\boldsymbol{h}(\boldsymbol{u};\boldsymbol{\phi}_{\boldsymbol{u}})$ followed by normalization.
We then construct a modality-mixed representation:  $\vz_\text{mix}=0.5(\vz_\text{img}+\vz_\text{audio})$ for image-audio; and $\vz_\text{mix}=0.45\times\vz_\text{img}+0.45\times\vz_\text{text} + 0.1\times\vz_\text{audio}$ for image-text-audio.
The modality-shared self-expressive matrix is still computed by Eq. (\ref{eq:compute_C}), and we add the audio loss term $\gamma\|\boldsymbol{Z}_\text{audio}-\boldsymbol{Z}_\text{audio}\boldsymbol{C}\|_F^2-\rho(\boldsymbol{Z}_\text{audio})$ into the overall objective.
The results are shown in Table~\ref{tab:trimodal}. 
We observe a notable performance drop for the Image$+$audio combination, which is consistent with the findings in AudioCLIP~\cite{Guzhov:ICASSP22-AudioCLIP}, where ImageNet top-1 accuracy drops from 40.5\% to 21.8\%.
This degradation is largely due to the fact that existing tri-modal encoders are trained on much smaller audio corpora (\eg, AudioSet with ~2M clips) compared to the large-scale image-text data used for CLIP (400M pairs), resulting in weaker image representations.

For the Image+text+audio combination, the performance on CIFAR-10, CIFAR-20, and ImageNet-Dogs slightly improves or remains on par with the Image$+$text setting, demonstrating that DeepMORSE can incorporate an additional audio modality without harming performance and sometimes with small gains.
For DTD-47 and UCF-101, audio is unsurprisingly less helpful: textures (\eg, striped, grid) and many actions (\eg, applying eye makeup, playing yo-yo) are essentially silent.

\myparagraph{DeepMORSE for Image Retrieval}
Image retrieval is a unimodal downstream task that aims to retrieve images belonging to the same category. 
Here, we directly perform the retrieval based on the cosine similarity of the image representations learned by our DeepMORSE.
As shown in Table~\ref{tab:image-retrieval}, our DeepMORSE achieves state-of-the-art performance on image retrieval without any task-specific design. 
Prior work~\cite{Mistretta:ICLR25} has shown that intra-modal similarity scores from CLIP models do not always correspond faithfully to semantic similarities among images. 
In contrast, the representations learned by our DeepMORSE mitigate this issue by aligning with structures shared across modalities.

\myparagraph{DeepMORSE for Zero-Shot Classification}
Zero-shot classification is a multimodal downstream task that assigns each image to a class based on the similarity between its representation and the text representations of the given labels.
Since that the categories of the datasets are known, we construct the dictionary $\mathcal{D}=\{\boldsymbol{d}_1,\cdots,\boldsymbol{d}_{C}\}$ for cross-modal sparse coding using embeddings of the prompts ``A photo of [class name]'' extracted by a pretrained CLIP text encoder, where $C$ is the number of classes.
After training DeepMORSE, each image $\vx$ is classified according to the similarity between its learned representation and the class representations, \ie, 
    $\mathop{\arg\max}_{j\in\{1,\cdots,C\}}\quad \frac{f(\vx)^\top g(\boldsymbol{\pi}_j)}{\|f(\vx)\|_2\|g(\boldsymbol{\pi}_j)\|_2}$.
The results are reported in Table~\ref{tab:zero-shot}, where the baseline results are directly cited from~\cite{Li:ICLR25}.
As can be read, %
our DeepMORSE achieves competitive performance and surpasses existing methods on datasets Flowers102, OxfordPets, and StanfordCars---without any task-specific design for zero-shot classification.

\section{Conclusion}
\label{Sec:Conclusion}

We have presented a simple but principled approach for deep image clustering assisted with textual information, called DeepMORSE, which jointly learns representations that conform to a union of modality-specific subspaces and discovers shared self-expressive coefficients across modalities.
Moreover, %
we have theoretically justified %
that the modality-shared self-expressive coefficients under certain conditions suppress the inter-class noise compared to modality-specific solutions, leading to a provably cleaner affinity for clustering. 
In addition, we have demonstrated that the SGD training in mini-batch mode intrinsically introduces an implicit %
regularization on the coefficient matrix, %
with no need to incorporate an explicit regularization term. %
Empirically, we have shown that DeepMORSE achieves state-of-the-art clustering performance on six widely used benchmarks, and the structured representations learned by DeepMORSE transfer effectively to downstream tasks including image retrieval and zero-shot classification, achieving competitive results without any task-specific losses or post-processing.
In future work, it would be interesting to extend DeepMORSE to scenarios with naturally available text, to larger-scale problems, and to downstream tasks such as few-shot learning and open-world recognition.

\section*{Acknowledgments}
This work is supported by the National Natural Science Foundation of China under Grant 62576048.

\bibliographystyle{IEEEtran}
\bibliography{./biblio/xhmeng,./biblio/temporal,./biblio/yc,./biblio/cgli,./biblio/zhjj,./biblio/learning,./biblio/temp}

\onecolumn
\setcounter{page}{1}
\setcounter{section}{0}

\renewcommand{\thesection}{\Alph{section}}

\setcounter{figure}{0}
\setcounter{table}{0}
\setcounter{proposition}{0}

\renewcommand{\thefigure}{\Alph{section}.\arabic{figure}}
\renewcommand{\thetable}{\Alph{section}.\arabic{table}}
\begin{sc}
\begin{center}
    \Large Supplementary Material for ``Learning Deep Modality-Shared Self-Expressiveness for Image Clustering with Textual Information''
\end{center}
\end{sc}

\section{Proofs of Theoretical Findings}
\label{sec:supp_theory}

\subsection{Proof for Proportion 1}

\begin{proof}

Since that problem (\ref{eq:subprob}) is convex with respect to $\vc^j$, we get its closed-form solution by the first-order condition:
\begin{align}
    \nabla_{\vc^j}\mathcal{L} &= \Big((\mZ_\text{img}^{-j})^\top \mZ_\text{img}^{-j} + (\mZ_\text{text}^{-j})^\top \mZ_\text{text}^{-j}\Big)\vc^j-(\mZ_\text{img}^{-j})^\top\vz_\text{img}^j
             -(\mZ_\text{text}^{-j})^\top\vz_\text{text}^j=\boldsymbol{0},\\
    \Rightarrow\vc_\text{share}^j&= (\mG_\text{img}+\mG_\text{text})^{-1}
      \left[(\mZ_\text{img}^{-j})^\top\vz_\text{img}^j
             +(\mZ_\text{text}^{-j})^\top\vz_\text{text}^j\right],     
\end{align}
where $\mG_\text{img} \coloneqq (\mZ_\text{img}^{-j})^\top \mZ_\text{img}^{-j}$
and $\mG_\text{text} \coloneqq (\mZ_\text{text}^{-j})^\top \mZ_\text{text}^{-j}$ are Gram matrices.
Similarly, the modality-specific optimal solutions of problem (\ref{eq:subprob}) are:
\begin{equation}
    \vc_\text{img}^j \coloneqq \mG_\text{img}^{-1}(\mZ_\text{img}^{-j})^\top\vz_\text{img}^j
    \quad \text{and}\quad
    \vc_\text{text}^j \coloneqq \mG_\text{text}^{-1}(\mZ_\text{text}^{-j})^\top\vz_\text{text}^j,
\end{equation}
respectively.

\myparagraph{Proof of Statement 1}\quad
Leveraging eq. (\ref{eq:c_res}), the estimation errors are
\begin{align}
    \vc_\text{img}^j   - \vc_\star^j
        &= \mG_\text{img}^{-1}(\mZ_\text{img}^{-j})^\top(\mZ_\text{img}^{-j}\vc_\star^j+\vdelta_\text{img}^j)-\vc_\star^j=\mG_\text{img}^{-1}(\mZ_\text{img}^{-j})^\top\vdelta_\text{img}^j,   \\
    \vc_\text{text}^j  - \vc_\star^j
        &=\mG_\text{text}^{-1}(\mZ_\text{text}^{-j})^\top(\mZ_\text{text}^{-j}\vc_\star^j+\vdelta_\text{text}^j)-\vc_\star^j= \mG_\text{text}^{-1}(\mZ_\text{text}^{-j})^\top\vdelta_\text{text}^j, \\
    \vc_\text{share}^j - \vc_\star^j
        &= (\mG_\text{img}+\mG_\text{text})^{-1}
           \!\left[(\mZ_\text{img}^{-j})^\top\vdelta_\text{img}^j
                  +(\mZ_\text{text}^{-j})^\top\vdelta_\text{text}^j\right].
\end{align}
Then, we have:
\begin{align}
    &\quad\quad\mathbb{E}\left[\|\vc_\text{share}^j-\vc_\star^j\|_2^2\right]\\
    &= \mathbb{E}\left[\Tr\left(\left(\vc_\text{share}^j - \vc_\star^j\right)\left(\vc_\text{share}^j - \vc_\star^j\right)^\top\right)\right]\\
    &= \Tr\left(\left(\mG_\text{img}+\mG_\text{text}\right)^{-1}\left(\left(\mZ_\text{img}^{-j}\right)^\top\mathbb{E}\left[\vdelta_\text{img}^j\vdelta_\text{img}^{j\top}\right]\mZ_\text{img}^{-j}+\left(\mZ_\text{text}^{-j}\right)^\top\mathbb{E}\left[\vdelta_\text{text}^j\vdelta_\text{text}^{j\top}\right]\mZ_\text{text}^{-j}\right)\left(\mG_\text{img}+\mG_\text{text}\right)^{-1}\right)\\
    &=\Tr\left(\left(\mG_\text{img}+\mG_\text{text}\right)^{-1}\left(\sigma^2\mG_\text{img}+\sigma^2\mG_\text{text}\right)\left(\mG_\text{img}+\mG_\text{text}\right)^{-1}\right)\\
    &=\sigma^2\Tr\left((\mG_\text{img}+\mG_\text{text})^{-1}\right).
\end{align}
Similarly, we have
\begin{equation}
    \mathbb{E}\!\left[\|\vc_\text{img}^j-\vc_\star^j\|_2^2\right]
    = \sigma^2\operatorname{Tr}(\mG_\text{img}^{-1}),
    \quad \text{and} \quad
    \mathbb{E}\!\left[\|\vc_\text{text}^j-\vc_\star^j\|_2^2\right]
    = \sigma^2\operatorname{Tr}(\mG_\text{text}^{-1}).
\end{equation}
Since $\mG_\text{text}\succeq\mathbf{0}$, we have
$\mG_\text{img}\preceq\mG_\text{img}+\mG_\text{text}$, therefore,
\begin{equation}
    (\mG_\text{img}+\mG_\text{text})^{-1}\preceq\mG_\text{img}^{-1}.
\end{equation}
Taking the trace of both sides and
multiplying by $\sigma^2$ yields
\begin{equation}
    \mathbb{E}\left[\left\|\vc_\text{share}^j-\vc_\star^j\right\|_2^2\right] \leq\mathbb{E}\left[\left\|\vc_\text{img}^j-\vc_\star^j\right\|_2^2\right].
\end{equation}
Similarly, %
we have that $\mathbb{E}\left[\left\|\vc_\text{share}^j-\vc_\star^j\right\|_2^2\right] \leq\mathbb{E}\left[\left\|\vc_\text{img}^j-\vc_\star^j\right\|_2^2\right]$. 
Thus we conclude the proof
\begin{equation}
    \mathbb{E}\Big[\|\vc_\text{share}^j-\vc_\star^j\|_2^2\Big]\leq\mathop{\min}\Big(\mathbb{E}\Big[\|\vc_\text{img}^j-\vc_\star^j\|_2^2\Big],\mathbb{E}\Big[\|\vc_\text{text}^j-\vc_\star^j\|_2^2\Big]\Big).
\end{equation}

\myparagraph{Proof of Statement 2}\quad
Let $\mathcal{I}_+^j, \mathcal{I}_-^j\subset\mathbb{N}_{\geq0}$ be the index sets of samples belonging to the same class as the $j$-th data point and belonging to a different class from the $j$-th data point, respectively. 
Denote $\Diag(\mathbbm{1}_{\mathcal{I}_+^j})$ as the diagonal matrix whose $i$-th diagonal entry is $1$ if $i\in\mathcal{I}_+^j$ and $0$ otherwise.

Since $\vc_\star^j$ is subspace-preserving, we have that 
$\vc_{\star,\mathcal{I}_-^j}^j\coloneqq(\mathbf{I}_N-\Diag(\mathbbm{1}_{\mathcal{I}_+^j}))\vc_{\star}^j=\Diag(\mathbbm{1}_{\mathcal{I}_-^j})\vc_{\star}^j=\mathbf{0}$. 
Thus, 
\begin{align}
\begin{aligned}
\mathbb{E}\left[\left\|\vc_{\text{share},\mathcal{I}_-^j}^j\right\|_2^2\right]&=\mathbb{E}\left[\left\|\Diag(\mathbbm{1}_{\mathcal{I}_-^j})\vc_{\text{share}}^j\right\|_2^2\right]\\
&=\mathbb{E}\left[\left\|\Diag(\mathbbm{1}_{\mathcal{I}_-^j})\vc_{\text{share}}^j-\Diag(\mathbbm{1}_{\mathcal{I}_-^j})\vc_{\star}^j\right\|_2^2\right]\\
&=\mathbb{E}\left[\left\|\Diag(\mathbbm{1}_{\mathcal{I}_-^j})(\vc_{\text{share}}^j-\vc_{\star}^j)\right\|_2^2\right]\\
&=\Tr\left(\Diag\left(\mathbbm{1}_{\mathcal{I}_-^j}\right)\mathbb{E}\left[\left(\vc_{\text{share}}^j-\vc_{\star}^j\right)\left(\vc_{\text{share}}^j-\vc_{\star}^j\right)^\top\right]\Diag\left(\mathbbm{1}_{\mathcal{I}_-^j}\right)\right)\\
&=\sigma^2\Tr\left(\Diag\left(\mathbbm{1}_{\mathcal{I}_-^j}\right)\left(\mG_\text{img}+\mG_\text{text}\right)^{-1}\Diag\left(\mathbbm{1}_{\mathcal{I}_-^j}\right)\right)\\
&=\sigma^2\Tr\left(\left(\mG_\text{img}+\mG_\text{text}\right)^{-1}_{\mathcal{I}_-^j,\mathcal{I}_-^j}\right),
\end{aligned}
\end{align}
and similarly we have: 
\begin{equation}
    \mathbb{E}\left[\left\|\vc_{\text{img},\mathcal{I}_-^j}^j\right\|_2^2\right]=
    \sigma^2\Tr\left((\mG_{\text{img}}^{-1})_{\mathcal{I}_-^j,\mathcal{I}_-^j}\right),\quad\quad\mathbb{E}\left[\left\|\vc_{\text{text},\mathcal{I}_-^j}^j\right\|_2^2\right]=\sigma^2\Tr\left((\mG_{\text{text}}^{-1})_{\mathcal{I}_-^j,\mathcal{I}_-^j}\right),
\end{equation}
where $[\cdot]^{-1}_{\mathcal{I}_-^j,\mathcal{I}_-^j}$ denotes the submatrix of $[\cdot]^{-1}$ where rows and columns are indexed by $\mathcal{I}_-^j$. 

Note that if $\mA\preceq\mB$, we have $\mA_{\mathcal{S},\mathcal{S}}\preceq\mB_{\mathcal{S},\mathcal{S}}$, since that for $\forall \vv \in \mathbb{R}^{|\mathcal{S}|}$, %
we have
\begin{equation}
    \vv^\top\mA_{\mathcal{S},\mathcal{S}}\vv = \tilde{\vv}^\top\mA\tilde{\vv}\leq\tilde{\vv}^\top\mB\tilde{\vv} = \vv^\top\mB_{\mathcal{S},\mathcal{S}}\vv,
\end{equation}
where $\tilde{\vv}\in\mathbb{R}^N$ is constructed by zero-padding: $\tilde{\vv}_i = \vv_{i}$ if $i\in\mathcal{S}$ and $\tilde{\vv}_i = 0$ otherwise.

Therefore, we conclude that
\begin{equation}
\mathbb{E}\Big[\|\vc_{\text{share},\mathcal{I}_-^j}^j\|_2^2\Big]\leq\mathop{\min}\Big(\mathbb{E}\Big[\|\vc_{\text{img},\mathcal{I}_-^j}^j\|_2^2\Big],\mathbb{E}\Big[\|\vc_{\text{text},\mathcal{I}_-^j}^j\|_2^2\Big]\Big),
\end{equation}
which completes the proof.
\end{proof}

\setcounter{proposition}{2}

\subsection{Proof for Proportion 3}

\begin{proof}
We have:
\begin{align}
\begin{aligned}
    \tilde{\mathcal{L}} \coloneqq & \|\mZ_{\text{img}}\Diag(\boldsymbol{\xi}) - \mZ_{\text{img}}\Diag(\boldsymbol{\xi})\mC\Diag(\boldsymbol{\xi})\|_F^2 + \|\mZ_{\text{text}}\Diag(\boldsymbol{\xi}) - \mZ_{\text{text}}\Diag(\boldsymbol{\xi})\mC\Diag(\boldsymbol{\xi})\|_F^2\\
    &=\sum_{j=1}^N\|\xi_j\vz_\text{img}^j-\xi_j[\mZ_\text{img}\Diag(\boldsymbol{\xi})\mC]_j\|_2^2+\|\xi_j\vz_\text{text}^j-\xi_j[\mZ_\text{text}\Diag(\boldsymbol{\xi})\mC]_j\|_2^2\\
    &=\sum_{j=1}^N\xi_j\Big(\|\vz_\text{img}^j-\sum_{i=1}^N\xi_ic_{ij}\vz_\text{img}^i\|_2^2+\|\vz_\text{text}^j-\sum_{i=1}^N\xi_ic_{ij}\vz_\text{text}^i\|_2^2\Big).
\end{aligned}
\end{align}
By taking an expectation to the objective function, we have: 
\begin{align}
\begin{aligned}
    \mathbb{E}[\tilde{\mathcal{L}}]=\sum_{j=1}^N\mathbb{E}[\xi_j]\Big(\mathbb{E}\Big[\|\vz_\text{img}^j-\sum_{i=1}^N\xi_ic_{ij}\vz_\text{img}^i\|_2^2\Big]+\mathbb{E}\Big[\|\vz_\text{text}^j-\sum_{i=1}^N\xi_ic_{ij}\vz_\text{text}^i\|_2^2\Big]\Big).
\end{aligned}
\end{align}
The first and second moments of the random variable $\xi$ are given by $\mathbb{E}[\xi_i]=\frac{n_b}{N}$, $\mathbb{E}[\xi_i\xi_j]=\mathbb{E}[\xi_i]\mathbb{E}[\xi_j]=\frac{n_b^2}{N^2}$, $\mathbb{E}[{\xi_i^2}]=\frac{n_b}{N}$.
Then, we have:
\begin{align}
\begin{aligned}
    \mathbb{E}\left[\left\|\vz_\text{img}^j-\sum_i\xi_ic_{ij}\vz_{\text{img}}^i\right\|_2^2\right]&=\|\vz_\text{img}^j\|_2^2-2\mathbb{E}\sum_i\xi_ic_{ij}\vz_\text{img}^{j\top}\vz_\text{img}^i+\mathbb{E}\left[\left\|\sum_i\xi_ic_{ij}\vz_\text{img}^i\right\|_2^2\right]\\
    &=\|\vz_\text{img}^j\|_2^2-2\sum_i\mathbb{E}\left[\xi_i\right]c_{ij}\vz_\text{img}^{j\top}\vz_\text{img}^i+\sum_i\mathbb{E}\|\xi_ic_{ij}\vz_\text{img}^i\|_2^2+2\mathbb{E}\sum_{i,k:i\neq k}\xi_i\xi_k c_{ij}c_{kj}\vz_\text{img}^{i\top}\vz_\text{img}^k\\
    &=\|\vz_\text{img}^j\|_2^2-2\sum_i\frac{n_b}{N}c_{ij}\vz_\text{img}^{j\top}\vz_\text{img}^i+\sum_i\mathbb{E}[\xi_i^2]\|c_{ij}\vz_\text{img}^i\|_2^2+2\sum_{i,k:i\neq k}\mathbb{E}[\xi_i\xi_k] c_{ij}c_{kj}\vz_\text{img}^{i\top}\vz_\text{img}^k\\
    &=\|\vz_\text{img}^j\|_2^2-2\sum_i\frac{n_b}{N}c_{ij}\vz_\text{img}^{j\top}\vz_\text{img}^i+\sum_i\frac{n_b}{N}\|c_{ij}\vz_\text{img}^i\|_2^2+2\sum_{i,k:i\neq k}\frac{n_b^2}{N^2} c_{ij}c_{kj}\vz_\text{img}^{i\top}\vz_\text{img}^k\\
    &=\|\vz_\text{img}^j\|_2^2-2\sum_i\frac{n_b}{N}c_{ij}\vz_\text{img}^{j\top}\vz_\text{img}^i+\left\|\sum_i\frac{n_b}{N}c_{ij}\vz_\text{img}^i\right\|_2^2+(\frac{n_b}{N}-\frac{n_b^2}{N^2})\sum_i\|c_{ij}\vz_\text{img}^i\|_2^2\\
    &=\left\|\vz_\text{img}^j-\sum_i\frac{n_b}{N}c_{ij}\vz_\text{img}^i\right\|_2^2+(\frac{n_b}{N}-\frac{n_b^2}{N^2})\|\vc_j\|_2^2.
\end{aligned}
\end{align}
Let $\tilde\vc_j=\frac{n_b}{N}\vc_j$, we have:
\begin{align}
\begin{aligned}
    \mathbb{E}\left[\left\|\vz_\text{img}^j-\sum_i\xi_ic_{ij}\vz_{\text{img}}^i\right\|_2^2\right]&=\left\|\vz_\text{img}^j-\mZ_\text{img}\tilde\vc_j\right\|_2^2+\frac{N-n_b}{n_b}\|\tilde\vc_j\|_2^2.
\end{aligned}
\end{align}
Thus, the expectation of the objective function is equal to:
\begin{align}
\begin{aligned}
    \mathbb{E}[\tilde{\mathcal{L}}]&=\frac{n_b}{N}\sum_{j=1}^N\Big(\left\|\vz_\text{img}^j-\mZ_\text{img}\tilde\vc_j\right\|_2^2+\left\|\vz_\text{text}^j-\mZ_\text{text}\tilde\vc_j\right\|_2^2+\frac{2(N-n_b)}{n_b}\|\tilde\vc_j\|_2^2\Big)\\
    &=\frac{n_b}{N}\Big(\left\|\mZ_\text{img}-\mZ_\text{img}\tilde\mC\right\|_F^2+\left\|\mZ_\text{text}-\mZ_\text{text}\tilde\mC\right\|_F^2+\frac{2(N-n_b)}{n_b}\|\tilde\mC\|_F^2\Big).
\end{aligned}
\end{align}
Therefore, we conclude that optimizing the loss function (\ref{eq:loss_func}) with mini-batch training asymptotically introduces an implicit regularization $r(\mC)=\frac{2(N-n_b)}{n_b}\|\mC\|_F^2$.

\end{proof}

\section{Experimental Details}
\label{sec:Experimental_Details}

\myparagraph{Construction of Dictionary $\mathcal{D}$}
Since WordNet contains 146,347 words with heterogeneous semantic meanings, we follow TAC~\cite{Li:ICML24-TAC} to filter the vocabulary before textual counterpart generation.
Specifically, after extracting image and text embeddings using a pretrained vision-language model, we apply $k$-means clustering to the image embeddings to obtain $k$ cluster centers.
For each cluster center, the top-$\omega$ most similar word embeddings are then selected to construct the dictionary for cross-modal sparse coding.
Following TAC~\cite{Li:ICML24-TAC}, we fix $k=N/300$ and $\omega=5$ for all datasets.

\myparagraph{Image Clustering}
For a fair comparison, we follow the preprocessing steps of TAC~\cite{Li:ICML24-TAC}.
Specifically, images are first resized to 224 pixels on the shorter side (256 for ImageNet-10 and ImageNet-Dogs), then center-cropped to $224\times224$, and finally fed into the CLIP ViT-B/32 pretrained image encoder to obtain image embeddings.
The construction of text embedding dictionary $\mathcal{D}$ also follows TAC~\cite{Li:ICML24-TAC}.
With the query image embeddings and the text embedding dictionary $\mathcal{D}$, textual counterparts are generated via cross-modal sparse coding.

We summarize the hyperparameters used to train DeepMORSE in Table~\ref{table:hyperparameters}.
The model is trained with stochastic gradient descent (SGD) using a learning rate of $\eta=1\times10^{-4}$, weight decay $wd=1\times10^{-4}$, and batch size $n_b=1024$. The hidden dimension $d_\text{hidden}=512$ and output dimension $d=128$ of $\f(~\cdot~;\phi_I)$ and $\g(~\cdot~;\phi_T)$ are fixed across all datasets. Similarly, the sparsity level of sparse coding is set to $s=5$, and the loss hyperparameters are fixed to $\gamma=150$ and $\epsilon^2=0.1$ for all datasets.

\begin{table*}[h]
\caption{\textbf{Hyperparameters configuration for training DeepMORSE for image clustering.}}
\label{table:hyperparameters}
\begin{center}
\resizebox{0.7\linewidth}{!}{
    \begin{tabular}{lccccccccc}
    \toprule
         \rowcolor{myGray} & $\eta$ & $wd$ & $d_\text{hidden}$&  $d$&  $T$& $n_b$ &$\gamma$ & $\epsilon^2$ & $s$\\
         \midrule
         CIFAR-10& $1\times 10^{-4}$ & $1\times 10^{-4}$ & 512&  128&  50& 1024& 150 & 0.1 & 5\\
         CIFAR-20& $1\times 10^{-4}$ & $1\times 10^{-4}$& 512&  128&  50& 1024& 150& 0.1 & 5\\
         STL-10& $1\times 10^{-4}$ & $1\times 10^{-4}$& 512&  128&  50& 1024& 150& 0.1 & 5\\
         ImageNet-10& $1\times 10^{-4}$& $1\times 10^{-4}$ & 512&  128&  50& 1024& 150& 0.1 & 5\\
         ImageNet-Dogs& $1\times 10^{-4}$ & $1\times 10^{-4}$& 512&  128&  50& 1024& 150& 0.1 & 5\\
         DTD-47& $1\times 10^{-4}$ & $1\times 10^{-4}$& 512&  128&  100& 1024& 150& 0.1 & 5\\
         UCF-101& $1\times 10^{-4}$ & $1\times 10^{-4}$& 512&  128&  100& 1024& 150& 0.1 & 5\\
         ImageNet-1k& $1\times 10^{-4}$ & $1\times 10^{-4}$& 8192&  1024&  500& 8192& 600& 0.1 & 5\\
          \bottomrule
    \end{tabular}
    }
\end{center}

\end{table*}

\myparagraph{Image Retrieval}
For a fair comparison, we directly use the image embeddings provided by OTI~\cite{Mistretta:ICLR25}, which were extracted using the pretrained CLIP ViT-B/32 model.
To generate the textual counterpart of each image embedding, we first construct the text embedding dictionary as described in Section~\ref{Sec:Experiments}, and then solve the cross-modal sparse coding problem with MP.
After training DeepMORSE, image retrieval is performed based on the cosine similarity among the learned image representations.

We summarize the hyperparameters for training DeepMORSE on the image retrieval task in Table~\ref{table:hyperparameters-retrieval-zs}, where most hyperparameters remain unchanged, except for extended epochs and larger $d,\gamma$ for datasets with greater than 128 classes.
Specifically, since the benchmark datasets are relatively small in size (for example, OxfordPets, Flowers102, FGVCAircraft, DTD, and Caltech101 each contain fewer than 5,000 training samples), we extend the training epochs to ensure sufficient iterations.
For StanfordCars and SUN397, which include more than 128 categories, we increase the hidden and output dimensions of the model and adjust $\gamma$ accordingly to balance the different loss terms.

\begin{table*}[h]
\caption{\textbf{Hyperparameters configuration for training DeepMORSE for image retrieval and zero-shot classification.}}
\label{table:hyperparameters-retrieval-zs}
\begin{center}
\resizebox{0.7\linewidth}{!}{
    \begin{tabular}{lccccccccc}
    \toprule
         \rowcolor{myGray} & $\eta$ & $wd$ & $d_\text{hidden}$&  $d$&  $T$& $n_b$ &$\gamma$ & $\epsilon^2$ & $s$\\
         \midrule
         Flower& $1\times 10^{-4}$ & $1\times 10^{-4}$ & 512&  128&  200& 1024& 150 & 0.1 & 5\\
         DTD& $1\times 10^{-4}$ & $1\times 10^{-4}$& 512&  128&  200& 1024& 150& 0.1 & 5\\
         Pets& $1\times 10^{-4}$ & $1\times 10^{-4}$& 512&  128&  200& 1024& 150& 0.1 & 5\\
         UCF& $1\times 10^{-4}$ & $1\times 10^{-4}$& 512&  128&  200& 1024& 150& 0.1 & 5\\
         Caltech& $1\times 10^{-4}$ & $1\times 10^{-4}$& 512&  128&  200& 1024& 150& 0.1 & 5\\
         Food& $1\times 10^{-4}$ & $1\times 10^{-4}$& 512&  128&  200& 1024& 150& 0.1 & 5\\
         Aircraft& $1\times 10^{-4}$ & $1\times 10^{-4}$& 512&  128&  200& 1024& 150& 0.1 & 5\\
         EuroSAT& $1\times 10^{-4}$ & $1\times 10^{-4}$& 512&  128&  200& 1024& 150& 0.1 & 5\\
         Cars& $1\times 10^{-4}$& $1\times 10^{-4}$ & 512&  256&  1000& 1024& 200& 0.1 & 5\\
         SUN& $1\times 10^{-4}$ & $1\times 10^{-4}$& 2048&  512&  2000& 1024& 350& 0.1 & 5\\
         ImageNet-1k& $1\times 10^{-4}$ & $1\times 10^{-4}$& 8192&  1024&  4000& 8192& 600& 0.1 & 5\\
          \bottomrule
    \end{tabular}
    }
\end{center}

\end{table*}

{
Here, we present a simple and principled rule for tuning hyperparameters:
\begin{enumerate}
    \item Adjusting the output dimension $d$.
    DeepMORSE encourages representations from different classes to occupy orthogonal subspaces. Thus, the output dimension must be greater than the number of classes to accommodate these subspaces.
    \item Adjusting the balancing hyperparameter $\gamma$.
    As justified in \cite{Meng:ICLR25}, the upper bound of $\gamma$ scales linearly with $\alpha=d/(n_b\epsilon^2)$, where $d$ is the output dimension, $n_b$ is the batch-size, and $\epsilon$ is the coding precision. Since that DeepMORSE uses the same $n_b$ and $\epsilon$ for all experiments, $\gamma$ scales linearly with the output dimension $d$.
    Empirically, our settings for $\gamma$ and $d$ approximately follow $\gamma\approx0.5 \times d + 80$, which provides a simple rule for other datasets without manual search.
\end{enumerate}
}

\myparagraph{Zero-Shot Classification}
For a fair comparison with existing approaches, we directly use the image embeddings provided by ECALP~\cite{Li:ICLR25}, which are extracted using the pretrained CLIP ViT-B/16 model.
Since the dataset categories are known in zero-shot classification, we construct the text embedding dictionary $\mathcal{D}=\{\boldsymbol{\pi}_1,\cdots,\boldsymbol{\pi}_{C}\}$ by extracting text embeddings of the prompts ``A photo of {class}'' with a pretrained CLIP text encoder, where $C$ denotes the number of classes.
The textual counterpart of each image embedding is then generated using MP.
For training DeepMORSE, the hyperparameter configuration is kept exactly the same as in the image retrieval task.

{
\myparagraph{Reproducing PRO-DSC}
The original PRO-DSC~\cite{Meng:ICLR25} uses ViT-L/14 as the image encoder, while all our CLIP-based baselines in Table~\ref{tab:tab1} use ViT-B/32. To ensure a fair comparison, we reproduce its performance with ViT-B/32 using the code provided by the author and report the best clustering results after tuning hyperparameters over $\gamma\in\{0.5,1,5,10,50,100,150,300,500\}$ and $\beta\in\{0,100,200,300,400\}$.

\myparagraph{Ablation Study}
The intention of the first two rows of Table~\ref{tab:ablation} is to analyze uni-modal counterparts of DeepMORSE.
The first row is the image-only counterpart of DeepMORSE: we keep only the loss terms for the image modality, compute $\mC$ as $\mathcal{P}(\mZ_\text{img}^\top\mZ_\text{img})$ with the learned image representations, and perform spectral clustering on the affinity matrix induced by $\mA=|\mC|+|\mC^\top|$.
The second row is the text-only counterpart of DeepMORSE: we keep only the loss terms for the text modality, compute $\mC$ as $\mathcal{P}(\mZ_\text{text}^\top\mZ_\text{text})$ with the learned text representations, and perform spectral clustering on the affinity matrix induced by $\mA=|\mC|+|\mC^\top|$.
We report the best clustering results after tuning hyperparameters over $\gamma\in\{0.5,1,5,10,50,100,150,300,500\}$.
}

\end{document}